\documentclass{article}


\usepackage[preprint]{neurips_2025}




\usepackage[utf8]{inputenc} 
\usepackage[T1]{fontenc}    
\usepackage[colorlinks=true,allcolors=perfblue]{hyperref}
\usepackage{url}            
\usepackage{booktabs}       
\usepackage{amsfonts}       
\usepackage{algorithm}
\usepackage{algorithmic}
\usepackage{nicefrac}       
\usepackage{microtype}      
\usepackage{xcolor}         
\usepackage{amsmath}
\usepackage{amssymb}
\usepackage{mathtools}
\usepackage{amsthm}
\usepackage{bbm}
\usepackage{comment}
\usepackage[capitalise]{cleveref}
\usepackage{enumitem}
\usepackage{multirow}
\usepackage{wrapfig}
\usepackage[most]{tcolorbox}
\usepackage{subcaption}
\usepackage{setspace}
\usepackage[framemethod=tikz]{mdframed}
\usepackage{changepage}

\newtheorem{definition}{Definition}
\newtheorem{assumption}{Assumption}
\newtheorem{theorem}{Theorem}
\newtheorem{lemma}{Lemma}

\newtheorem{corollary}{Corollary}
\crefname{assumption}{Assumption}{Assumptions}

\newcommand{\pis}{\pi^{\star}}
\newcommand{\NR}{\mathsf{NR}}
\newcommand{\Reg}{\mathsf{Reg}}
\newcommand{\hV}{\widehat{V}^{\star}}
\newcommand{\pir}{\pi_{\mathsf{ref}}}
\newcommand{\EE}{\mathbb{E}}
\newcommand{\hl}{\widehat{\ell}}

\newcommand{\DC}{\mathsf{DC}}
\newcommand{\Cr}{C_{l}}
\newcommand{\Vs}{V^{\star}}
\newcommand{\vref}{v_{\mathsf{ref}}}
\newcommand{\hp}{\widehat{p}}
\newcommand{\ps}{p^{\star}}
\newcommand{\Fc}{\mathcal{F}}
\newcommand{\KL}{\mathsf{KL}}
\newcommand{\Xc}{\mathcal{X}}
\newcommand{\Yc}{\mathcal{Y}}
\newcommand{\RR}{\mathbb{R}}
\newcommand{\EXP}{\mathsf{EXP}}
\newcommand{\rebel}{\texttt{REBEL}}

\definecolor{gray}{HTML}{808080}
\gdef\Sepline{%
  \par\noindent\makebox[\linewidth][l]{%
  \hspace*{-\mdflength{innerleftmargin}}%
   \tikz\draw[thick,dashed,gray!60] (0,0) --%
        (\textwidth+\the\mdflength{innerleftmargin}+\the\mdflength{innerrightmargin},0);
  }\par\nobreak}
\definecolor{darkgreen}{RGB}{0, 130, 0}
\definecolor{darkblue}{RGB}{0, 142, 204}
\definecolor{darkorange}{RGB}{229, 116, 57}
\definecolor{perfblue}{RGB}{64, 114, 175}

\title{Accelerating RL for LLM Reasoning with \\ Optimal Advantage Regression}

%

\author{
\textbf{Kianté Brantley$^{1}$\thanks{Authors are listed in alphabetical order of their last names.  \\ Correspondence to $\texttt{mingyuc@bu.edu}$, $\texttt{zg292@cornell.edu}$, $\texttt{wenhao.zhan@princeton.edu}$ } , Mingyu Chen$^2$, Zhaolin Gao$^3$,}\\
\textbf{Jason D. Lee$^4$, Wen Sun$^3$, Wenhao Zhan$^4$, Xuezhou Zhang$^2$} \\
$^1$Harvard University, $^2$Boston University, $^3$Cornell University, $^4$Princeton University
}


\ifdefined\usebigfont

\usepackage{times}
\usepackage[fontsize=13pt]{scrextend}
\makeatletter
\@ifpackageloaded{geometry}{\AtBeginDocument{\newgeometry{letterpaper,left=1.56in,right=1.56in,top=1.71in,bottom=1.77in}}}{\usepackage[letterpaper,left=1.56in,right=1.56in,top=1.71in,bottom=1.77in]{geometry}}
\AtBeginDocument{\newgeometry{letterpaper,left=1.56in,right=1.56in,top=1.71in,bottom=1.77in}}
\linespread{1}
\usepackage{hyperref} 
\else
\fi

\begin{document}

\maketitle


\begin{abstract}
Reinforcement learning (RL) has emerged as a powerful tool for fine-tuning large language models (LLMs) to improve complex reasoning abilities. However, state-of-the-art policy optimization methods often suffer from high computational overhead and memory consumption, primarily due to the need for multiple generations per prompt and the reliance on critic networks or advantage estimates of the current policy. In this paper, we propose $A^\star$-PO, a novel two-stage policy optimization framework that directly approximates the optimal advantage function and enables efficient training of LLMs for reasoning tasks. In the first stage, we leverage offline sampling from a reference policy to estimate the optimal value function $\Vs$, eliminating the need for costly online value estimation. In the second stage, we perform on-policy updates using a simple least-squares regression loss with only a single generation per prompt. Theoretically, we establish performance guarantees and prove that the KL-regularized RL objective can be optimized without requiring complex exploration strategies. Empirically, $A^\star$-PO achieves competitive performance across a wide range of mathematical reasoning benchmarks, while reducing training time by up to 2$\times$ and peak memory usage by over 30\% compared to PPO, GRPO, and \rebel~\citep{gao2024rebel}. Implementation of $A^\star$-PO can be found at \url{https://github.com/ZhaolinGao/A-PO}.
\end{abstract}

\section{Introduction}
\label{sec:intro}

Recent advances in large language models (LLMs), including OpenAI-o1~\citep{openaio1}, DeepSeek-R1~\citep{deepseekai2025deepseekr1}, and Kimi-1.5~\citep{kimiteam2025kimik15scalingreinforcement}, have demonstrated remarkable reasoning capabilities. These models excel at producing long Chain-of-Thought (CoT)~\citep{wei2023chainofthoughtpromptingelicitsreasoning, deepseekai2025deepseekr1, zeng2025simplerlzooinvestigatingtamingzero} responses when tackling complex tasks and exhibit advanced, reflection-like reasoning behaviors~\citep{gandhi2025cognitivebehaviorsenableselfimproving}. A key factor driving these improvements is reinforcement learning (RL) with rule-based rewards derived from ground-truth answers~\citep{lambert2025tulu3pushingfrontiers, deepseekai2025deepseekr1, kimiteam2025kimik15scalingreinforcement}, where models receive binary feedback indicating whether their final answers are correct.

Substantial efforts have been devoted to refining RL algorithms, such as PPO~\citep{schulman2017proximal} and GRPO~\citep{shao2024deepseekmath}, to further improve performance and training stability~\citep{richemond2024offline, wang2024offline, ji2024enhancing, liu2024improving, yu2025dapo, liu2025understandingr1zeroliketrainingcritical}. However, these methods either require explicit critic networks to estimate value functions or advantages, or rely on multiple generations per prompt, leading to substantial computational overhead and memory consumption. These limitations make it challenging to scale to long-context reasoning tasks and larger model sizes. This naturally raises the question: \textbf{\textit{Can we develop simpler and more efficient RL algorithms for long context reasoning?}}

Our answer is $A^\star$-PO, Policy Optimization via Optimal Advantage Regression, a policy optimization algorithm that uses \textbf{\textit{only a single sample}} per prompt during online RL. Instead of relying on an explicit value network or multiple online generations to estimate the advantage of the current policy during training, our approach directly approximates the fixed \textbf{optimal} value function in an offline manner. 
We observe that the value function of the optimal policy of the KL-regularized RL can be expressed as an expectation under the reference policy. Based on this insight, the first stage of the algorithm performs \emph{offline} sampling from the reference policy to estimate the optimal values for all prompts in the training set. This stage is highly parallelizable and can efficiently leverage fast inference libraries without requiring gradient computations. In the second stage, we perform on-policy updates via regressing optimal advantages via least square regression, where the advantages are constructed using the optimal values from the first stage. The on-policy updates only use a single generation per prompt, which drastically reduces both the computational and memory overhead associated with RL training. Thus, algorithmically, \textbf{$A^\star$-PO eliminates heuristics such as clipping and response-wise reward normalization, resulting an extremely simple algorithm}. 

Theoretically, we establish formal performance guarantees for $A^\star$-PO, showing that it achieves near-optimal performance. Notably, our theoretical analysis reveals that, \textbf{without sophisticated exploration, $A^\star$-PO learns a near-optimal policy with polynomial sample complexity, as long as the base model's probability of solving a math question is lower bounded above zero}. Experimentally, we evaluate $A^\star$-PO extensively across a range of reasoning benchmarks, including GSM8K~\citep{cobbe2021training}, MATH~\citep{hendrycks2021measuringmathematicalproblemsolving}, and competition-level datasets such as AMC, AIME, and HMMT. Across multiple model sizes, including Qwen2.5-1.5B, 3B, and 7B, our approach consistently achieves comparable or superior results to strong baselines such as PPO, GRPO, and \rebel{}~\citep{gao2024rebel}, while achieving the lowest KL-divergence to the base model, and reducing training time by up to 2$\times$ and peak memory usage by over 30\%. 

\begin{figure}[t]
  \begin{center}
     \includegraphics[scale=0.43,trim={15 200 20 180},clip]{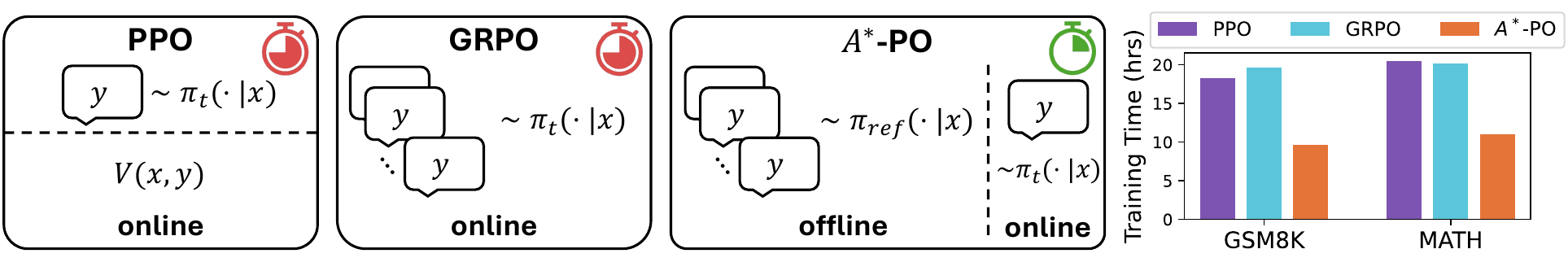}
  \end{center}
    \caption{
    We present $A^\star$-PO, an efficient, regression-based approach for LLM post-training. Prior methods such as GRPO and PPO incur high computational costs, either due to requiring multiple samples per prompt or maintaining an explicit value network. In contrast, $A^\star$-PO simplifies the training process by estimating the optimal value function using offline generations from $\pir$ and requiring only a single response per prompt during online RL. As a result, $A^\star$-PO reduces training time by up to 2$\times$ compared to GRPO and PPO.
    } \label{fig:figure_1}
\end{figure}


\section{Preliminary}
In this work, we denote $x$ as a prompt (e.g., a math question) and $y$ as a generation (e.g., a reasoning chain plus a solution for a given math question).
Consider the following KL-regularized RL objective:
\begin{align*}
    \max_{\pi} \mathbb{E}_{x, y\sim \pi(\cdot | x)} r(x, y) - \beta \KL(\pi(\cdot | x) \vert \pir(\cdot | x)).
\end{align*} We mainly consider the setting where $r$ is binary: $r(x,y) = 1$ if the generation $y$ contains the correct answer and $0$ otherwise. 
It is well known that the optimal policy is $\pi^\star(y|x) \propto \pir(y|x) \exp( r(x,y) / \beta )$. We also denote $V^\star(x)$ as the optimal value function of the KL-regularized RL objective above. It can be shown that $V^\star$ has the following closed-form expression:
\begin{align}
    \forall x: \; V^\star(x) = \beta \ln \textcolor{darkorange}{\boldsymbol{\mathbb{E}_{y\sim \pir(\cdot | x)}}} \left[\exp( r(x,y) / \beta )\right].
    \label{eq:v-star}
\end{align} Note that the expectation in $V^\star$ is under $\pir$, which indicates a simple way of estimating $V^\star$: we can generate multiple i.i.d. responses from $\pir$ and take the average to estimate $V^\star$. We will show that under realistic conditions, the bias and variance of such an estimator are well under control.  
We aim to learn a policy that can maximize the expected reward (e.g., maximize the average accuracy). Particularly, our theoretical results will model the performance gap between our learned policy and $\pi^\star$ -- the optimal policy of the KL-regularized RL objective, in terms of maximizing the expected reward.   \looseness=-1

\section{Algorithm}

The algorithm consists of two stages. The first stage collects data from $\pir$ and estimates $V^\star(x)$ for all $x$ in the training set. Particularly, for every $x$ in the training set, we generate $N$ i.i.d responses $y_1,\dots, y_N$, and estimate $V^\star(x)$ as: $\hV(x) = \beta \ln \left( \sum_{i=1}^N \exp(r(x,y_i) / \beta)/N\right)$.
Note that in general $\hV$ is a biased estimate of $V^\star$ due to the nonlinearity of the $\ln$ function. However, we will show that as long as $\vref(x)$--the probability of $\pir$ generating a correct solution at $x$, is lower bounded above zero, then the bias $|\hV(x)-V^\star(x)|$ can be well controlled and shrinks quickly as $N$ increases. We emphasize that while this stage requires generating $N$ responses per prompt $x$ from $\pir$,  \textbf{it can be done completely offline, the data can be collected efficiently in parallel without any gradient computation, and any off-shelf faster inference library can be used for this stage.} 

The second stage performs the on-policy update. Specifically, at iteration $t$, given the current policy $\pi_t$, we optimize the following loss:
\begin{align*}
\ell_t(\pi) := \textcolor{darkorange}{\boldsymbol{\mathbb{E}_{x,y\sim \pi_t(\cdot | x)}}}\left( \beta\ln \frac{\pi(y|x)}{\pir(y|x)} - \left(r(x,y) - \hV(x) \right)  \right)^2,
\end{align*}
where the expectation is taken under the current policy $\pi_t$ and $r(x,y) - \hV(x)$ approximates the optimal advantage $A^\star(x,y) = r(x,y) - V^\star(x)$ in the bandit setting as $Q(x, y) = r(x, y)$—hence the name $A^\star$-PO. To estimate the least-squares loss $\ell_t(\pi)$, we collect a batch of online samples $\{(x, y)\}$ with $y \sim \pi_t(\cdot | x)$ and perform a small number of stochastic gradient descent steps, starting from $\pi_t$, to obtain the updated policy $\pi_{t+1}$ as outlined in \cref{alg:APO}. \textbf{Notably, during this online stage, we generate only a single sample $y$ per prompt $x$, significantly accelerating the training.}

\begin{algorithm}[t]
\caption{$A^\star$-PO: Policy Optimization  via Optimal Advantage Regression}
\label{alg:APO}
\begin{algorithmic}[1]
\REQUIRE Training prompts set $\mathcal{X}$, reference policy $\pir$, temperature $\beta$, sample size $N$, iterations $T$
\ENSURE Learned policy $\pi_T$
\vspace{0.5em}
\STATE {\ttfamily \#\#\# Stage 1:  Estimating $V^\star(x)$ for all training prompts $x$ in $\mathcal{X}$}
\FORALL{$x \in \mathcal{X}$}
  \STATE Draw $N$ i.i.d.\ samples $\{y_i\}_{i=1}^N \sim \pir(\cdot\!\mid x)$
  \STATE Compute
  $
    \hV(x) \;\gets\; 
      \beta \,\ln\!\Bigl(\tfrac{1}{N}\sum_{i=1}^N \exp\bigl(r(x,y_i)/\beta\bigr)\Bigr).
  $
\ENDFOR
\STATE (Optional: eliminate all training prompts whose N solutions are all wrong)
\vspace{0.5em}
\STATE {\ttfamily \#\#\# Stage 2: On-policy Update with one rollout per prompt}
\vspace{0.1em}
\FOR{$t = 1$ \TO $T$}
\STATE Collect a batch of training samples $\mathcal{D} := \{(x,y)\}$ with $x\in\mathcal{X}$ and $y \sim \pi_t(\cdot | x)$  
\STATE Update policy $\pi_t$ to $\pi_{t+1}$ by performing SGD on the following least square loss $\ell_t(\pi)$
\[
%
\ell_t(\pi) := \sum_{(x,y)\in\mathcal{D}} \left(\beta\ln\tfrac{\pi(y\mid x)}{\pir(y\mid x)}
      - \left(r(x,y) - \hV(x)\right)\right)^2
\]
\ENDFOR
\end{algorithmic}
\end{algorithm}

The motivation behind the above loss is that when $\hV(x) = V^\star(x)$, the optimal policy $\pi^\star$ for the KL-regularized RL objective is the global minimizer of the least-squares regression loss, regardless of the distribution under which the expectation is defined (i.e., MSE is always zero under $\pi^\star$).

Unlike popular RL algorithms such as PPO~\citep{schulman2017proximal} and GRPO~\citep{shao2024deepseekmath}, we do not  apply a clipping mechanism to constrain $\pi_{t+1}$ from deviating too far from $\pi_t$. Instead, we introduce a single regularization term based on the KL divergence to the fixed reference policy $\pir$. Furthermore, while PPO and GRPO rely on the \emph{advantage of the current policy}, defined as $A^{\pi_t} := r(x, y) - V^{\pi_t}(x)$, we instead use the \emph{optimal advantage} $A^\star$. A key benefit of using $V^\star$ is that it can be pre-computed efficiently using a large number of samples from $\pir$ (\cref{eq:v-star}), whereas $V^{\pi_t}$ must be estimated on the fly, requiring either multiple generations during training (e.g., GRPO or RLOO~\citep{kool2019buy}) or maintaining an explicit critic network (e.g., PPO). Since $A^\star$-PO only generates one response per prompt, it also eliminates the heuristic of response-wise reward normalization.  
Finally, most existing policy optimization algorithms—including PG~\citep{williams1992simple}, TRPO~\citep{schulman2015trust}, GRPO, PPO, and REBEL~\citep{gao2024rebel}—follow the idea of approximate \emph{policy iteration}~\citep{bertsekas1996neuro, kakade2002approximately} where $\pi_{t+1}$ is designed to approximately optimize the advantage function of the current policy $A^{\pi_t}$, subject to an implicit or explicit constraint that prevents it from deviating too far from $\pi_t$. The consequence of relying on such an approximate policy iteration style update is that it requires additional strong structural condition such as policy completeness \citep{bhandari2024global}. 
In contrast, $A^\star$-PO does not follow the approximate policy iteration paradigm. It places no explicit constraints on keeping $\pi_{t+1}$ close to $\pi_t$. Instead, it directly aims to learn $\pi^\star$ by regressing on the optimal advantages. \looseness=-1

\section{Experiments}
\label{sec:exp}

Our implementation of $A^\star$-PO closely follows the pseudocode in Algorithm~\ref{alg:APO}, with the only modification of using two different KL-regularization coefficients, $\beta_1$ and $\beta_2$, during stages 1 and 2 respectively. In stage 1, we employ a relatively large $\beta_1$ to ensure a smoother estimation of $V^\star(x)$. In contrast, a smaller $\beta_2$ is used in stage 2 to relax the KL constraint to $\pir$, encouraging the learned policy $\pi$ to better optimize the reward. Although this introduces an additional hyperparameter, we find that the same set of $\beta_1$ and $\beta_2$ works well across different datasets and model sizes, and therefore, we keep them fixed throughout all experiments. In stage 1, we sample $N = 8$ responses per prompt to estimate the optimal value function with $\beta_1=1/2$. At stage 2, we collect a dataset $\mathcal{D} = {(x, y)}$ with $x \in \mathcal{X}$ and $y \sim \pi_t(\cdot \mid x)$, and optimize the least-squares regression objective using gradient descent with AdamW~\citep{loshchilov2017decoupled} and $\beta_2=1e-3$. We empirically evaluate the performance of $A^\star$-PO across various reasoning datasets, model sizes, and evaluation benchmarks. Additional details are provided in Appendix~\ref{app:exp_detail}, and qualitative analysis is shown in Appendix~\ref{app:exmp_gen}.

\subsection{Basic Reasoning on GSM8K}

\begin{figure}[t]
  \begin{center}
     \includegraphics[scale=0.45,trim={0 0 0 0},clip]{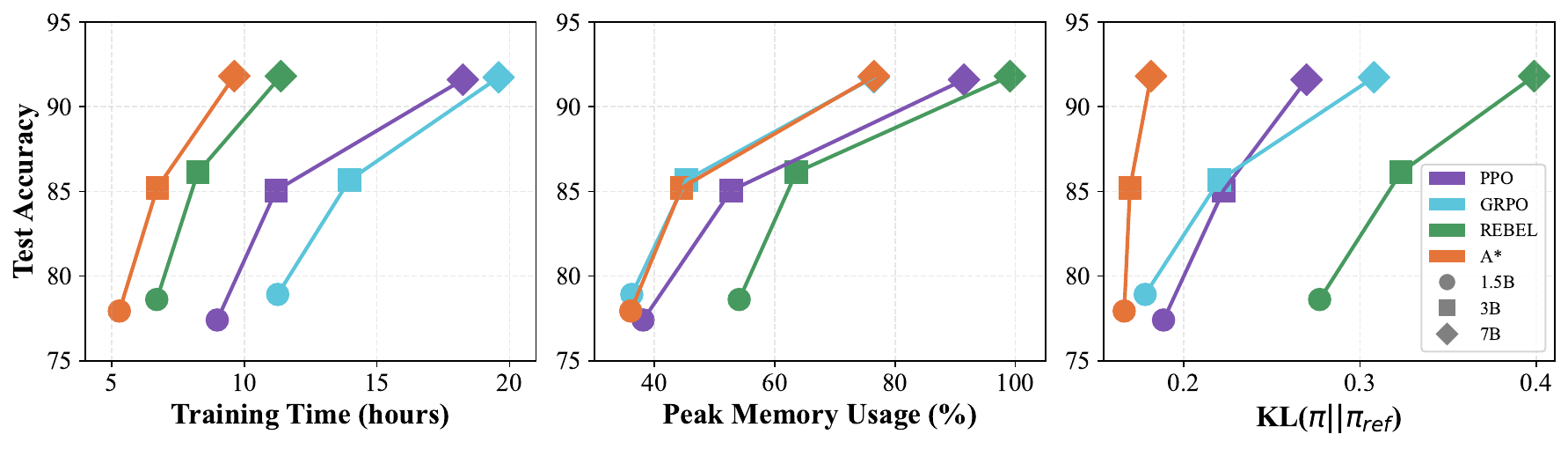}
  \end{center}
    \caption{
       \textbf{Test accuracy versus training time, peak memory usage, and KL divergence} across four baselines and three model sizes on GSM8K. Our approach (orange) can achieve comparable performance (accuracy) to baselines GRPO and PPO, while being 2x faster, more memory efficient, and achieving a smaller KL divergence. Note that for $A^\star$-PO, the training time includes the time from \textbf{both} stages (i.e., offline data collection from $\pir$ and online RL training).
    } \label{fig:gsm8k}
\end{figure}

\textbf{Training Details.}
We conduct experiments on the GSM8K dataset~\citep{cobbe2021training}, which consists of grade school-level math problems. We compare $A^\star$-PO against several baseline RL algorithms, including PPO~\citep{schulman2017proximal}, GRPO~\citep{shao2024deepseekmath}, and \rebel{}~\citep{gao2024rebel}. Simpler policy gradient methods such as REINFORCE with Leave-One-Out (RLOO) \citep{kool2019buy} are not included, as prior work has shown that \rebel{} consistently outperforms RLOO in performance with similar computational efficiency~\citep{gao2024rebel}. We sample two generations per prompt for GRPO and use a model of the same size as the policy for the critic in PPO. Following the DeepSeek-R1 training recipe \citep{deepseekai2025deepseekr1}, we perform RL directly on the base model without any prior supervised fine-tuning (SFT). We report results using three different model sizes—1.5B, 3B, and 7B—based on the pre-trained Qwen2.5 models~\citep{qwen2025qwen25technicalreport} with a maximum context length of $1{,}024$. A rule-based reward function is used, assigning +1 for correct answers and 0 for incorrect ones. All experiments are implemented using the \textsc{verl} framework~\citep{Sheng_2025}.

\noindent \textbf{Evaluation.}
We evaluate each method based on its trade-off between accuracy and KL-divergence on the validation set, assessing the effectiveness of each algorithm in optimizing the KL-regularized RL objective. Additionally, we report the peak GPU memory usage (as a percentage of total available memory) during backpropagation, as well as the total training time (in hours) for each method. Both runtime and memory usage are measured using 4 H100 GPUs under the same hyperparameter settings detailed in Appendix~\ref{app:hyper_detail}. \emph{The training time for $A^\star$-PO also includes the offline generation time in stage 1}. Peak memory usage is averaged over 100 batches.

\textbf{$A^\star$-PO is faster, more memory-efficient, and better optimizes the KL-constrained RL objective.} Figure~\ref{fig:gsm8k} presents the training time, peak memory usage, and KL divergence to $\pir$ across four methods and three model sizes. While all methods achieve similar test accuracy, they exhibit substantial differences in the other three metrics. Although \rebel{} achieves comparable training time as $A^\star$-PO, it requires significantly more memory due to processing two generations simultaneously during each update. GRPO shows similar peak memory usage but requires two generations per prompt and performs twice as many updates as $A^\star$-PO, leading to approximately 2$\times$ longer training time. PPO incurs both higher computational cost and memory usage due to its reliance on an explicit critic. By consistently updating with respect to $\pir$, $A^\star$-PO maintains the smallest KL divergence to the reference policy. \textbf{\textit{Overall, $A^\star$-PO achieves the fastest training time, lowest peak memory usage, and smallest KL divergence compared to all baselines with similar test accuracy.}}

\subsection{Advanced Reasoning on MATH}

\begin{table}[t]\centering
\resizebox{0.88\linewidth}{!}{
\begin{tabular}[t]{c|c|ccc|c|c|c|c} 
\midrule[0.15ex]
Model & \multirow{2}{*}{Method} &{Time} &  {Memory} & {KL} & \multirow{2}{*}{MATH500} & Minerva & Olympiad & AMC 23 \\
Size & &  (hrs) & (\%) & $(\pi||\pir)$ & & Math & Bench & Avg@32 \\
\midrule[0.05ex]
\multirow{5}{*}{1.5B} 
& \textit{base} & / & / & / & 45.8 & 16.91 & 17.66 & 20.55 \\
& PPO & 10.84 & 42.64 & 0.151 & 57.0 & 21.69 & 20.92 & 29.84 \\
& GRPO & 15.01 & \underline{42.19} & \underline{0.091} & \textbf{58.0} & \textbf{22.79} & 21.07 & \textbf{32.19} \\
& \rebel{} & \underline{10.33} & 63.34 & 0.098 & \underline{57.8} & 20.22 & \textbf{23.15} & 31.67 \\
& $A^\star$-PO & \textbf{6.92} & \textbf{41.59} & \textbf{0.069} & \underline{57.8} & \underline{22.43} & \underline{22.26} & \underline{31.88} \\
\midrule[0.05ex]
\multirow{5}{*}{3B} 
& \textit{base} & / & / & / & 50.8 & 23.16 & 23.74 & 29.77  \\
& PPO & 13.59 & 54.95 & 0.111 & 65.8 & 23.90 & 26.71 & 34.68  \\
& GRPO & 18.26 & \underline{49.75} & 0.102 & 66.0 & 25.00 & \textbf{28.93} & 34.61 \\
& \rebel{} & \underline{12.88} & 74.32 & \underline{0.099} & \textbf{67.0} & \textbf{27.57} & \underline{28.19} & \textbf{36.33} \\
& $A^\star$-PO & \textbf{8.78} & \textbf{49.28} & \textbf{0.082} & \underline{66.2} & \underline{25.74} & 28.04 & \underline{35.47} \\
\midrule[0.05ex]
\multirow{5}{*}{7B} 
& \textit{base} & / & / & / & 62.8 & 22.43 & 30.56 & 39.61  \\
& PPO & 20.53 & 92.81 & 0.133 & 74.4 & 30.88 & 33.98 & 55.47  \\
& GRPO & 20.15 & \underline{77.82} & 0.172 & 73.2 & 33.46 & 33.98 & 55.86 \\
& \rebel{} & \underline{14.67} & 98.77 & \underline{0.124} & \underline{74.6} & \textbf{34.56} & \textbf{34.72} & \textbf{56.88} \\
& $A^\star$-PO & \textbf{11.01} & \textbf{76.57} & \textbf{0.078} & \textbf{76.2} & \textbf{34.56} & \underline{34.27} & \underline{56.25} \\
\midrule[0.15ex]
\end{tabular}
}
\caption{\textbf{Results on MATH.} For each metric, the best-performing method is highlighted in \textbf{bold}, and the second-best is \underline{underlined}.\label{tab:math_result}}
\end{table}

\textbf{Training Details.}
In this section, we evaluate on the more advanced MATH dataset~\citep{hendrycks2021measuringmathematicalproblemsolving}, which consists of challenging problems from high school math competitions. Following the original experimental setup, we use $7{,}500$ questions for training and randomly select $1{,}000$ questions for validation. We adopt the same model and hyperparameter settings as the previous section.

\textbf{Evaluation.}
Following prior work~\citep{zeng2025simplerlzooinvestigatingtamingzero}, we evaluate model performance on standard mathematical reasoning benchmarks, including MATH500~\citep{hendrycks2021measuringmathematicalproblemsolving}, Minerva Math~\citep{lewkowycz2022solvingquantitativereasoningproblems}, and OlympiadBench~\citep{he2024olympiadbenchchallengingbenchmarkpromoting}, as well as the competition-level AMC 2023 benchmark. For AMC 2023, due to the small size of the benchmark (40 questions), we report average accuracy over 32 generations to reduce variance.

\textbf{$A^\star$-PO achieves higher accuracy, is fastest and generalizes effectively across benchmarks.}
Table~\ref{tab:math_result} presents the results, where the best-performing method is highlighted in \textbf{bold} and the second-best is \underline{underlined}. As shown, $A^\star$-PO consistently ranks as either the best or second-best method across various benchmarks and model sizes, while being faster than baselines, especially PPO and GRPO, when model size is large (e.g., 2x faster than PPO and GRPO for 7B model size). $A^\star$-PO is also the most memory efficient and achieves the smallest KL divergence to the base mdoel $\pir$. 
For in-distribution test set MATH500, $A^\star$-PO achieves similar performance to baselines on smaller model sizes (1.5B and 3B), but outperforms on the larger model size (7B).
One may wonder if explicitly pre-computing $V^\star$ for all training prompts would result overfitting to the training set. 
When evaluating on the out-of-domain benchmarks, $A^\star$-PO also demonstrates strong generalization capabilities. 

\subsection{Long-context Reasoning with DeepSeek-distilled Qwen}

\textbf{Training Details.} In this section, we assess the ability of $A^\star$-PO to train with long context lengths. We use the DeepSeek-R1 distilled Qwen-1.5B model~\citep{deepseekai2025deepseekr1} with a context length of $16{,}384$, training on a randomly selected subset of $5{,}000$ problems from the VGS-AI/OpenR1-Cleaned dataset~\citep{vgs}. We limit training to a subset of the dataset due to the large scale of the original corpus and the significant computational cost of data generation and training on academic hardware. We exclude results for \rebel{}, as it runs out of memory under this long-context setting. The same hyperparameter configurations from the previous section are used.

\textbf{Evaluation.} We evaluate on competition-level AIME and HMMT benchmarks from 2024 and 2025, reporting both Avg@32 and Pass@32 metrics.

\textbf{$A^\star$-PO outperforms baselines and effectively scales to long-context reasoning.}
The results are presented in Table~\ref{tab:numina_result}, with the best-performing method highlighted in bold. $A^\star$-PO achieves the highest performance on AIME 2025 and HMMT February 2024, and performs comparably to GRPO and PPO on AIME 2024 and HMMT February 2025. Averaged over these 4 math competition benchmarks, $A^\star$-PO achieves the best performance (measured under the metric of Avg@32). These results demonstrate that estimating $\Vs$ using 8 generations remains effective even in long-context reasoning scenarios, and that $A^\star$-PO can be successfully applied to such settings. 
Unsurprisingly, $A^\star$-PO remains the most efficient method, requiring only 49 hours of training time, compared to 88 hours for PPO and 90 hours for GRPO.

\begin{table}[t]\centering
\resizebox{1.0\linewidth}{!}{
\begin{tabular}[t]{c|cc|cc|cc|cc|cc} 
\midrule[0.15ex]
\multirow{2}{*}{Method} & \multicolumn{2}{c|}{AIME 24} & \multicolumn{2}{c|}{AIME 25} & \multicolumn{2}{c|}{HMMT Feb 24} & \multicolumn{2}{c|}{HMMT Feb 25} & \multicolumn{2}{c}{Average} \\
& Avg@32 & Pass@32 & Avg@32 & Pass@32 & Avg@32 & Pass@32 & Avg@32 & Pass@32 & Avg@32 & Pass@32 \\
\midrule[0.05ex]
\textit{base} & 21.25&	60.00&	21.15&	43.33	&9.48&	36.67	&8.85&	30.00 & 15.21 & 42.50 \\
PPO & \textbf{30.94} & \textbf{80.00} & 25.12	& \textbf{46.67}	&15.94	&\textbf{43.33}&	\textbf{14.06}&\textbf{50.00} & 21.52 & \textbf{55.00} \\
GRPO & 30.00	& 73.33 &25.00	&43.33&	15.63&40.00&13.23&40.00 & 20.97 & 49.17\\
$A^\star$-PO & 29.17&70.00&\textbf{26.67}&\textbf{46.67}&\textbf{16.77}&\textbf{43.33}&13.65&43.33 & \textbf{21.57} & 50.83 \\
\midrule[0.15ex]
\end{tabular}
}
\caption{\textbf{Long-context reasoning results using the DeepSeek Distilled 1.5B Model.} For each metric, the best-performing method is highlighted in \textbf{bold}. \label{tab:numina_result}}
\end{table}

\subsection{Ablations}
\label{sec:ablation}

\begin{figure}[t]%
  \begin{center}
     \includegraphics[scale=0.4,trim={0 116 0 116},clip]{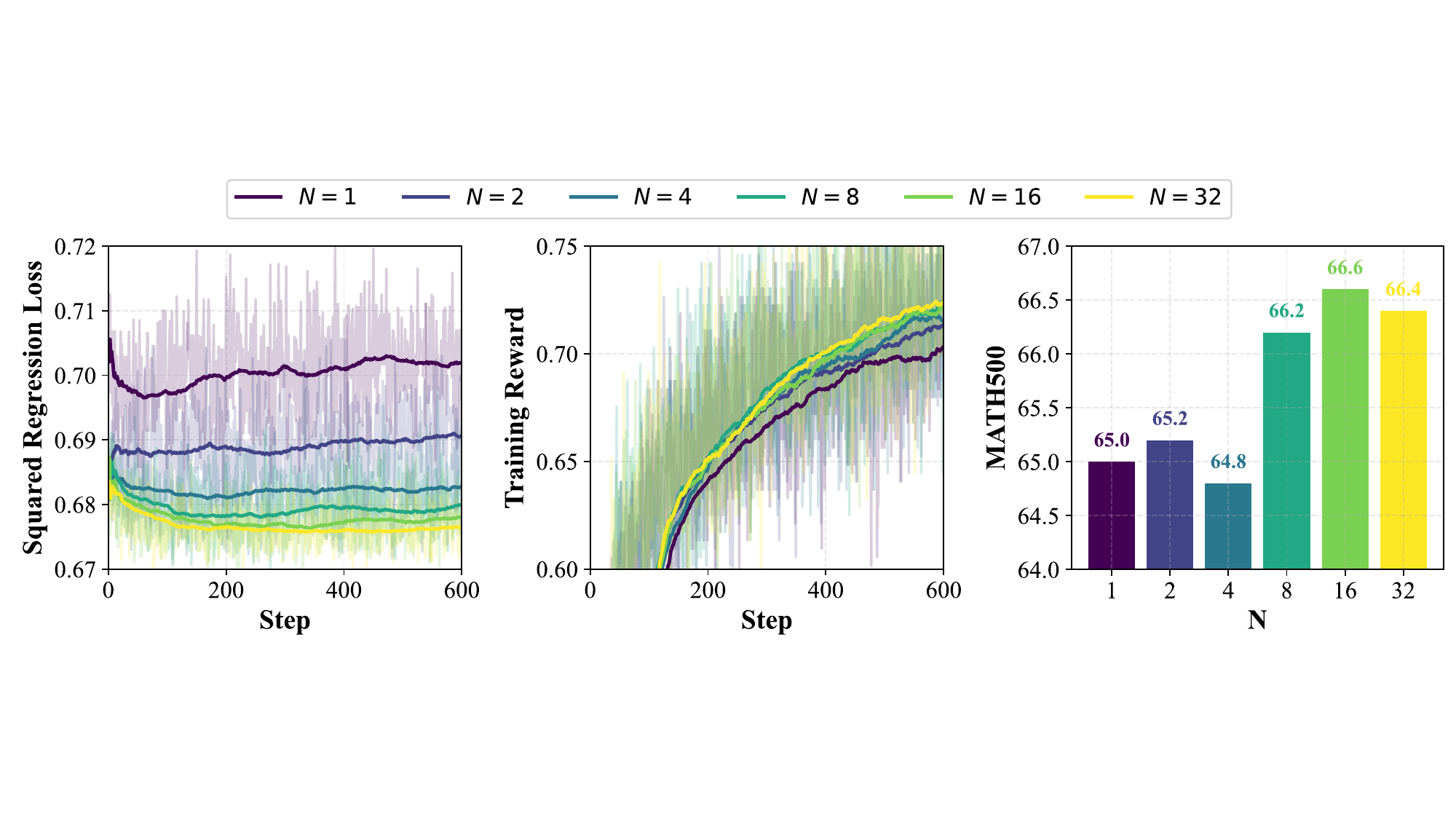}
  \end{center}
    \caption{\textbf{Ablation results with different number of $N$ for estimating $\Vs$.} Solid lines indicate the moving average with window size 100. (Left) Squared regression loss per step of $A^\star$-PO. (Middle) Training reward per step. (Right) Model performance on MATH500 with varying values of $N$.}
    \label{fig:diff_n}
\end{figure}


\begin{figure}[t]
\vspace{-15pt}
  \begin{center}
     \includegraphics[scale=0.53,trim={0 10 0 0},clip]{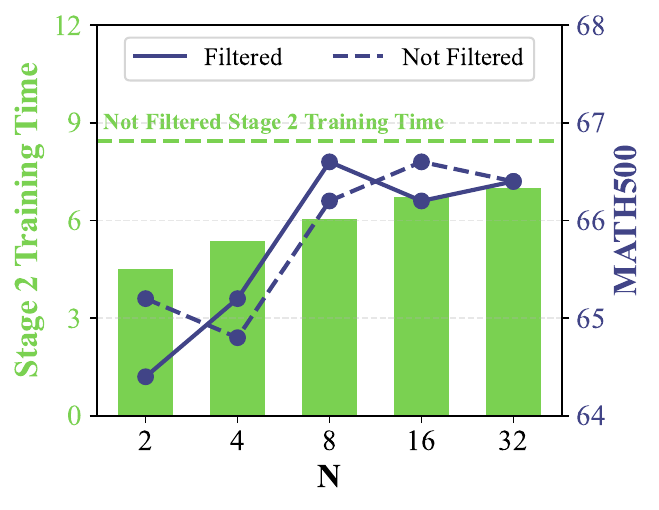}
  \end{center}
    \caption{\textbf{Ablation Results on Filtering Hard Prompts.} (Green) Training time of Stage 2 for different values of $N$. The dashed line indicates the training time without filtering. (Purple) Performance on MATH500 with and without filtering.}
    
    \label{fig:filter}
\end{figure}

\textbf{Number of Generations.} We conduct an ablation on the number of generations $N$ used during stage 1 to estimate $\Vs$ on the MATH dataset with the 3B model, varying $N$ from 1 to 32. The results are shown in Figure~\ref{fig:diff_n}. We observe that the squared regression loss consistently decreases as $N$ increases, indicating that larger values of $N$ lead to more accurate estimations of $\Vs$. The accuracy of this estimation significantly affects both the training reward and downstream evaluation metrics, such as performance on MATH500. Notably, the training reward increases almost monotonically with larger $N$, and a similar trend is observed for performance on MATH500, which begins to plateau at $N=8$. We provide a discussion of the plateau behavior in Appendix~\ref{app:pass_k_eval}.


\textbf{Filtering Out Hard Problems.} For fair comparison to the baselines, our main result did not filter out hard questions where pass@N is zero. 
In this section, we explore whether training efficiency can be further improved by filtering out problems whose pass@N is zero. Note that pass@N can be easily estimated using the N samples generated from phase 1 of $A^\star$-PO.
The intuition is that problems the reference policy $\pir$ is unable to solve (i.e., with near-zero success probability) are unlikely to be solved through RL post-training. Our method naturally supports this filtering mechanism, as we already estimate $\Vs$ during the offline stage. We conduct an ablation study varying $N$ from 2 to 32, filtering out problems where all $N$ sampled responses are incorrect. The results are shown in Figure~\ref{fig:filter}. Filtering significantly reduces training time by reducing the effective size of the training set, as illustrated by the green bars compared to the dashed baseline. When $N$ is small, filtering removes more problems since it is more likely that none of the $N$ responses are correct, further accelerating training. In terms of performance, the MATH500 evaluations for filtered and unfiltered runs remain closely aligned. Notably, the filtered run with $N=8$ achieves the highest accuracy, even surpassing the unfiltered baseline while reducing training time by 28\%. 

\begin{figure}[t]%
  \begin{center}
     \includegraphics[scale=0.4,trim={0 116 0 116},clip]{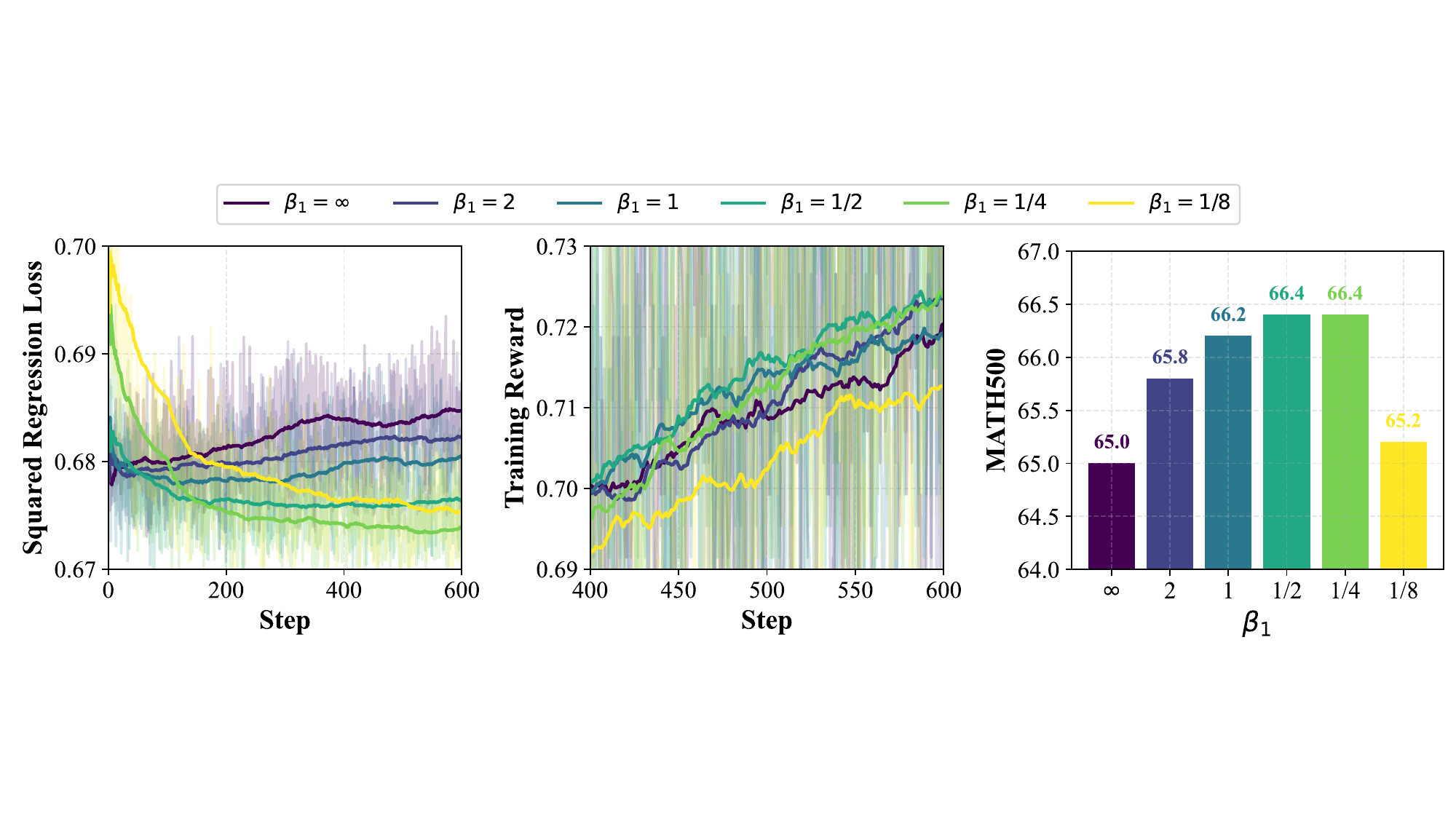}
  \end{center}
    \caption{\textbf{Ablation results with different $\beta_1$ for estimating $\Vs$.} Solid lines indicate the moving average with window size 100. (Left) Squared regression loss per step of $A^\star$-PO. (Middle) Training reward per step. (Right) Model performance on MATH500 with varying values of $\beta_1$.}
    \label{fig:diff_beta}
\end{figure}

\textbf{$\beta_1$ for Estimating $\Vs$.}
The parameter $\beta_1$ controls the level of smoothness when estimating $\Vs$. As $\beta_1 \to 0$, the estimated $\Vs$'s value approaches Pass@N (computed using the collected $N$ responses), while as $\beta_1 \to \infty$, it estimates $V^{\pir}$. Proofs of these limiting cases are provided in Appendix~\ref{app:beta_1_proof}. We ablate $\beta_1$ from $\infty$ to $1/8$, and the results are shown in Figure~\ref{fig:diff_beta}. From the squared regression loss, we observe that smaller values of $\beta_1$ lead to higher initial training loss. 
But the final training loss decreases monotonically as $\beta_1$ decreases, until reaching $\beta_1 = 1/8$. A similar trend is observed for both the training reward and MATH500 performance, which improve monotonically with smaller $\beta_1$ values up to $\beta_1 = 1/8$. 
Further decreasing $\beta_1$ may produce an overly optimistic estimate of $\Vs$, which the model cannot realistically achieve, ultimately hindering performance.




\section{Theory}
\label{sec:theory}


In this section, we present a theoretical analysis of a more general variant of \cref{alg:APO} and demonstrate that the output policy achieves near-optimal expected performance against $\pis$ --- the optimal policy of the KL-regularized RL objective --- with polynomial complexity.  Surprisingly, we will see that under reasonable conditions of $\pir$, we can achieve the above goal \emph{without explicit exploration}. Denote $p(x)$ as the probability of $\pir$ returning a correct solution and assume $\min_{x} p(x) \geq \vref > 0$, the key message of this section is that: \looseness=-1
\begin{center}
    \textbf{ KL-regularized RL with $\beta>0$ and $\vref>0$ can be solved without any sophisticated exploration (e.g., optimism in the face of uncertainty)}.   
\end{center}
We demonstrate the above key message via a reduction to no-regret online learning. 
Specifically, we assume access to a \textit{possibly stochastic} no-regret oracle $\NR$. In each iteration, the oracle is invoked with the historical estimated losses $\widehat{\ell}_{1:t-1}$ and policy class $\Pi$ to compute a distribution $q_t\in \Delta(\Pi)$ over policies for the next iteration:
$q_{t} = \NR(\hl_{1:t-1},\Pi)$.
We then sample a policy $\pi_{t}\sim q_{t}$ and use it to generate a prompt-response pair where $x_{t}\sim\rho$ and $y_{t}\sim\pi_{t}(\cdot|x_{t})$.\footnote{For simplicity, we just assume at each iteration $t$, our on-policy batch only contains one $(x_t,y_t)$ pair.} Here $\rho$ is the distribution of the prompts in the dataset. Given the pair $(x_{t}, y_{t})$, we  construct the following \textbf{least square  loss} $\widehat \ell_{t}$:
\begin{align*}
\hl_{t}(\pi) := \left(\beta\ln\frac{\pi(y_t|x_t)}{\pir(y_t|x_t)}-\left(r(x_t,y_t)-\hV(x_t)\right)\right)^2.
\end{align*}
\cref{alg:APO} indeed is a specific instance of the above algorithm that leverages online gradient descent (\textbf{OGD}) \citep{hazan2016introduction} as the oracle $\NR$. 
We use $\Reg(T)$ to denote the cumulative regret of $\NR$ over $T$ iterations:
\begin{align*}
\Reg(T):=\sum_{t=1}^T\EE_{\pi\sim q_t}\left[\hl_t(\pi)\right]-\min_{\pi\in\Pi}\sum_{t=1}^T\hl_t(\pi).
\end{align*}
\textbf{Remark.} When the policy class and loss functions are convex, deterministic no-regret oracles such as OGD and mirror descent \citep{hazan2016introduction} can guarantee a regret bound of $\Reg(T)=O(\sqrt{T})$. In more general settings, stochastic oracles like follow-the-perturbed-leader (\textbf{FTPL}) \citep{pmlr-v117-suggala20a} are also capable of achieving $\sqrt{T}$ regret.

Due to the possible randomness of $\NR$, we will utilize the decoupling coefficient (DC), a standard complexity measure in contextual bandits \citep{zhang2022feel,zhang_2023_ltbook,li2024feel}, to characterize the difficulty of learning over the policy class $\Pi$:
\begin{definition}
\label{def:dc}
The decoupling coefficient $\DC(\Pi)$ is defined as the smallest $d>0$ such that for all distribution $q$ on $\Pi$:
\begin{align*}
\EE_{\pi\sim q, x,y\sim\pi(\cdot|x)}[\ln\pi(y|x)-\ln\pis(y|x)]\leq\sqrt{d\cdot\EE_{\pi\sim q, \pi'\sim q,x,y\sim\pi'(\cdot|x)}[(\ln\pi(y|x)-\ln\pis(y|x))^2]}.
\end{align*}
\end{definition}
$\DC$ is perhaps the most general structural condition known in the contextual bandit and RL theory literature, and is known to be much more general than other popular measures such as Eluder dimension \citep{zhang_2023_ltbook} which cannot easily capture non-linear structures in the problems. 

Our analysis relies on some standard assumptions. We first  assume that the policy class is realizable.\looseness=-1
\begin{assumption}\label{ass:r-real}
Suppose that $\pis\in\Pi$.
\end{assumption}
Next, we assume the loss function is bounded.
\begin{assumption}\label{ass:l-bound}
Suppose that $\Vert r_{\pi}\Vert_{\infty}\leq \Cr$ where $r_{\pi}(x,y):=\beta\ln\frac{\pi(y|x)}{\pir(y|x)}$ for all $\pi\in\Pi$.
\end{assumption} The above assumption mainly ensures that our predictor $\ln \pi / \pir$ has bounded outputs, a standard assumption that is used in least squares regression analysis.  
These assumptions are commonly adopted in the policy optimization literature as well \citep{rosset2024direct,xie2024exploratory,gao2024rebel,gao2024regressing,huang2024correcting}. 

Recall that our loss uses $\hV$ to approximate $V^\star$. We introduce a simple and reasonable condition which can ensure the bias $|\hV- V^\star|$ is small.
\begin{assumption}
\label{ass:vref}
Suppose that we have $\EE_{y\sim\pir(\cdot|x)}[r(x,y)]\geq\vref > 0$ for all $x$.
\end{assumption}
\cref{ass:vref} is practically reasonable, which intuitively says that we hope $\pir$ has non-zero Pass@k value at $x$. For example, consider $\vref = 0.05$. One can show that with probability greater than 90\%, $\pir$'s Pass@50 on $x$ is one. In practice, we typically do not expect RL post-training can solve problems whose Pass@50 under $\pir$ is zero anyway.

Now we are ready to present a bound on the performance gap between the output policies and $\pis$.
\begin{theorem}
\label{thm:main}
Suppose that \cref{ass:r-real,ass:l-bound,ass:vref} hold true. With probability at least $1-\delta$, we have
\begin{align}
\label{eq:main}
 \mathbb{E}_{x, y\sim \pis} r(x,y) - & \frac{1}{T}\sum_{t=1}^T\EE_{\pi\sim q_t}[\mathbb{E}_{x,y\sim \pi}r(x,y)]
 \lesssim \left(\frac{\DC(\Pi)\Reg(T)}{T\beta^2}\right)^{\frac{1}{4}}+\left(\frac{\DC(\Pi)\Cr^2\ln(1/\delta)}{T\beta^2}\right)^{\frac{1}{4}}\notag\\
 &+\left(\left(\min\left\{\exp\left(\frac{1}{\beta}\right)-1,\frac{1}{\vref}\right\}\right)^2\frac{\DC(\Pi)\ln(|\mathcal{X}|/\delta)}{N}\right)^{\frac{1}{4}}.
\end{align}
\end{theorem}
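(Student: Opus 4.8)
The plan is to decompose the performance gap against $\pis$ into three sources of error: the regret of the no-regret oracle $\NR$, the statistical error from estimating the least-squares losses with single on-policy samples, and the bias from using $\hV$ in place of $V^\star$. The starting point is the algebraic identity that underlies the whole method: for any policy $\pi$ and any $x$, writing $r_\pi(x,y)=\beta\ln\frac{\pi(y|x)}{\pir(y|x)}$, the per-example square loss with the \emph{true} $V^\star$ satisfies $\EE_{y\sim\pi'}(r_\pi-(r-V^\star))^2 = \EE_{y\sim\pi'}(r_\pi - r_{\pis})^2$ up to cross terms that vanish in expectation, since $r_{\pis}(x,y)=r(x,y)-V^\star(x)$ exactly by the closed form $\pis(y|x)\propto\pir(y|x)\exp(r(x,y)/\beta)$. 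Hence minimizing $\hl_t$ is, modulo bias and noise, regressing $r_\pi$ onto $r_{\pis}$, and the excess loss of the oracle's iterates controls $\frac1T\sum_t\EE_{\pi\sim q_t}\EE_{x,y\sim\pi'}(r_\pi(x,y)-r_{\pis}(x,y))^2$ for an independent copy $\pi'\sim q_t$.

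The key steps, in order, are as follows. First, use \cref{ass:vref} together with a concentration bound (Bernstein or Hoeffding on the bounded quantities $\exp(r(x,y_i)/\beta)\in[1,\exp(1/\beta)]$) to show $|\hV(x)-V^\star(x)|\lesssim \min\{\exp(1/\beta)-1,1/\vref\}\sqrt{\ln(|\Xc|/\delta)/N}$ uniformly over the $|\Xc|$ training prompts with probability $1-\delta$; the $1/\vref$ form of the bound comes from bounding the variance of $\exp(r/\beta)$ by $\vref^{-1}$-type quantities when $\beta$ is small, and the $\exp(1/\beta)-1$ form from the crude range bound when $\beta$ is large. Call this uniform bias bound $\eps_{\mathrm{bias}}$. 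Second, run the standard ``regression-with-regret'' argument: relate $\sum_t\EE_{\pi\sim q_t}[\hl_t(\pi)] - \sum_t\hl_t(\pis)$ to $\Reg(T)$ plus a martingale concentration term of order $\Cr^2\ln(1/\delta)$ (via Freedman/Azuma on the per-round loss differences, which are bounded by $O(\Cr^2)$), and simultaneously lower-bound this same quantity by $\sum_t\EE_{\pi\sim q_t}\EE_{x,y\sim\pi'}(r_\pi-r_{\pis})^2$ minus an $O(T\eps_{\mathrm{bias}}^2)$ term accounting for replacing $V^\star$ by $\hV$ inside the square. This yields a bound of the form $\frac1T\sum_t\EE_{\pi\sim q_t,\pi'\sim q_t}\EE_{x,y\sim\pi'}(\ln\pi(y|x)-\ln\pis(y|x))^2 \lesssim \frac{\Reg(T)+\Cr^2\ln(1/\delta)}{T\beta^2} + \frac{\eps_{\mathrm{bias}}^2}{\beta^2}$, after dividing by $\beta^2$. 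Third, invoke the decoupling coefficient (\cref{def:dc}) to pass from this on-policy squared error to the ``decoupled'' first-moment quantity $\frac1T\sum_t\EE_{\pi\sim q_t,x,y\sim\pi}[\ln\pi(y|x)-\ln\pis(y|x)]$, losing a $\sqrt{\DC(\Pi)\cdot(\cdot)}$ factor and hence a square root (this is where the fourth-root rates in \eqref{eq:main} originate — square-root from the regression bound, square-root from $\DC$). Fourth, connect this KL-type surrogate to the actual reward gap: by the closed form of $\pis$, one has $\EE_{x,y\sim\pi}[\ln\pi(y|x)-\ln\pis(y|x)] = \frac1\beta\big(\EE_{x,y\sim\pis}r - \EE_{x,y\sim\pi}r\big) + \frac1\beta\big(\beta\KL(\pi\|\pir)-\beta\KL(\pis\|\pir)\big) \ge \frac1\beta\big(\EE_{x,y\sim\pis}r - \EE_{x,y\sim\pi}r\big)$ up to the KL-regularization terms, using that $\pis$ maximizes the regularized objective so the regularizer difference has the right sign; averaging over $t$ and $\pi\sim q_t$ gives the left-hand side of \eqref{eq:main}.

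The main obstacle I expect is Step 2–3 done carefully: correctly tracking how the estimation error $\hV-V^\star$ enters the least-squares comparison without blowing up, and making the ``decoupling'' step rigorous when $\NR$ is stochastic — in particular ensuring the independent-copy $\pi'\sim q_t$ structure in \cref{def:dc} matches what the regression argument actually controls (the oracle sees $\pi_t\sim q_t$ and we compare against $\EE_{\pi\sim q_t}$, so the ``feel-good'' / decoupling trick has to be applied per-round with the conditional distribution $q_t$). A secondary technical point is the uniform-over-$\Xc$ bias bound: since $\hV(x)$ is computed for every training prompt, a union bound over $|\Xc|$ is needed, which is why $\ln(|\Xc|/\delta)$ rather than $\ln(1/\delta)$ appears in the last term; one must also verify that the two-sided control $\min\{\exp(1/\beta)-1,1/\vref\}$ genuinely dominates both the bias of the $\ln$-of-average estimator and its fluctuation, which requires a short Taylor/Jensen argument on $\beta\ln\EE[\exp(r/\beta)]$ around its empirical counterpart. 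The remaining pieces — the OGD regret bound $\Reg(T)=O(\sqrt T)$ as an instantiation, and the algebraic reward-gap identity in Step 4 — are routine given the results already stated in the excerpt.
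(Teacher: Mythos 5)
Your Steps 1--3 track the paper's actual argument closely: a uniform-over-$\Xc$ concentration bound for $\hV$ (the paper proves $|\hV(x)-\Vs(x)|\lesssim \beta\min\{e^{1/\beta}-1,1/\vref\}\sqrt{\ln(|\Xc|/\delta)/N}$ via Azuma--Hoeffding on the empirical success rate plus the mean value theorem), then the exact identity $\beta\ln\frac{\pis(y|x)}{\pir(y|x)}=r(x,y)-\Vs(x)$ so that $\hl_t(\pis)=(\hV(x_t)-\Vs(x_t))^2$, two applications of Freedman's inequality, realizability plus $\Reg(T)$, an AM--GM step to absorb the $\hV-\Vs$ bias, and finally the decoupling coefficient with Cauchy--Schwarz to control $\frac1T\sum_t\EE_{\pi\sim q_t}\EE_{x\sim\rho}[\KL(\pi(\cdot|x)\,\|\,\pis(\cdot|x))]$. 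Two bookkeeping cautions: your stated bias bound drops the factor $\beta$ (it must be $\beta\min\{e^{1/\beta}-1,1/\vref\}$, otherwise after dividing by $\beta^2$ the third term of \eqref{eq:main} acquires a spurious $1/\beta^2$), and your identity in Step 1 is exact pointwise, not merely "up to cross terms".

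The genuine gap is your Step 4. Your identity $\EE_{x,y\sim\pi}[\ln\pi-\ln\pis]=\frac1\beta\bigl(\EE_{x,y\sim\pis}r-\EE_{x,y\sim\pi}r\bigr)+\bigl(\KL(\pi\|\pir)-\KL(\pis\|\pir)\bigr)$ is correct (it is just the statement that $\beta\,\EE_x\KL(\pi\|\pis)$ equals the gap in the KL-regularized objective), but the inequality you then invoke requires $\KL(\pi\|\pir)\geq\KL(\pis\|\pir)$, and the optimality of $\pis$ for the \emph{regularized} objective gives you no such sign control: $\pis$ maximizes reward minus KL, it does not minimize KL to $\pir$. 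Taking $\pi=\pir$ (so $\KL(\pi\|\pir)=0<\KL(\pis\|\pir)$ generically) shows the claimed lower bound on $\EE_{x,y\sim\pi}[\ln\pi-\ln\pis]$ by the reward gap fails, so your final step does not deliver \eqref{eq:main} (indeed, were it valid it would yield a square-root rate in $\Reg(T)/T$, strictly better than the theorem, which should itself be a warning sign). The paper instead passes from the average KL to the reward gap via Pinsker's inequality and boundedness of $r$: the reward gap is at most the expected total variation $\EE_x\|\pi(\cdot|x)-\pis(\cdot|x)\|_1\leq\sqrt{2\,\KL}$, followed by Cauchy--Schwarz over rounds; this extra square root is precisely where the fourth-root rates in \eqref{eq:main} come from (one square root from the DC step applied to the squared log-ratio bound, one from Pinsker), not from "regression plus DC" as you attribute them. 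Replacing your Step 4 with this Pinsker argument makes your proof essentially coincide with the paper's.
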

\cref{thm:main} shows that the performance gap decays with rates $O\left(\Reg(T)^{\frac{1}{4}}T^{-\frac{1}{4}}\right)$ and $O\left(N^{-\frac{1}{4}}\right)$, implying that a near-optimal policy can be learned with polynomial sample complexity as long as $\Reg(T)$ is sublinear. Based on this theoretical guarantee, our algorithm exhibits two notable advantages over existing provable online policy optimization methods:
\begin{itemize}[leftmargin=*, nosep]
\item \textbf{Only realizability.} Our approach optimizes sequence of loss functions constructed from on-policy samples. Unlike other on-policy policy optimization approaches (e.g., PG \citep{bhandari2024global} and its variants PPO \citep{schulman2017proximal}, REBEL \citep{gao2024rebel}) which often relies on a stronger condition called policy completeness, our approach relies on a much weaker realizability condition.


\item \textbf{No exploration.} Our method does not incorporate any explicit exploration mechanism such as optimism in the face of uncertainty or reset to some exploratory distribution. As a result, it is conceptually simpler and more computationally efficient. For instance, the loss function $\widehat \ell_t$ is just a least square regression loss. Treating $\ln \pi$ as a predictor, then the loss $\widehat \ell_t$ is a convex functional with respect to $\ln\pi$ as a whole. In contrast, prior work that relies on explicit exploration (e.g., optimism in the face of uncertainty) requires bi-level optimization which often leads to non-convex loss functionals \citep{cen2024value,xie2024exploratory}. 
\end{itemize}

\paragraph{Case study: log-linear policies.} We present the learning guarantee for log-linear policies as a special case where we show that our simple regression loss indeed can be optimized via simple SGD. Consider the following log-linear policy class:
\begin{align}
\label{eq:loglinear}
\Pi=\left\{\pi_{\theta}:\pi_{\theta}(y|x)=\frac{\exp\left(\left\langle\theta,\phi(x,y)\right\rangle\right)}{\sum_{y'}\exp\left(\left\langle\theta,\phi(x,y')\right\rangle\right)},\quad\theta\in\Theta\right\},
\end{align}
where $\Theta\subseteq\RR^d$ is the parameter space and $\phi:\Xc\times\Yc\to\RR^d$ represents predefined feature vectors. First, instantiating the learning oracle $\NR$ as \textbf{FTPL} \citep{pmlr-v117-suggala20a}, we can achieve sublinear regret when learning log-linear policies:
\begin{corollary}[log-linear policies with FTPL, \textit{Informal}]
Consider the log-linear policy class $\Pi$ defined in \cref{eq:loglinear}. Suppose \cref{ass:r-real,ass:l-bound,ass:vref} hold true and use FTPL as the $\NR$ oracle. Then we have the following regret with probability at least $1-\delta$:
\begin{align*}
 \mathbb{E}_{x, y\sim \pis} r(x,y) - & \frac{1}{T}\sum_{t=1}^T\EE_{\pi\sim q_t}[\mathbb{E}_{x,y\sim \pi}r(x,y)]
 \lesssim \left(\frac{d^{\frac{5}{2}}\Cr}{T^{\frac{1}{2}}\beta}\right)^{\frac{1}{4}}+\left(\frac{d\Cr^2\ln(1/\delta)}{T\beta^2}\right)^{\frac{1}{4}}\notag\\
 &+\left(\left(\min\left\{\exp\left(\frac{1}{\beta}\right)-1,\frac{1}{\vref}\right\}\right)^2\frac{d\ln(|\mathcal{X}|/\delta)}{N}\right)^{\frac{1}{4}}.
\end{align*}
\end{corollary}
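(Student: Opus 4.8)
The plan is to instantiate the general bound of \cref{thm:main} with the specific $\DC(\Pi)$ and $\Reg(T)$ achievable for the log-linear class under the FTPL oracle, and then simplify. There are really two quantities in \eqref{eq:main} that need to be controlled as functions of the dimension $d$: the decoupling coefficient $\DC(\Pi)$ and the cumulative regret $\Reg(T)$ of the no-regret oracle. Everything else in \cref{thm:main} is already in closed form, so the corollary follows by substitution once these two are pinned down.

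First I would bound $\DC(\Pi)$ for the log-linear class. The key observation is that for log-linear policies, $\ln\pi_\theta(y|x) - \ln\pis(y|x)$ is, up to the (state-dependent) log-partition terms, a linear functional of $\theta - \theta^\star$ in the feature $\phi(x,y)$; the partition terms cancel appropriately inside the difference-of-logs structure that appears in \cref{def:dc} because both the left and right sides are translation-invariant in $x$-dependent shifts. This reduces the decoupling inequality to a statement about a linear function class in $\RR^d$, for which the decoupling coefficient is known to be $O(d)$ (this is the standard linear-contextual-bandit bound; see the references cited after \cref{def:dc}). So I would claim $\DC(\Pi) \lesssim d$, possibly up to the boundedness constant $\Cr$ but absorbing that or carrying it as needed.

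Next I would bound $\Reg(T)$ for FTPL on this class. Here the loss $\hl_t(\pi)$ is a squared term in the scalar $\beta \ln\frac{\pi_\theta(y_t|x_t)}{\pir(y_t|x_t)}$, which is $\Cr$-bounded by \cref{ass:l-bound}; as a function of $\theta$ it is the composition of a quadratic (with curvature scaling like $\beta^2$ and range $\Cr$) with an affine-plus-log-partition map, so it is Lipschitz in $\theta$ with constant on the order of $\Cr \cdot \beta^{-1} \cdot (\text{feature norm})$ — more carefully, the relevant per-step Lipschitz/variation bound is $O(\Cr)$ after accounting for the $\beta$ factors, since $\beta \ln(\pi/\pir)$ itself is the bounded quantity. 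Invoking the FTPL regret guarantee of \citet{pmlr-v117-suggala20a} for $d$-dimensional parameter spaces with bounded, Lipschitz losses gives $\Reg(T) = O(\mathrm{poly}(d)\cdot \Cr \cdot \sqrt{T})$; matching the $d^{5/2}\Cr/(\beta\sqrt T)$ term that appears inside the first parenthesis of the corollary means the intended bookkeeping is $\Reg(T)/(T\beta^2) \lesssim d^{5/2}\Cr/(\beta T^{1/2})$, i.e. $\Reg(T) \lesssim d^{5/2}\Cr\, \beta\, \sqrt T$; I would track the $d$-powers through the FTPL analysis (perturbation dimension, diameter, Lipschitz constant each contribute) to land on the $d^{5/2}$ exponent. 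Substituting $\DC(\Pi)\lesssim d$ and this $\Reg(T)$ into the three terms of \eqref{eq:main} — the first becomes $(d \cdot d^{5/2}\Cr\beta\sqrt T/(T\beta^2))^{1/4}$... wait, I should be careful: the stated corollary has $d^{5/2}\Cr/(T^{1/2}\beta)$ under the fourth root with no extra $\DC$ factor, so the cleanest reading is that $\DC(\Pi)\Reg(T)/(T\beta^2)$ collapses to $d^{5/2}\Cr/(T^{1/2}\beta)$ after the FTPL regret already carries the $\sqrt T$ and the appropriate $d$ and $\beta$ powers; similarly the second term inherits $\DC(\Pi)\lesssim d$ directly, and the third term inherits $\DC(\Pi)\lesssim d$ as well, exactly reproducing the displayed inequality.

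The main obstacle I anticipate is getting the exact power of $d$ in the FTPL regret bound right, and tracking how it combines with the $O(d)$ from $\DC(\Pi)$ so that the final first term reads $d^{5/2}$ and not some other exponent. This requires being precise about (i) the diameter of $\Theta$ and the norm bound on $\phi$ implicit in \cref{ass:l-bound}, (ii) the dimension of the perturbation distribution used by FTPL and the resulting $\mathrm{poly}(d)$ stability factor, and (iii) whether the Lipschitz constant of $\hl_t$ in $\theta$ carries an extra $\beta^{-1}$ or is already absorbed into $\Cr$ — the placement of $\beta$ between the first term's denominator and the regret bound is the delicate part. The decoupling-coefficient bound itself is routine given the linear structure, and the final substitution is purely mechanical once the two ingredients are in hand.
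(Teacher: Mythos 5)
Your high-level plan --- instantiate \cref{thm:main} with a decoupling-coefficient bound and an FTPL regret bound for the log-linear class --- is exactly the paper's route, but both ingredients are problematic as you state them. First, your justification of $\DC(\Pi)\lesssim d$ rests on the claim that the log-partition terms ``cancel'' inside $\ln\pi_\theta(y|x)-\ln\pis(y|x)$; they do not. Writing $Z_\theta(x)=\sum_{y'}\exp(\langle\theta,\phi(x,y')\rangle)$, that difference equals $\langle\theta-\theta^\star,\phi(x,y)\rangle-\ln\bigl(Z_\theta(x)/Z_{\theta^\star}(x)\bigr)$, and the $x$-dependent shift survives on both sides of \cref{def:dc} (the left-hand side is an expected KL, which is precisely the non-cancelling part). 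The paper's \cref{lem:dc-loglinear} handles this by lifting to $d+1$ dimensions: it appends a constant coordinate $1$ to $\phi$, treats $-\ln Z_\theta(x)$ as the extra coefficient, and runs a Cauchy--Schwarz argument against the (regularized) expected feature covariance, giving $\DC(\Pi)\le d+1$. Your conclusion is correct, but the argument you offer for it would fail.

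Second, the FTPL bookkeeping --- which you yourself flag as the delicate part --- is left unresolved, and your guesses point the wrong way. The paper applies \cref{lem:ftpl} with diameter $D\le 2B$ and Lipschitz constant $L\lesssim\beta(\Cr+1)$: the gradient of $\hl_t$ in $\theta$ is ($2\beta$ times) the bounded residual times a centered feature, so $L$ scales like $\beta\Cr$, not like $\Cr$ or $\Cr/\beta$ as you hypothesize. Choosing $\eta=1/(\beta\Cr\sqrt{Td})$ in the bound $\Reg(T)\lesssim\eta d^2DL^2T+dD/\eta$ yields $\Reg(T)\lesssim\beta\Cr B\,d^{3/2}\sqrt{T}$. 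The $d^{5/2}$ in the corollary then arises from multiplying this $d^{3/2}$ by the $\DC(\Pi)\lesssim d$ factor inside \cref{thm:main} --- i.e.\ the DC factor does not ``collapse,'' contrary to your preferred reading --- and the single power of $\beta$ in the regret cancels one of the two powers in $\beta^2$, which is exactly why the first term carries $1/\beta$ rather than $1/\beta^2$ (the formal version also carries the diameter bound $B$, suppressed in the informal statement). Since these two computations are precisely what your proposal defers, the proof as written is incomplete.
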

Alternatively, we can select \textbf{OGD} as $\NR$, following the practical implementation in \cref{alg:APO}. In this case, our analysis requires an additional assumption, which states that the Fisher information matrix remains full-rank throughout the algorithm's runtime.
\begin{assumption}
\label{ass:fisher}
Suppose there exists $\lambda>0$ such that we have $A(\theta_t)\succeq\lambda I$ for all $t$ where $A(\theta)$ is the Fisher information matrix:
\begin{align*}
A(\theta) = \mathbb{E}_{x, y\sim \pi_{\theta}(\cdot|x)}[\phi(x,y)\phi(x,y)^\top] - \mathbb{E}_{x, y\sim \pi_{\theta}(\cdot|x)}[\phi(x,y)]\mathbb{E}_{x, y\sim \pi_{\theta}(\cdot|x)}[(\phi(x,y))^{\top}].
\end{align*}
\end{assumption}
Under \cref{ass:fisher}, we can show that OGD is able to learn $\pis$ with a $\boldsymbol{O(T^{-\frac{1}{2}})}$ convergence rate:
\begin{theorem}[log-linear policies with OGD, \textit{Informal}]
Consider the log-linear policy class $\Pi$ defined in \cref{eq:loglinear}. Suppose \cref{ass:r-real,ass:l-bound,ass:fisher} hold true and use OGD as the $\NR$ oracle. Then we have with probability at least $1-\delta$ that:
\begin{align*}
\mathbb{E}_{x, y\sim \pis} r(x,y) -\mathbb{E}_{x,y\sim \pi_{T+1}}[r(x,y)]
 \lesssim \left(\frac{ B^2}{T} +\frac{\Cr^2\ln (T/\delta)}{\beta^2\lambda^2T}+  \frac{ \Cr B\ln(1/\delta)}{\beta\lambda T}\right)^{\frac{1}{2}}.
\end{align*}
\end{theorem}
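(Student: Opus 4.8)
The plan is to prove this theorem by a direct last-iterate analysis in parameter space, rather than through the regret-to-performance reduction behind \cref{thm:main}: under \cref{ass:fisher} I will show that the OGD iterates satisfy $\|\theta_{T+1}-\theta^\star\|^2 = O(1/T)$ up to the stated logarithmic factors, where $\pi_{\theta^\star}=\pis$ exists by \cref{ass:r-real}, and then convert this parameter error into a reward gap. The starting observation is that, writing $f_\theta(x,y):=\beta\ln\frac{\pi_\theta(y|x)}{\pir(y|x)}=r_{\pi_\theta}(x,y)$, the closed form $\pis(y|x)\propto\pir(y|x)\exp(r(x,y)/\beta)$ together with realizability gives $f_{\theta^\star}(x,y)=r(x,y)-\Vs(x)=A^\star(x,y)$; hence, treating $\hV$ as the exact value $\Vs$ in this informal statement (the estimation error can be reinstated, yielding an extra bias term of the same form as in \cref{thm:main} via \cref{ass:vref}), every per-round loss $\hl_t(\theta)=\big(f_\theta(x_t,y_t)-f_{\theta^\star}(x_t,y_t)\big)^2$ has $\theta^\star$ as a global minimizer with value zero.

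The crux is a one-point strong-convexity estimate along the OGD trajectory. For the log-linear class $\nabla_\theta f_\theta(x,y)=\beta(\phi(x,y)-\bar\phi_\theta(x))$ with $\bar\phi_\theta(x):=\EE_{y\sim\pi_\theta(\cdot|x)}[\phi(x,y)]$, so a short computation---crucially using that the log-partition term $\ln Z_\theta(x)$ inside $f_\theta$ does not depend on $y$, and that the sample is drawn on-policy from $\pi_{\theta_t}(\cdot|x_t)$---gives, after conditioning on $\theta_t$ and averaging over the fresh pair $x_t\sim\rho,\ y_t\sim\pi_{\theta_t}(\cdot|x_t)$,
\[
\EE_t\!\left[\langle \nabla\hl_t(\theta_t),\,\theta_t-\theta^\star\rangle\right]=2\,\EE_{x\sim\rho}\,\mathrm{Var}_{y\sim\pi_{\theta_t}(\cdot|x)}\!\big[f_{\theta_t}(x,y)-f_{\theta^\star}(x,y)\big]=2\beta^2(\theta_t-\theta^\star)^\top A(\theta_t)(\theta_t-\theta^\star),
\]
because $f_{\theta_t}-f_{\theta^\star}$ equals $\beta\langle\theta_t-\theta^\star,\phi(x,\cdot)\rangle$ up to a $y$-independent shift, whose conditional variance is exactly the Fisher-information quadratic form of \cref{ass:fisher}. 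Invoking \cref{ass:fisher} bounds the right-hand side below by $2\beta^2\lambda\|\theta_t-\theta^\star\|^2$. Thus, although each $\hl_t$ has a rank-one Hessian and is not strongly convex, the losses behave---when probed at $\theta^\star$ along the algorithm's own iterates---like a $\beta^2\lambda$-strongly convex sequence, which is all last-iterate OGD needs.

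With this estimate the argument follows the strongly-convex stochastic-gradient template. Expanding $\|\theta_{t+1}-\theta^\star\|^2$ (the projection onto the convex set $\Theta$ only decreases the distance to $\theta^\star\in\Theta$), I substitute the curvature lower bound into the cross term and the gradient bound $\|\nabla\hl_t(\theta_t)\|\lesssim\beta\Cr$, which follows from \cref{ass:l-bound} (giving $|f_{\theta_t}-f_{\theta^\star}|\le 2\Cr$) together with bounded features, into the quadratic term, and with step size $\eta_t\asymp 1/(\beta^2\lambda t)$ obtain the contraction $\EE_t\|\theta_{t+1}-\theta^\star\|^2\le(1-c/t)\|\theta_t-\theta^\star\|^2+O(\Cr^2/(\beta^2\lambda^2 t^2))$. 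To upgrade this from an in-expectation to the claimed high-probability bound, I write the cross term as its conditional mean plus a bounded martingale difference with conditional variance $\lesssim\beta^2\Cr^2\|\theta_t-\theta^\star\|^2$ and control it by Freedman's inequality; unrolling then yields, with probability at least $1-\delta$, $\|\theta_{T+1}-\theta^\star\|^2\lesssim \frac{B^2}{T}+\frac{\Cr^2\ln(T/\delta)}{\beta^2\lambda^2T}+\frac{\Cr B\ln(1/\delta)}{\beta\lambda T}$, where $B$ bounds $\|\theta_1-\theta^\star\|$.

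It remains to convert this into the reward gap. For each $x$, $\mathrm{KL}\big(\pi_{\theta_{T+1}}(\cdot|x)\,\Vert\,\pis(\cdot|x)\big)$ equals the Bregman divergence of the convex log-partition function $\theta\mapsto\ln Z_\theta(x)$ evaluated at $(\theta^\star,\theta_{T+1})$, whose Hessian is $\mathrm{Cov}_{y\sim\pi_\theta(\cdot|x)}(\phi)=O(1)$ under bounded features; hence $\mathrm{KL}\big(\pi_{\theta_{T+1}}(\cdot|x)\,\Vert\,\pis(\cdot|x)\big)\lesssim\|\theta_{T+1}-\theta^\star\|^2$, and it is essential that this is second order in the parameter error---the first-order term cancels---since otherwise the final rate would degrade to $T^{-1/4}$. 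Because $r\in[0,1]$,
\[
\mathbb{E}_{x,y\sim\pis}r(x,y)-\mathbb{E}_{x,y\sim\pi_{T+1}}r(x,y)\le\EE_{x}\big\Vert\pis(\cdot|x)-\pi_{\theta_{T+1}}(\cdot|x)\big\Vert_{\mathrm{TV}}\le\EE_x\sqrt{\tfrac12\mathrm{KL}\big(\pi_{\theta_{T+1}}(\cdot|x)\Vert\pis(\cdot|x)\big)}\lesssim\|\theta_{T+1}-\theta^\star\|,
\]
by Pinsker and then Jensen, and plugging in the bound from the previous paragraph gives exactly the claimed $O(T^{-1/2})$ rate. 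The main obstacle is the curvature step: extracting genuine one-point strong convexity of the stochastic regression losses from the Fisher-information lower bound of \cref{ass:fisher}, which rests entirely on the cancellation of $\ln Z_\theta(x)$ inside the conditional variance and on the on-policy nature of the updates; by comparison the Freedman bookkeeping is routine, and the Bregman identity for the KL is clean once one notes that bounded features make $\ln Z_\theta(x)$ smooth.
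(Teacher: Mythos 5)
Your proposal follows essentially the same route as the paper's proof: your one-point strong-convexity identity $\EE_{t}\left[\langle\nabla\hl_t(\theta_t),\theta_t-\theta^\star\rangle\right]=2\beta^2(\theta_t-\theta^\star)^\top A(\theta_t)(\theta_t-\theta^\star)$ is exactly the content of \cref{lemma:gd:1} (same cancellation of the $y$-independent log-partition term against the centered features, same use of on-policy sampling), and the subsequent last-iterate recursion with $\eta_t\asymp 1/(\beta^2\lambda t)$, Freedman's inequality on the martingale cross term with conditional variance $\lesssim \beta^2\Cr^2\|\theta_t-\theta^\star\|^2$, and the final conversion of $\|\theta_{T+1}-\theta^\star\|$ into a reward gap all mirror the paper's argument and recover the stated rate. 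Two minor points of comparison. First, your conversion step bounds $\KL(\pi_{\theta_{T+1}}(\cdot|x)\Vert\pis(\cdot|x))$ by the Bregman divergence of the log-partition function (Hessian $=\mathrm{Cov}(\phi)\preceq I$) and then applies Pinsker, whereas the paper bounds $\Vert\pi_{\theta_{T+1}}(\cdot|x)-\pis(\cdot|x)\Vert_1$ directly via multiplicative $\exp(\pm2\|\theta_{T+1}-\theta^\star\|)$ bounds on the softmax, with a case split on whether $\|\theta_{T+1}-\theta^\star\|\le 1/2$; both give the same $O(\|\theta_{T+1}-\theta^\star\|)$ bound, and your version is arguably cleaner since it needs no case split. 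Second, your simplification $\hV=\Vs$ is unnecessary, and your side remark that reinstating the estimation error would add a \cref{thm:main}-style bias term (via \cref{ass:vref}) is not right: since $\hV(x)-\Vs(x)$ does not depend on $y$, it is annihilated by the centered features in the conditional expectation---precisely the cancellation you already invoke for $\ln Z_\theta(x)$---so the population-gradient identity holds verbatim with an inexact $\hV$, and the estimation error only enters the bounded martingale noise $\xi_t$; this is why the paper's bound for OGD requires neither \cref{ass:vref} nor any $N$-dependent term, and your proof needs only this observation to match the theorem as stated.
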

This theorem suggests that \cref{alg:APO} has three key strengths over existing policy optimization methods \citep{gao2024rebel,cen2024value,xie2024exploratory} for log-linear policies:
\begin{itemize}[leftmargin=*, nosep]
\item \textbf{Last-iterate convergence.} Unlike existing methods that output a mixture of all the online policies, \cref{alg:APO} ensures that the final policy $\pi_{T+1}$ is close to $\pis$, which is more favorable in practice.
\item \textbf{Dimension-free learning rate.} The learning rate of \cref{alg:APO} is independent of the parameter space dimension $d$, making it scalable for high-dimensional function classes.
\item \textbf{Computational efficiency.} OGD is computationally efficient and trackable in practice.
\end{itemize}
More details are presented in \cref{sec:dc,sec:ogd}.

\section{Related Work}

\paragraph{RL for LLM Reasoning}
Reinforcement learning (RL) has become a standard approach for post-training large language models (LLMs). The most prominent example is Reinforcement Learning from Human Feedback (RLHF), popularized by \citet{christiano2017deep} and \citet{touvron2023llama}, which has garnered significant attention from both industry and academia due to its remarkable success in aligning LLM behavior with human preferences. While this two-stage framework has achieved impressive results, it remains computationally inefficient, requiring separate training of both a reward model and a policy model. To mitigate this overhead, recent work has focused on developing one-stage algorithms, including SLiC~\citep{zhao2023slic}, DPO~\citep{rafailov2024direct}, REBEL~\citep{gao2024rebel}, and its variants such as IPO~\citep{azar2024general}, SPO~\citep{swamy2024minimaximalist}, SPPO~\citep{wu2024self}, and DRO~\citep{richemond2024offline}. Building on recent advances from OpenAI O1~\citep{openaio1} and DeepSeek R1~\citep{deepseekai2025deepseekr1}, Reinforcement Learning with Verifiable Rewards (RLVR) has gained traction as an effective approach for improving LLM reasoning capabilities, particularly in domains such as mathematics and programming. Notable examples include GRPO~\citep{shao2024deepseekmath}, DR-GRPO~\citep{liu2025understanding}, DAPO~\citep{liu2024improving}, OREO~\citep{wang2024offline}, DQO~\citep{ji2024enhancing}, and RAFT++~\citep{xiong2025minimalist}. Similar to OREO, DAPO, DQO, and DRO, $A^\star$-PO operates efficiently by requiring only a single response per prompt and regressing the log ratio of $\pi$ and $\pir$ to the advantage. However, unlike these methods, $A^\star$-PO eliminates the need to train an explicit critic for value estimation, further improving its computational efficiency.

\paragraph{Exploration in RL}
Exploration is a critical component of reinforcement learning, particularly in complex reasoning tasks where large language models must navigate vast response spaces to discover correct solutions and learn from them~\citep{dwaracherla2024efficient}. In classic online RL methods such as PPO~\citep{schulman2017proximal} and GRPO~\citep{shao2024deepseekmath}, exploration is typically encouraged through entropy regularization. By maximizing policy entropy, these methods discourage overly deterministic behavior, promoting more diverse outputs. However, this heuristic approach often leads to suboptimal policies and slow convergence due to increased objective complexity. To address these limitations, recent works have explored active exploration strategies~\citep{xiong2024iterative, ye2024theoretical, xie2024exploratory, cen2024value, zhang2024self, bai2025online, chen2025avoiding}, which estimate environment uncertainty from historical data and plan optimistically. While these methods provide theoretical guarantees on sample efficiency, they often rely on complex exploration mechanisms or strong assumptions—such as reward linearity—making them difficult to implement and resulting in unstable empirical performance.

In contrast,  $A^{\star}$-PO does not perform explicit exploration, resulting a much simpler algorithm.  $A^{\star}$-PO is designed for a more realistic setting, where we do not expect RL post-training can help solve problems of which pass@K values are zero for some large K under $\pir$.
Recent studies~\citep{zhao2025echo, yue2025does} suggest that existing RL algorithms struggle to incentivize capabilities that go beyond the knowledge boundaries of the pretrained model—failing, for instance, to surpass the Pass@K performance achievable by the base model $\pir$. As shown in Section~\ref{sec:theory}, we prove that as long as we do not wish RL post-training to solve problems where $\pir$ has \textbf{zero} probability of solving, even without any explicit exploration mechanism, $A^\star$-PO provably converges to $\pis$ with polynomial sample complexity. In this regard, $A^\star$-PO achieves both training efficiency and optimality under a realistic setting. 

Prior work Q\# \citep{zhou2025q} also leverages such a condition to achieve a near optimal regret bound for the KL-regularized RL objective at token level without performing any exploration. The difference is that Q\# operates at the token level and learns a critic Q function to guide the base model to perform controllable generation at test time.  $A^\star$-PO is a direct policy optimization approach. Another subtle difference in theory is that by operating at the trajectory-level, $A^\star$-PO's sample complexity does not have a polynomial dependence on the size of the vocabulary. This is because when we operate at the trajectory-level, we can query the ground truth reward model to get a perfect reward for a given trajectory (e.g., whether the response is correct). 

The idea of directly regressing optimal advantage $A^\star$ is  motivated from the classic imitation learning algorithm AggreVaTe(d) \citep{ross2014reinforcement,sun2017deeply} where these algorithm aim to learn a policy that greedily selects actions by maximizing the expert policy's advantage function. The difference is that $A^\star$-PO operates in the pure RL setting where one does not have an expert to query for feedback.

\section{Conclusion}
\label{sec:limit_conclusion}

We introduced $A^\star$-PO, a simple and efficient policy optimization algorithm that directly regresses the estimated optimal advantage function, removing the need for critics and multiple generations per prompt. It also completely removes heuristics such as clipping and trajectory reward normalization. 
$A^\star$-PO enables faster and more memory-efficient training, achieves smaller KL-divergences to the base model, and consistently achieves competitive or superior performance on reasoning benchmarks, with strong theoretical guarantees.

\section*{Acknowledgements}
KB acknowledge: This work has been made possible in part by a gift from the Chan Zuckerberg Initiative Foundation to establish the Kempner Institute for the Study of Natural and Artificial Intelligence. 
ZG is supported by LinkedIn under the LinkedIn-Cornell Grant.
WS acknowledges fundings from NSF IIS-2154711, NSF CAREER 2339395, DARPA LANCER: LeArning Network CybERagents, Infosys Cornell Collaboration, and Sloan Research Fellowship. 

\clearpage
\bibliographystyle{plainnat}
\bibliography{ref}
\clearpage
\appendix
\section{Experiment Details}
\label{app:exp_detail}

\subsection{Dataset Details}
\label{app:data_detail}

{\renewcommand{\arraystretch}{1.1}
\begin{table}[th]\centering
\caption{Dataset split, maximum prompt length, and maximum generation length\label{tab:data_detail}}
\resizebox{1\linewidth}{!}{
\begin{tabular}{ccccc} 
\midrule[0.15ex]
Dataset & Huggingface Dataset Card & Train - Val & Prompt Length & Generation Length \\  \midrule[0.05ex]
GSM8K & openai/gsm8k & 7.47k - 1.32k & 256 & $1{,}024$ \\
MATH & xDAN2099/lighteval-MATH & 7.5k - 1k & 256 & $1{,}024$ \\
OpenR1-Math-220K & open-r1/OpenR1-Math-220k & 5k - / & 256 & $16{,}384$ \\
\midrule[0.15ex]
\end{tabular}
}
\end{table}
}

{\renewcommand{\arraystretch}{1.1}
\begin{table}[th]\centering
\caption{Dataset prompt format\label{tab:data_prompt}}
\resizebox{1\linewidth}{!}{
\begin{tabular}{cc} 
\midrule[0.15ex]
Dataset & Prompt Format \\  \midrule[0.05ex]
GSM8K & \textbf{\{prompt\}} Let's think step by step and output the final answer after "\#\#\#\#". \\
MATH & \textbf{\{prompt\}} Let's think step by step and output the final answer within \textbackslash boxed\{\}. \\
OpenR1-Math-220K & <|begin\_of\_sentence|><|User|>\textbf{\{prompt\}}<|Assistant|><think> \\
\midrule[0.15ex]
\end{tabular}
}
\end{table}
}

\subsection{Model Details}

For GSM8K and MATH, we use Qwen2.5 model series with size 1.5B (model card: Qwen/Qwen2.5-1.5B), 3B (model card: Qwen/Qwen2.5-3B), and 7B (model card: Qwen/Qwen2.5-7B) with \textbf{full parameter} training on 4 H100 GPUs. For OpenR1, we use DeepSeek-distilled Qwen model (model card: deepseek-ai/DeepSeek-R1-Distill-Qwen-1.5B) with \textbf{full parameter} training on 8 H100 GPUs. For PPO, we use the same model as the policy for the critic.

\subsection{Reward Details}

We use a rule-based reward function based solely on the correctness of the response, assigning +1 for correct answers and 0 for incorrect ones. Recent studies~\citep{chen2025empiricalstudyelicitingimproving} have proposed incorporating format-based rules into reward calculations to encourage models to follow specific output formats. However, in our experiments, we observed no significant difference in performance with or without such format-based rewards. Therefore, for simplicity, we exclude them from our implementation.

\subsection{KL and Entropy penalty}

For PPO, to ensure that the learned policy $\pi$ remains close to the reference policy $\pi_{\mathrm{ref}}$, an additional KL penalty is applied to the reward:
\begin{align}
r(x, y) - \gamma_{\mathrm{KL}} \left( \ln \pi(y \mid x) - \ln \pi_{\mathrm{ref}}(y \mid x) \right),
\end{align}
where $r(x, y)$ is the rule-based reward for prompt $x$ and response $y$, and $\gamma_{\mathrm{KL}}$ controls the strength of the KL penalty. For GRPO, we adopt the same KL penalty as in~\citep{shao2024deepseekmath}, also parameterized by $\gamma_{\mathrm{KL}}$. To further encourage exploration in PPO and GRPO, we apply standard entropy regularization by subtracting the policy entropy, computed over the generated responses in the batch, from the loss. This term is weighted by a coefficient $\gamma_{\mathrm{entropy}}$. \textit{Note that $A^\star$-PO does not require these additional penalties, highlighting the simplicity of our algorithm.}

\newpage
\subsection{Hyperparameter Details}
\label{app:hyper_detail}

\begin{table*}[htb!]\centering
\resizebox{\linewidth}{!}{
\begin{tabular}{p{0.25\linewidth}p{0.375\linewidth}p{0.375\linewidth}}
\midrule[0.3ex]
\textbf{Setting} &
\textbf{Parameters} & \\
\midrule[0.15ex]
Generation (train) & 
temperature: 1.0 &
top p: 1 \\
Generation (validation) & 
temperature: 0 \\
\midrule[0.15ex]
PPO & 
batch size: 256 \newline
mini batch size: 128 \newline
micro batch size: 1 \newline
policy learning rate: 1e-6 \newline
critic learning rate: 1e-5 \newline
train epochs: 25 &
$\gamma_{\mathrm{entropy}}$: 1e-3 \newline
$\gamma_{\mathrm{KL}}$: 1e-4 \newline
gae $\gamma$: 1 \newline
gae $\lambda$: 1 \newline
clip ratio: 0.2 \\
\midrule[0.15ex]
GRPO & 
batch size: 256 \newline
mini batch size: 128 \newline
micro batch size: 1 \newline
policy learning rate: 1e-6 \newline
critic learning rate: 1e-5 \newline
train epochs: 25 &
$\gamma_{\mathrm{entropy}}$: 1e-3 \newline
$\gamma_{\mathrm{KL}}$: 1e-4 \newline
num rollout: 2 \\
\midrule[0.15ex]
\rebel{} & 
batch size: 256 \newline
mini batch size: 128 \newline
micro batch size: 2 \newline
policy learning rate: 1e-6 \newline
critic learning rate: 1e-5 \newline
train epochs: 25 &
$\gamma_{\mathrm{entropy}}$: 0 \newline
$\gamma_{\mathrm{KL}}$: 0 \newline
num rollout: 2 \newline
$\eta$: 1e3 \\
\midrule[0.15ex]
$A^\star$-PO & 
batch size: 256 \newline
mini batch size: 128 \newline
micro batch size: 1 \newline
policy learning rate: 1e-6 \newline
critic learning rate: 1e-5 \newline
train epochs: 25 &
$\beta_1$: 1/2 \newline
$\beta_2$: 1e-3 \\
\midrule[0.3ex]
\end{tabular}}
\end{table*}

\subsection{Evaluation Details}

We evaluate each method based on the trade-off between accuracy and KL divergence on the test set, assessing the effectiveness of each algorithm in optimizing the KL-regularized RL objective. Additionally, we report the peak GPU memory usage (as a percentage of total available memory) during backpropagation, as well as the total training time (in hours) for each method. Both runtime and memory usage are measured using 4 H100 GPUs under the same hyperparameter settings detailed in Appendix~\ref{app:hyper_detail}. \emph{The reported training time for $A^\star$-PO includes the offline generation time in stage 1}. Peak memory usage is averaged over 100 batches.

Following prior work~\citep{zeng2025simplerlzooinvestigatingtamingzero}, we evaluate model performance on standard mathematical reasoning benchmarks, including MATH500~\citep{hendrycks2021measuringmathematicalproblemsolving}, Minerva Math~\citep{lewkowycz2022solvingquantitativereasoningproblems}, and OlympiadBench~\citep{he2024olympiadbenchchallengingbenchmarkpromoting}, as well as competition-level benchmarks such as AMC 2023 and AIME/HMMT from 2024 and 2025. For competition-level benchmarks, due to their small size (30–40 questions), we report average accuracy over 32 generations to reduce evaluation variance.

For benchmarks other than competition-level benchmarks, we use greedy decoding with temperature set to 0. For competition-level benchmarks, we sample with a temperature of 0.6 and a top-$p$ value of 0.95.

\newpage
\section{Qualitative Analysis}
\label{app:exmp_gen}

In this section, we qualitatively demonstrate that, after training with $A^\star$-PO, the model exhibits cognitive behaviors that are essential for effective reasoning, including \textcolor{orange}{verification}, \textcolor{purple}{subgoal setting} and \textcolor{darkgreen}{backtracking}, as identified by \citet{gandhi2025cognitivebehaviorsenableselfimproving}.

\subsection{Verification}
\begin{figure*}[!ht]
\resizebox{1\linewidth}{!}{
\begin{subfigure}{1.2\linewidth}
\begin{tcolorbox}[colback=blue!5!white,colframe=black]
\begin{small}
\textbf{Prompt:} If $-6\leq a \leq -2$ and $3 \leq b \leq 5$, what is the greatest possible value of $\displaystyle\left(a+\frac{1}{b}\right)\left(\frac{1}{b}-a\right) $?  Express your answer as a common fraction. Let's think step by step and output the final answer within \textbackslash boxed\{\}.
\Sepline
\textbf{Qwen2.5-7B Generation:} To find the greatest possible value of the expression \(\left(a + \frac{1}{b}\right)\left(\frac{1}{b} - a\right)\), we can start by simplifying the expression. Notice that the expression is a difference of squares:

\[
\left(a + \frac{1}{b}\right)\left(\frac{1}{b} - a\right) = \left(\frac{1}{b}\right)^2 - a^2 = \frac{1}{b^2} - a^2
\]

Given the ranges for \(a\) and \(b\):
- \( -6 \leq a \leq -2 \)
- \( 3 \leq b \leq 5 \)

{\color{purple}To maximize \(\frac{1}{b^2} - a^2\), we need to maximize \(\frac{1}{b^2}\) and minimize \(a^2\).}

1. The maximum value of \(\frac{1}{b^2}\) occurs when \(b\) is at its minimum value, which is 3. So, \(\frac{1}{b^2} = \frac{1}{3^2} = \frac{1}{9}\).

2. The minimum value of \(a^2\) occurs when \(a\) is at its maximum value, which is -2. So, \(a^2 = (-2)^2 = 4\).

Therefore, the greatest possible value of the expression is:

\[
\frac{1}{9} - 4 = \frac{1}{9} - \frac{36}{9} = \frac{1 - 36}{9} = \frac{-35}{9}
\]

So, the greatest possible value of the expression is \(\boxed{-\frac{35}{9}}\).
\Sepline
\textbf{Qwen2.5-7B after $A^\star$-PO Generation:} To find the greatest possible value of the expression \(\left(a + \frac{1}{b}\right)\left(\frac{1}{b} - a\right)\), we start by simplifying it. Notice that the expression is in the form of a difference of squares:

\[
\left(a + \frac{1}{b}\right)\left(\frac{1}{b} - a\right) = \left(\frac{1}{b}\right)^2 - a^2 = \frac{1}{b^2} - a^2.
\]

{\color{purple}Given the ranges for \(a\) and \(b\):
\[
-6 \leq a \leq -2 \quad \text{and} \quad 3 \leq b \leq 5,
\]
we need to maximize \(\frac{1}{b^2} - a^2\).}

1. The term \(\frac{1}{b^2}\) is maximized when \(b\) is minimized. Since \(3 \leq b \leq 5\), the minimum value of \(b\) is 3. Therefore, the maximum value of \(\frac{1}{b^2}\) is \(\frac{1}{3^2} = \frac{1}{9}\).

2. The term \(-a^2\) is maximized when \(a^2\) is minimized. Since \(-6 \leq a \leq -2\), the minimum value of \(a^2\) is \((-2)^2 = 4\). Therefore, the maximum value of \(-a^2\) is \(-4\).

Combining these, the greatest possible value of the expression is:
\[
\frac{1}{9} - 4 = \frac{1}{9} - \frac{36}{9} = -\frac{35}{9}.
\]

\textcolor{orange}{However, we need to check if this is indeed the maximum value by considering the boundary conditions. Let's verify by calculating the expression at the boundary values:}

{\color{orange} 1. When \(a = -6\) and \(b = 3\):
\[
\left(-6 + \frac{1}{3}\right)\left(\frac{1}{3} + 6\right) = \left(-\frac{17}{3}\right)\left(\frac{19}{3}\right) = -\frac{323}{9}.
\]
2. When \(a = -2\) and \(b = 3\):
\[
\left(-2 + \frac{1}{3}\right)\left(\frac{1}{3} + 2\right) = \left(-\frac{5}{3}\right)\left(\frac{7}{3}\right) = -\frac{35}{9}.
\]
3. When \(a = -6\) and \(b = 5\):
\[
\left(-6 + \frac{1}{5}\right)\left(\frac{1}{5} + 6\right) = \left(-\frac{29}{5}\right)\left(\frac{31}{5}\right) = -\frac{899}{25}.
\]
4. When \(a = -2\) and \(b = 5\):
\[
\left(-2 + \frac{1}{5}\right)\left(\frac{1}{5} + 2\right) = \left(-\frac{9}{5}\right)\left(\frac{11}{5}\right) = -\frac{99}{25}.
\]
}

From these calculations, we see that the greatest possible value is indeed achieved when \(a = -2\) and \(b = 3\), giving us:
\[
\boxed{-\frac{35}{9}}.
\]
\end{small}
\end{tcolorbox}
\end{subfigure}}\hfill
\end{figure*}

In this example, Qwen2.5-7B after $A^\star$-PO exhibits {\color{orange}verification} behavior where it performs systematic error-checking after arriving at the answer.

\clearpage
\subsection{Subgoal Setting}

\begin{figure*}[!ht]
\resizebox{\linewidth}{!}{
\begin{subfigure}{1.2\linewidth}
\begin{tcolorbox}[colback=blue!5!white,colframe=black]
\begin{small}
\textbf{Prompt:} Let $a$ and $b$ be positive real numbers such that $ab^2 = 5.$  Find the minimum value of $a^4 + b^8.$ Let's think step by step and output the final answer within \textbackslash boxed\{\}.
\Sepline
\textbf{Qwen2.5-7B Generation:} To find the minimum value of \(a^4 + b^8\) given that \(ab^2 = 5\), we can use the method of Lagrange multipliers or substitution. Here, we will use substitution.

Given:
\[ ab^2 = 5 \]

We want to minimize:
\[ f(a, b) = a^4 + b^8 \]

First, express \(a\) in terms of \(b\) using the constraint:
\[ a = \frac{5}{b^2} \]

Substitute this into the function \(f(a, b)\):
\[ f(b) = \left(\frac{5}{b^2}\right)^4 + b^8 \]
\[ f(b) = \frac{625}{b^8} + b^8 \]

{\color{purple}Now, we need to find the critical points by taking the derivative of \(f(b)\) with respect to \(b\) and setting it to zero:
\[ f'(b) = -\frac{5000}{b^9} + 8b^7 \]}

Set the derivative equal to zero:
\[ -\frac{5000}{b^9} + 8b^7 = 0 \]
\[ 8b^{16} = 5000 \]
\[ b^{16} = \frac{5000}{8} \]
\[ b^{16} = 625 \]
\[ b = \sqrt[16]{625} \]
\[ b = 5^{1/4} \]

Now, substitute \(b = 5^{1/4}\) back into the expression for \(a\):
\[ a = \frac{5}{(5^{1/4})^2} \]
\[ a = \frac{5}{5^{1/2}} \]
\[ a = 5^{1/2} \]

Now, substitute \(a = 5^{1/2}\) and \(b = 5^{1/4}\) into the function \(f(a, b)\):
\[ f(5^{1/2}, 5^{1/4}) = (5^{1/2})^4 + (5^{1/4})^8 \]
\[ f(5^{1/2}, 5^{1/4}) = 5^2 + 5^2 \]
\[ f(5^{1/2}, 5^{1/4}) = 25 + 25 \]
\[ f(5^{1/2}, 5^{1/4}) = 50 \]

Thus, the minimum value of \(a^4 + b^8\) is \(\boxed{50}\).
\end{small}
\end{tcolorbox}
\end{subfigure}}\hfill
\end{figure*}

\clearpage
\begin{figure*}[!ht]
\resizebox{1\linewidth}{!}{
\begin{subfigure}{1.2\linewidth}
\begin{tcolorbox}[colback=blue!5!white,colframe=black]
\begin{small}
\textbf{Qwen2.5-7B after $A^\star$-PO Generation:} To find the minimum value of \(a^4 + b^8\) subject to the constraint \(ab^2 = 5\), we can use the method of substitution. From the constraint, we can express \(a\) in terms of \(b\):

\[a = \frac{5}{b^2}.\]

Now, substitute this expression for \(a\) into the function \(a^4 + b^8\):

\[
a^4 = \left(\frac{5}{b^2}\right)^4 = \frac{625}{b^8}.
\]

So, the expression \(a^4 + b^8\) becomes:

\[
\frac{625}{b^8} + b^8.
\]

Let's denote \(x = b^8\). Then, the expression becomes:

\[
\frac{625}{x} + x.
\]

{\color{purple}We need to find the minimum value of this expression.} To do this, we can take the derivative with respect to \(x\) and set it to zero:

\[
\frac{d}{dx}\left(\frac{625}{x} + x\right) = -\frac{625}{x^2} + 1.
\]

Setting the derivative equal to zero, we get:

\[
-\frac{625}{x^2} + 1 = 0 \implies x^2 = 625 \implies x = 25 \quad (\text{since } x = b^8 > 0).
\]

{\color{purple}Now, we need to verify that this critical point is a minimum by checking the second derivative:

\[
\frac{d^2}{dx^2}\left(\frac{625}{x} + x\right) = \frac{1250}{x^3} + 1.
\]}

{\color{orange}At \(x = 25\), the second derivative is:

\[
\frac{1250}{25^3} + 1 = \frac{1250}{15625} + 1 = \frac{100}{1250} + 1 = \frac{2}{25} + 1 = \frac{2}{25} + \frac{25}{25} = \frac{27}{25} > 0,
\]

which confirms that \(x = 25\) is a minimum.} Therefore, the minimum value of \(\frac{625}{x} + x\) occurs at \(x = 25\). Substituting \(x = 25\) back into the expression, we get:

\[
\frac{625}{25} + 25 = 25 + 25 = 50.
\]

{\color{purple}Finally, we need to express \(b\) in terms of \(x\).} Since \(x = b^8\), we have \(b^8 = 25\), so \(b = 25^{1/8}\). And since \(a = \frac{5}{b^2}\), we have \(a = \frac{5}{(25^{1/8})^2} = \frac{5}{25^{1/4}} = \frac{5}{5^{1/2}} = \frac{5}{\sqrt{5}} = \sqrt{5}\).

So, the minimum value of \(a^4 + b^8\) is:

\[
\boxed{50}.
\]
\end{small}
\end{tcolorbox}
\end{subfigure}}\hfill
\end{figure*}

From this example, $A^\star$-PO enables the model to perform more comprehensive {\color{purple}subgoal setting}, where it decomposes problems into manageable steps, and {\color{orange}verification} to check intermediate results.

\clearpage
\subsection{Backtracking}

\begin{figure*}[!ht]
\resizebox{1\linewidth}{!}{
\begin{subfigure}{1.2\linewidth}
\begin{tcolorbox}[colback=blue!5!white,colframe=black]
\begin{small}
\textbf{Prompt:} Let $a,$ $b,$ $c,$ $d$ be distinct complex numbers such that $|a| = |b| = |c| = |d| = 1$ and $a + b + c + d = 0.$  Find the maximum value of \[|(a + b)(a + c)(a + d)(b + c)(b + d)(c + d)|.\] Let's think step by step and output the final answer within \textbackslash boxed\{\}. 
\Sepline
\textbf{Qwen2.5-7B Generation:} We can utilize Python to solve this problem numerically, but first, let's explore the mathematical properties and constraints.

Given that \( |a| = |b| = |c| = |d| = 1 \), these complex numbers lie on the unit circle in the complex plane. Additionally, \( a + b + c + d = 0 \) is a constraint on their sum.

Since \( a, b, c, d \) are distinct complex numbers on the unit circle, we can represent them as \( e^{i\theta_a}, e^{i\theta_b}, e^{i\theta_c}, e^{i\theta_d} \) for some angles \( \theta_a, \theta_b, \theta_c, \theta_d \).

The constraint \( a + b + c + d = 0 \) implies that these complex numbers are arranged symmetrically around the origin on the unit circle.

We want to maximize the magnitude of the expression \( |(a + b)(a + c)(a + d)(b + c)(b + d)(c + d)| \).

Given the symmetry and the unit circle constraint, a possible configuration that meets the criteria is:
- \( a = e^{i\frac{\pi}{3}} \)
- \( b = e^{i\frac{2\pi}{3}} \)
- \( c = e^{i\pi} \)
- \( d = e^{i\frac{4\pi}{3}} \)

This configuration ensures that \( |a| = |b| = |c| = |d| = 1 \) and \( a + b + c + d = 0 \).

Let's compute \( |(a + b)(a + c)(a + d)(b + c)(b + d)(c + d)| \) using Python for this configuration.

We'll start by calculating the values and then compute the desired magnitude. Let's proceed with the calculations.
```python
import numpy as np

Define the complex numbers on the unit circle
a = np.exp(1j * np.pi / 3)
b = np.exp(1j * 2 * np.pi / 3)
c = np.exp(1j * np.pi)
d = np.exp(1j * 4 * np.pi / 3)

Calculate the products
ab = (a + b)
ac = (a + c)
ad = (a + d)
bc = (b + c)
bd = (b + d)
cd = (c + d)

Compute the magnitude of the product of these terms
result = np.abs(ab * ac * ad * bc * bd * cd)
print(result)
```
```output
4.999999999999999
```
After calculating the magnitude of the expression \( |(a + b)(a + c)(a + d)(b + c)(b + d)(c + d)| \) using the specific configuration on the unit circle, we obtain approximately 5.

Therefore, the maximum value of the given expression is \(\boxed{5}\).
\Sepline
\textbf{Qwen2.5-7B after $A^\star$-PO Generation:} Given that \(a, b, c, d\) are distinct complex numbers with \(|a| = |b| = |c| = |d| = 1\) and \(a + b + c + d = 0\), {\color{purple}we need to find the maximum value of the expression \(|(a + b)(a + c)(a + d)(b + c)(b + d)(c + d)|\).}

First, we can use the fact that \(|a| = |b| = |c| = |d| = 1\) to express \(a, b, c,\) and \(d\) in terms of their arguments. Let's denote the arguments of \(a, b, c,\) and \(d\) by \(\alpha, \beta, \gamma,\) and \(\delta\) respectively. Then, we can write:

\[a = e^{i\alpha}, \quad b = e^{i\beta}, \quad c = e^{i\gamma}, \quad d = e^{i\delta}.\]

Given that \(a + b + c + d = 0\), we can use the symmetry and properties of complex numbers on the unit circle to find a suitable configuration. One useful configuration is when \(a, b, c,\) and \(d\) are the vertices of a square on the unit circle. Without loss of generality, we can assume:

\[a = 1, \quad b = e^{i\frac{2\pi}{4}}, \quad c = e^{i\frac{4\pi}{4}} = -1, \quad d = e^{i\frac{6\pi}{4}} = e^{i\frac{3\pi}{2}}.\]

This configuration ensures that \(a + b + c + d = 1 + e^{i\frac{2\pi}{4}} - 1 + e^{i\frac{3\pi}{2}} = e^{i\frac{\pi}{2}} + e^{i\frac{3\pi}{2}} = i - i = 0\).

Now, let's compute the expression \(|(a + b)(a + c)(a + d)(b + c)(b + d)(c + d)|\):

1. \(a + b = 1 + e^{i\frac{\pi}{2}} = 1 + i\)
2. \(a + c = 1 - 1 = 0\)
3. \(a + d = 1 - i\)
4. \(b + c = i - 1\)
5. \(b + d = i - i = 0\)
6. \(c + d = -1 - i\)

So, the expression becomes:

\[
|(a + b)(a + c)(a + d)(b + c)(b + d)(c + d)| = |(1 + i) \cdot 0 \cdot (1 - i) \cdot (i - 1) \cdot 0 \cdot (-1 - i)| = 0.
\]

{\color{darkgreen}This configuration does not yield the maximum value. Let's consider another configuration, such as when \(a, b, c,\) and \(d\) are the vertices of a regular tetrahedron on the unit sphere. However, for simplicity, let's consider another symmetric configuration, such as when \(a, b, c,\) and \(d\) are the roots of a polynomial with specific properties.}

Given the symmetry and the condition \(a + b + c + d = 0\), a useful configuration is when \(a, b, c,\) and \(d\) are the roots of the polynomial \(x^4 + 1 = 0\). The roots of this polynomial are:

\[a = e^{i\frac{\pi}{4}}, \quad b = e^{i\frac{3\pi}{4}}, \quad c = e^{i\frac{5\pi}{4}}, \quad d = e^{i\frac{7\pi}{4}}.\]

Now, let's compute the expression \(|(a + b)(a + c)(a + d)(b + c)(b + d)(c + d)|\):

1. \(a + b = e^{i\frac{\pi}{4}} + e^{i\frac{3\pi}{4}}\)
2. \(a + c = e^{i\frac{\pi}{4}} + e^{i\frac{5\pi}{4}}\)
3. \(a + d = e^{i\frac{\pi}{4}} + e^{i\frac{7\pi}{4}}\)
4. \(b + c = e^{i\frac{3\pi}{4}} + e^{i\frac{5\pi}{4}}\)
5. \(b + d = e^{i\frac{3\pi}{4}} + e^{i\frac{7\pi}{4}}\)
6. \(c + d = e^{i\frac{5\pi}{4}} + e^{i\frac{7\pi}{4}}\)

Using the properties of complex numbers, we can simplify these expressions. For example, \(e^{i\theta} + e^{i(\theta + \frac{2\pi}{4})} = e^{i\theta}(1 + e^{i\frac{\pi}{2}}) = e^{i\theta}(1 + i)\).

After simplifying, we find that the maximum value of the expression is achieved when the configuration is such that the terms do not cancel each other out. The maximum value is found to be 16.

So, the final answer is:

\[
\boxed{16}.
\]
\end{small}
\end{tcolorbox}
\end{subfigure}}\hfill
\end{figure*}

Although the answers for both models are incorrect, the model after $A^\star$-PO exhibits {\color{darkgreen}backtracking} behavior where it abandons failing approaches and reconsiders another method.

\clearpage
\section{Pass@K Evaluations}
\label{app:pass_k_eval}

Figure~\ref{fig:passk_all} presents the Pass@K accuracies across all three model sizes and two datasets. The orange line indicates the Pass@1 performance of the model after RL training. We hypothesize that this convergence behavior at $N=8$ is due to the relationship between $N$ and the Pass@K performance of the model. The final performance of the models after training falls between Pass@4 and Pass@8 of $\pir$. This suggests that the optimal value function $\Vs$ should, at a minimum, approximate Pass@4 performance. With $N=8$ samples, the estimated $\Vs$ becomes sufficiently accurate, leading to convergence in performance.

\begin{figure}[!ht]%
  \begin{center}
     \includegraphics[scale=0.7,trim={200 0 200 0},clip]{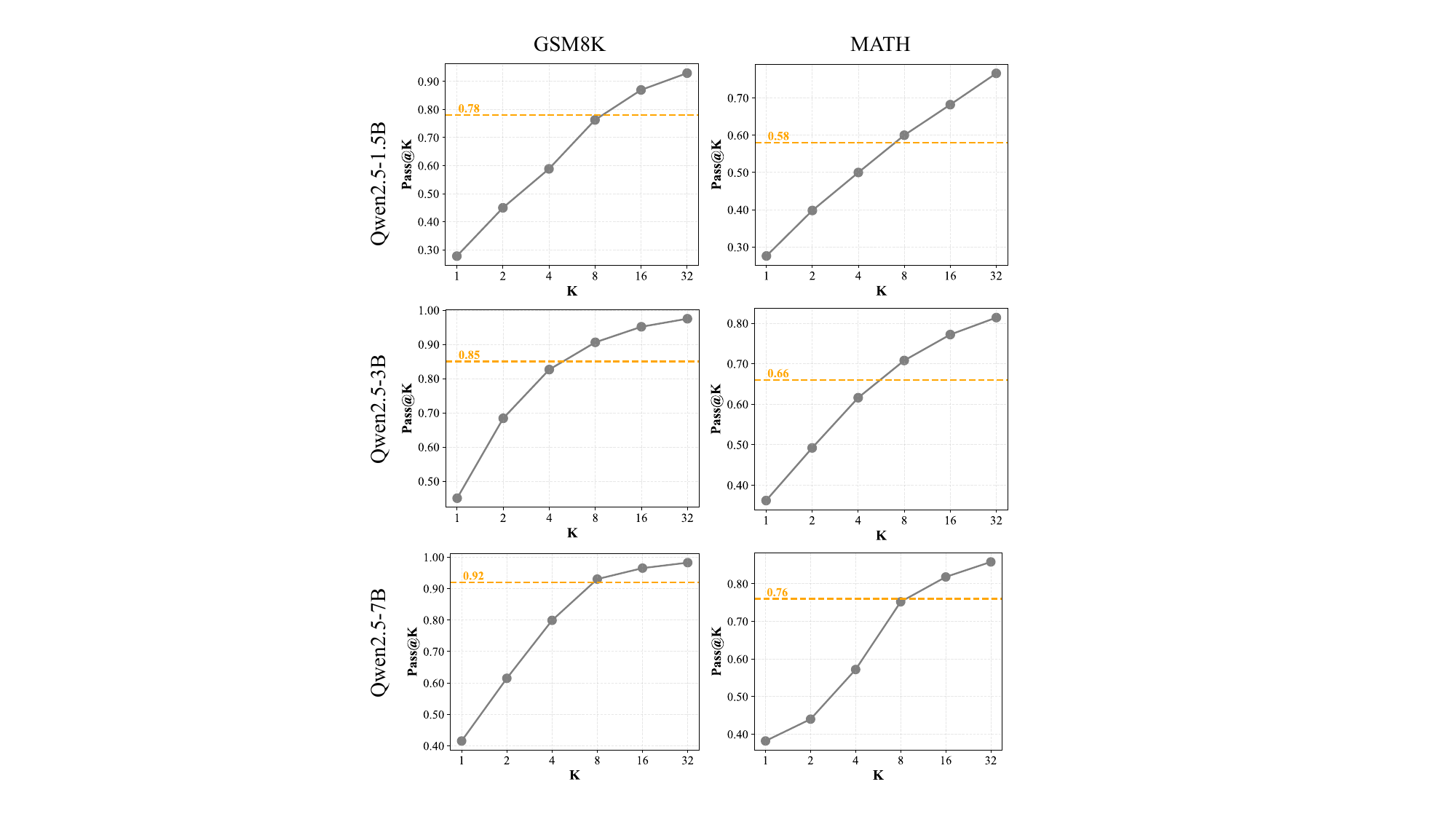}
  \end{center}
  \vskip -0.35cm
    \caption{\textbf{Pass@K Evaluations for different models and datasets.}}
    \label{fig:passk_all}
\end{figure}

\newpage
\section[Limiting Cases of beta]{Limiting Cases of $\beta$}
\label{app:beta_1_proof}

\paragraph{Case 1: $\boldsymbol{\beta \to 0}$ — Recovering the Maximum.}

We start from the expression:
\[
\hat{V}^\star(x) = \beta \ln\left( \frac{1}{N} \sum_{i=1}^N \exp\left( \frac{r(x, y_i)}{\beta} \right) \right).
\]

Let $M = \max_i r(x, y_i)$. Then, 
\[
\sum_{i=1}^N \exp\left( \frac{r(x, y_i)}{\beta} \right) = \exp\left( \frac{M}{\beta} \right) \sum_{i=1}^N \exp\left( \frac{r(x, y_i) - M}{\beta} \right).
\]

Taking the logarithm:
\begin{align*}
    \hat{V}^\star(x) &=  \beta \ln\left( \frac{1}{N} \exp\left( \frac{M}{\beta} \right) \sum_{i=1}^N \exp\left( \frac{r(x, y_i) - M}{\beta} \right) \right) = M + \beta\ln\left( \frac{1}{N} \sum_{i=1}^N \exp\left( \frac{r(x, y_i) - M}{\beta} \right) \right).
\end{align*}

Since $r(x, y_i) \leq M$,
\begin{align*}
    \lim_{\beta \to 0}\beta\ln\left( \frac{1}{N} \sum_{i=1}^N \exp\left( \frac{r(x, y_i) - M}{\beta} \right) \right) = 0
\end{align*}

Therefore,
\[
\lim_{\beta \to 0}\hat{V}^\star(x) = M = \mathrm{Pass@N\ for\ x}.
\]

\paragraph{Case 2: $\boldsymbol{\beta \to \infty}$ — Recovering the Mean.} We start from the expression:
\[
\hat{V}^\star(x) = \beta \ln\left( \frac{1}{N} \sum_{i=1}^N \exp\left( \frac{r(x, y_i)}{\beta} \right) \right).
\]

We want
\[
\lim_{\beta \to \infty}\hat{V}^\star(x) = \lim_{\beta \to \infty}\beta \ln\left( \frac{1}{N} \sum_{i=1}^N \exp\left( \frac{r(x, y_i)}{\beta} \right) \right) = \lim_{\beta \to 0}\frac{\ln\left(\frac{1}{N} \sum_{i=1}^N \exp\left( {r(x, y_i)}\beta \right) \right)}{\beta}.
\]

Apply L’Hôpital’s rule:
\[
\lim_{\beta \to 0}\frac{\ln\left(\frac{1}{N} \sum_{i=1}^N \exp\left( {r(x, y_i)}\beta \right) \right)}{\beta} = \lim_{\beta \to 0}\frac{\frac{1}{N} \sum_{i=1}^N r(x, y_i)\exp\left( {r(x, y_i)}\beta \right)}{\frac{1}{N} \sum_{i=1}^N \exp\left( {r(x, y_i)}\beta \right)}=\frac{1}{N} \sum_{i=1}^N r(x, y_i).
\]

Thus, as $\beta \to \infty$:

\[
\hat{V}^\star(x) \to \frac{1}{N} \sum_{i=1}^N r(x, y_i).
\]

\paragraph{Summary:}

\[
\hat{V}^\star(x) =
\begin{cases}
\max\limits_i r(x, y_i), & \text{if } \beta \to 0, \\
\frac{1}{N} \sum\limits_{i=1}^N r(x, y_i), & \text{if } \beta \to \infty.
\end{cases}
\]

\clearpage
\newcommand{\ts}{\theta^{\star}}
\newcommand{\Gc}{\mathcal{G}}
\section{Log-linear Policies With FTPL}
\label{sec:dc}
In this section, we characterize the learning regret of log-linear policies with FTPL. We study bounded parameters and feature vectors:
\begin{align*}
\Vert\theta\Vert_2\leq B,\quad \Vert\phi(x,y)\Vert_2\leq 1,\quad\forall x,y.
\end{align*}
We still assume $\Pi$ is realizable and let $\pis=\pi_{\ts}$. Note that we can indeed bound the DC of this log-linear policy class as follows.
\begin{lemma}
\label{lem:dc-loglinear}
For the log-linear policy class $\Pi$ defined in \cref{eq:loglinear}, we have $\DC(\Pi)\leq d+1$.
\end{lemma}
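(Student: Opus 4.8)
The plan is to directly verify the inequality in \cref{def:dc} by recognizing that for log-linear policies, the quantity $\ln\pi_\theta(y|x) - \ln\pi_{\ts}(y|x)$ is, up to an $x$-dependent additive constant (the log-partition difference), a linear function of the feature $\phi(x,y)$, namely $\langle\theta-\ts,\phi(x,y)\rangle - Z_\theta(x)$ where $Z_\theta(x) = \ln\sum_{y'}\exp\langle\theta,\phi(x,y')\rangle - \ln\sum_{y'}\exp\langle\ts,\phi(x,y')\rangle$. I would introduce the shorthand $g_\theta(x,y) := \ln\pi_\theta(y|x) - \ln\pi_{\ts}(y|x) = \langle\theta-\ts,\phi(x,y)\rangle - Z_\theta(x)$ and note that $\EE_{y\sim\pi(\cdot|x)}[g_\theta(x,y)]$ appearing on the left is exactly a (negative) KL-type quantity; crucially, the $Z_\theta(x)$ term is constant in $y$. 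The key algebraic step is to center: write $g_\theta(x,y) = \langle\theta-\ts, \phi(x,y) - \mu_{\pi,x}\rangle + \big(\langle\theta-\ts,\mu_{\pi,x}\rangle - Z_\theta(x)\big)$ where $\mu_{\pi,x} = \EE_{y\sim\pi(\cdot|x)}[\phi(x,y)]$, so that under $y\sim\pi(\cdot|x)$ the first term is mean-zero.

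The main work is a Cauchy–Schwarz / variance-decoupling argument. Fix the distribution $q\in\Delta(\Pi)$. On the left-hand side I have $\EE_{\pi\sim q}\EE_{x}\EE_{y\sim\pi(\cdot|x)}[g_\pi(x,y)]$ where I now index by the policy $\pi=\pi_\theta$ drawn from $q$. Since $\pis$ is the optimal KL-regularized policy, $\EE_{y\sim\pi(\cdot|x)}[g_\pi(x,y)] = \KL(\pi(\cdot|x)\|\pis(\cdot|x)) \geq 0$, but I will not use nonnegativity directly; instead I bound it by the second moment. The standard decoupling trick: for each fixed $x$, write the per-$x$ contribution and use that $\EE_{\pi'\sim q, y\sim\pi'(\cdot|x)}[(g_\pi(x,y))^2]$ dominates a quadratic form in $(\theta-\ts)$ via the averaged second-moment matrix $\Sigma_x := \EE_{\pi'\sim q}\EE_{y\sim\pi'(\cdot|x)}[\tilde\phi\,\tilde\phi^\top]$ (suitably centered, plus a rank-one piece from the partition offset, which is where the ``$+1$'' comes from — this is why the bound is $d+1$ rather than $d$). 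Then $\EE_{y\sim\pi(\cdot|x)}[g_\pi(x,y)]$, being the expectation of a linear-plus-constant functional, is controlled by $\sqrt{(d+1)}$ times the square root of the corresponding quadratic form, by Cauchy–Schwarz in the appropriate inner product (pseudo-inverse of $\Sigma_x$). Taking expectation over $\pi\sim q$ and $x$, and one more Cauchy–Schwarz (Jensen) to pull the square root outside the expectations, yields exactly the form $\sqrt{(d+1)\cdot\EE_{\pi\sim q,\pi'\sim q, x, y\sim\pi'}[(g_\pi(x,y))^2]}$.

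The step I expect to be the main obstacle is handling the log-partition offset $Z_\theta(x)$ cleanly: it is not a linear function of $\theta$, so the naive ``linear functional in a $d$-dimensional space $\Rightarrow$ decoupling coefficient $\leq d$'' argument does not literally apply to $g_\theta$. The resolution I would pursue is to observe that $g_\theta(x,\cdot)$ lives in the $(d{+}1)$-dimensional linear span of $\{\phi(x,\cdot), \mathbf{1}\}$ (features augmented with a constant coordinate), and the relevant second-moment / covariance object on that augmented space has rank at most $d+1$; applying the generic ``decoupling coefficient of a $D$-dimensional linear class is at most $D$'' lemma there gives $d+1$. An alternative, more self-contained route is: bound the LHS per-$x$ by $\sqrt{\EE_{y\sim\pi(\cdot|x)}[g_\pi^2(x,y)]}$ trivially (Jensen), then relate $\EE_{y\sim\pi}[g_\pi^2]$ to $\EE_{\pi'\sim q,y\sim\pi'}[g_\pi^2]$ using a change-of-measure / eigenvalue comparison on the augmented covariance, absorbing the ratio into the factor $d+1$ via the trace bound $\mathrm{tr}(\Sigma_x^{-1}\,\EE_{y\sim\pi}[\tilde\phi\tilde\phi^\top]) \leq$ something dimension-like. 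I would check carefully that the centering and the constant coordinate together account for exactly one extra dimension, confirming $\DC(\Pi)\le d+1$ and not, say, $2d$.
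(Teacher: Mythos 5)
Your proposal is correct and follows essentially the same route as the paper: you absorb the log-partition offset by augmenting the features with a constant coordinate, so that $\ln\pi_\theta(y|x)-\ln\pis(y|x)$ becomes an inner product in $\RR^{d+1}$, and then apply Cauchy--Schwarz with the averaged second-moment matrix (regularized, or via pseudo-inverse) together with the trace bound $\mathsf{Tr}\bigl((A+\lambda I)^{-1}A\bigr)\leq d+1$. The "main obstacle" you flag is resolved exactly as in the paper's proof, so no essential difference remains.
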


\paragraph{FTPL.} We instantiate $\NR$ to be FTPL \citep{pmlr-v117-suggala20a}. That is, in $t$-th iteration we generate i.i.d. random variables from an exponential distribution, $\sigma_{t,j}\sim\EXP(\eta)$ for $1\leq j\leq d$. Let $\sigma_t$ denote the vector whose $j$-th entry is $\sigma_{t,j}$, then FTPL computes $\pi_{\theta_{t}}$ as follows:
\begin{align*}
\theta_{t} = \arg\min_{\theta\in\Theta}\sum_{i=1}^{t-1}\hl_{i}(\pi_{\theta}) - \langle\sigma_t,\theta\rangle.
\end{align*}
From the literature, we have the following guarantee of FTPL.
\begin{lemma}[{\cite[Theorem 1]{pmlr-v117-suggala20a}}]
\label{lem:ftpl}
Let $D$ denote the diameter of $\Theta$, i.e., $D:=\sup_{\theta,\theta'\in\Theta}\Vert\theta-\theta'\Vert_{\infty}$. Suppose the loss functions are $L$-Lipschitz w.r.t. $\ell_1$-norm, i.e., $|\hl_t(\pi_{\theta})-\hl_t(\pi_{\theta'})|\leq L\Vert\theta-\theta'\Vert_{1}$ for all $\theta,\theta'\in\Theta, t$. Then we can bound the regret of FTPL as:
\begin{align*}
\Reg(T)\lesssim\eta d^2D L^2T +\frac{dD}{\eta}.
\end{align*}
\end{lemma}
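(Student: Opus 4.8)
I would prove this by the classical follow‑the‑perturbed‑leader analysis, treating the $\hl_t$ as an arbitrary sequence of losses that are $L$‑Lipschitz w.r.t.\ $\|\cdot\|_1$ and bounding the regret in expectation over the exponential perturbation, following \citet{pmlr-v117-suggala20a}. The whole argument splits into two pieces: a ``be‑the‑leader'' term governed by the magnitude of the perturbation, and a one‑step ``stability'' term governed by how little the perturbed minimizer moves when one more loss is appended. For the analysis it is cleanest to work with a single perturbation $\sigma=(\sigma_1,\dots,\sigma_d)$ with $\sigma_j\sim\EXP(\eta)$ i.i.d.\ and $\theta_t=\arg\min_{\theta\in\Theta}\sum_{i<t}\hl_i(\pi_\theta)-\langle\sigma,\theta\rangle$ (resampling a fresh $\sigma_t$ each round, as written in the algorithm, yields the same order of expected regret and is handled analogously).

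\textbf{Step 1 (reduction to FTL and the be‑the‑leader bound).} Define $g_0(\theta):=-\langle\sigma,\theta\rangle$ and $g_i(\theta):=\hl_i(\pi_\theta)$ for $i\geq1$; then $\theta_t$ is exactly the follow‑the‑leader iterate $\arg\min_\theta\sum_{i=0}^{t-1}g_i$ on the augmented sequence. The standard be‑the‑leader inequality (a one‑line induction) gives, for every $\theta^\star\in\Theta$,
\begin{align*}
\sum_{i=0}^{T}g_i(\theta_{i+1})\;\leq\;\sum_{i=0}^{T}g_i(\theta^\star),
\end{align*}
and isolating the $i=0$ term rearranges to
\begin{align*}
\sum_{t=1}^{T}\hl_t(\pi_{\theta_t})-\sum_{t=1}^{T}\hl_t(\pi_{\theta^\star})\;\leq\;\sum_{t=1}^{T}\Bigl(\hl_t(\pi_{\theta_t})-\hl_t(\pi_{\theta_{t+1}})\Bigr)+\langle\sigma,\theta_1-\theta^\star\rangle,
\end{align*}
valid for every $\theta^\star$, hence in particular for the regret comparator. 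Taking $\EE_\sigma$, the last term is at most $\EE\|\sigma\|_1\cdot\|\theta_1-\theta^\star\|_\infty\leq(d/\eta)\cdot D$ since $\EE[\sigma_j]=1/\eta$; this is the $\frac{dD}{\eta}$ contribution.

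\textbf{Step 2 (one‑step stability via the exponential perturbation).} It remains to control $\EE_\sigma[\hl_t(\pi_{\theta_t})-\hl_t(\pi_{\theta_{t+1}})]$. Writing $F(\theta):=\sum_{i<t}\hl_i(\pi_\theta)$, the iterate $\theta_t$ minimizes $F(\theta)-\langle\sigma,\theta\rangle$ while $\theta_{t+1}$ minimizes $F(\theta)+\hl_t(\pi_\theta)-\langle\sigma,\theta\rangle$, so the two objectives differ only by the single $L$‑Lipschitz (w.r.t.\ $\ell_1$) function $\theta\mapsto\hl_t(\pi_\theta)$. Since the $\hl_i$ are not assumed convex, one cannot use the usual strong‑convexity/Bregman stability argument; instead one exploits the structure of the noise: the density of $\EXP(\eta)^{\otimes d}$ satisfies the translation bound $p(\sigma)/p(\sigma+v)\leq e^{\eta\|v\|_1}$ on their common support, so appending $\hl_t$ to the potential can be absorbed into an $O(L)$‑per‑coordinate shift of $\sigma$ at the price of a likelihood‑ratio factor $e^{O(\eta dL)}$. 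Carrying out this change of measure (the core lemmas of \citet{pmlr-v117-suggala20a}) and using $e^{O(\eta dL)}-1=O(\eta dL)$ together with the fact that $\hl_t(\pi_\theta)$ varies over an interval of width at most $LD$ on $\Theta$ yields the per‑round bound $\EE_\sigma[\hl_t(\pi_{\theta_t})-\hl_t(\pi_{\theta_{t+1}})]=O(\eta d^2L^2D)$. Summing over $t$ gives the $\eta d^2DL^2T$ term; combining with Step 1 proves the claim, and optimizing $\eta\asymp(dL^2T)^{-1/2}$ then recovers the $O(d^{3/2}DL\sqrt{T})$ regret invoked later.

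\textbf{Main obstacle.} The substantive step is the stability estimate of Step 2. Because the losses are only $\ell_1$‑Lipschitz and possibly non‑convex, the usual FTRL machinery does not apply, and one must instead exploit the memorylessness / bounded translation likelihood ratio of the exponential distribution to couple the argmins $\theta_t$ and $\theta_{t+1}$; propagating the per‑coordinate shift through all $d$ coordinates is exactly where the $d^2$ (rather than $d$) dependence enters, and keeping that bookkeeping tight is the delicate part. Step 1, the moment bound $\EE\|\sigma\|_1=d/\eta$, and the final tuning of $\eta$ are all routine.
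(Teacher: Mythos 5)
This lemma is not proved in the paper at all: it is imported verbatim as Theorem~1 of \citet{pmlr-v117-suggala20a}, so there is no in-paper argument to compare yours against. Judged on its own terms, your outline reproduces the broad architecture of the original proof (be-the-leader decomposition, $\EE\Vert\sigma\Vert_1\le d/\eta$ giving the $dD/\eta$ term, and a per-round stability bound of order $\eta d^2 DL^2$), and your Step~1 and the final tuning of $\eta$ are fine.

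The gap is in Step~2. The mechanism you describe --- absorbing the newly appended loss $\hl_t$ into an $O(L)$-per-coordinate shift of $\sigma$ and paying a likelihood-ratio factor $e^{O(\eta dL)}$ --- is the classical Kalai--Vempala change-of-measure argument, and it only works when $\hl_t(\pi_\theta)$ is \emph{linear} in $\theta$, since only then is ``adding the loss to the potential'' literally a translation of the perturbation vector. Here the losses are arbitrary non-convex $\ell_1$-Lipschitz functions, and the whole point of \citet{pmlr-v117-suggala20a} is that this reduction is unavailable: their stability analysis instead establishes a monotonicity property of the perturbed argmin in each perturbation coordinate and then does a one-dimensional analysis coordinate by coordinate, exploiting the exponential distribution there; this is also exactly where the $d^2$ (rather than $d$) in the first term comes from. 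You correctly flag this step as the substantive obstacle and defer to the core lemmas of the cited work, but as written the change-of-measure route would not close, so the proposal is an accurate citation-level sketch rather than a self-contained proof --- which, to be fair, is also all the paper itself provides.
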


Given \cref{lem:dc-loglinear,lem:ftpl}, we can bound the regret of learning log-linear policies with FTPL using \cref{thm:main}:
\begin{corollary}[log-linear policies with FTPL]
Consider the log-linear policy class $\Pi$ defined in \cref{eq:loglinear}. Suppose \cref{ass:r-real,ass:l-bound,ass:vref} hold true and use FTPL as the $\NR$ oracle with $\eta=1/(\beta\Cr\sqrt{Td})$. Then we have the following regret with probability at least $1-\delta$:
\begin{align*}
 \mathbb{E}_{x, y\sim \pis} r(x,y) - & \frac{1}{T}\sum_{t=1}^T\EE_{\pi\sim q_t}[\mathbb{E}_{x,y\sim \pi}r(x,y)]
 \lesssim \left(\frac{d^{\frac{5}{2}}B\Cr}{T^{\frac{1}{2}}\beta}\right)^{\frac{1}{4}}+\left(\frac{d\Cr^2\ln(1/\delta)}{T\beta^2}\right)^{\frac{1}{4}}\notag\\
 &+\left(\left(\min\left\{\exp\left(\frac{1}{\beta}\right)-1,\frac{1}{\vref}\right\}\right)^2\frac{d\ln(|\mathcal{X}|/\delta)}{N}\right)^{\frac{1}{4}}.
\end{align*}
\end{corollary}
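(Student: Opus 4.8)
The plan is to obtain the corollary as a direct specialization of \cref{thm:main} to the log-linear class, by plugging in two ingredients: the decoupling-coefficient bound $\DC(\Pi)\le d+1$ from \cref{lem:dc-loglinear}, and the FTPL regret bound of \cref{lem:ftpl} with $D$ and $L$ instantiated for this class. Once those two quantities are pinned down and $\eta$ is chosen to balance the FTPL bound, the three terms in \cref{eq:main} collapse to the three terms claimed in the corollary.

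First I would compute the diameter $D$ and the $\ell_1$-Lipschitz constant $L$ that appear in \cref{lem:ftpl}. Since $\Theta\subseteq\{\theta:\Vert\theta\Vert_2\le B\}$, we have $D=\sup_{\theta,\theta'}\Vert\theta-\theta'\Vert_\infty\le\sup\Vert\theta-\theta'\Vert_2\le 2B$. For $L$, write the per-step loss as $\hl_t(\pi_\theta)=\bigl(r_{\pi_\theta}(x_t,y_t)-c_t\bigr)^2$ with $c_t:=r(x_t,y_t)-\hV(x_t)$, and differentiate in $\theta$. Using the log-linear form, $\nabla_\theta r_{\pi_\theta}(x,y)=\beta\bigl(\phi(x,y)-\EE_{y'\sim\pi_\theta(\cdot|x)}\phi(x,y')\bigr)$, whose $\ell_\infty$-norm (dual to $\ell_1$) is at most $2\beta$ because $\Vert\phi\Vert_2\le 1$ implies $\Vert\phi\Vert_\infty\le 1$. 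Since $r$ is binary one checks $\hV(x)\in[0,1]$, hence $|c_t|\le 1$, while $\Vert r_{\pi_\theta}\Vert_\infty\le\Cr$ by \cref{ass:l-bound}; so $|r_{\pi_\theta}(x_t,y_t)-c_t|\lesssim\Cr$, and the chain rule gives $\Vert\nabla_\theta\hl_t(\pi_\theta)\Vert_\infty\lesssim\beta\Cr$, i.e.\ $L\lesssim\beta\Cr$.

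Next I would substitute $D\lesssim B$ and $L\lesssim\beta\Cr$ into \cref{lem:ftpl} to get $\Reg(T)\lesssim\eta d^2 B\beta^2\Cr^2 T+dB/\eta$, and balance the two terms by taking $\eta=1/(\beta\Cr\sqrt{Td})$, which yields $\Reg(T)\lesssim d^{3/2}B\beta\Cr\sqrt{T}$. Plugging $\DC(\Pi)\le d+1\lesssim d$ and this regret into \cref{eq:main}: the first term becomes $\bigl(d\cdot d^{3/2}B\beta\Cr\sqrt{T}/(T\beta^2)\bigr)^{1/4}=\bigl(d^{5/2}B\Cr/(T^{1/2}\beta)\bigr)^{1/4}$, the second becomes $\bigl(d\Cr^2\ln(1/\delta)/(T\beta^2)\bigr)^{1/4}$, and the third becomes $\bigl((\min\{\exp(1/\beta)-1,1/\vref\})^2 d\ln(|\mathcal{X}|/\delta)/N\bigr)^{1/4}$, matching the claimed bound exactly.

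I expect the only genuinely delicate step to be the Lipschitz-constant calculation: keeping the dependence at $L\lesssim\beta\Cr$ requires using the binary structure of $r$ to bound the regression target $c_t$ by a constant and \cref{ass:l-bound} to bound $r_{\pi_\theta}$, together with the observation that the softmax-centered feature gradient has $\ell_\infty$-norm at most $2\beta$. Everything else is bookkeeping — the diameter bound is immediate from $\Vert\theta\Vert_2\le B$, the choice of $\eta$ is one line of balancing, and the final substitution into \cref{thm:main} is purely mechanical.
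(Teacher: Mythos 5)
Your proposal is correct and follows essentially the same route as the paper's proof: bound the diameter by $D\le 2B$, bound the $\ell_1$-Lipschitz constant of $\hl_t$ by $O(\beta(\Cr+1))$ via the gradient of the log-linear policy's log-ratio and \cref{ass:l-bound}, then combine \cref{lem:ftpl} (with the stated choice of $\eta$ balancing the two terms to give $\Reg(T)\lesssim d^{3/2}B\beta\Cr\sqrt{T}$) with $\DC(\Pi)\le d+1$ from \cref{lem:dc-loglinear} inside \cref{thm:main}. The only cosmetic difference is that the paper records the Lipschitz constant as $4\beta(\Cr+1)$ rather than your $\lesssim\beta\Cr$, which is immaterial to the final bound.
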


\begin{proof}
To utilize \cref{thm:main}, we only need to bound the diameter $D$ of the parameter space and Lipschitz constant $L$. First note that
\begin{align*}
D=\sup_{\theta,\theta'\in\Theta}\Vert\theta-\theta'\Vert_{\infty}\leq\sup_{\theta,\theta'\in\Theta}\Vert\theta-\theta'\Vert_{2}\leq 2B.
\end{align*}

On the other hand, fix any $t,\theta\in\Theta$ and we can compute the $j$-th entry of the loss gradient:
\begin{align*}
\left(\nabla_{\theta}\hl_t\right)_j= 2\beta &\left(\beta\ln\frac{\pi(y_t|x_t)}{\pir(y_t|x_t)}-\left(r(x_t,y_t)-\hV(x_t)\right)\right)\\&\left((\phi(x_t,y_t))_j-\frac{\sum_{y'}\exp\left(\left\langle\theta,\phi(x,y')\right\rangle\right)(\phi(x_t,y'))_j}{\sum_{y'}\exp\left(\left\langle\theta,\phi(x,y')\right\rangle\right)}\right)
\end{align*}
From \cref{ass:l-bound} and the fact that $\Vert\phi(x,y)\Vert_2\leq 1$ we have
\begin{align*}
\left\Vert\nabla_{\theta}\hl_t\right\Vert_{\infty}\leq4\beta(\Cr+1),
\end{align*}
which implies that (via Holder's inequality)
\begin{align*}
L\leq4\beta(\Cr+1).
\end{align*}

Now combine \cref{thm:main} with \cref{lem:dc-loglinear,lem:ftpl}, and we can obtain the final bound:
\begin{align*}
 \mathbb{E}_{x, y\sim \pis} r(x,y) - & \frac{1}{T}\sum_{t=1}^T\EE_{\pi\sim q_t}[\mathbb{E}_{x,y\sim \pi}r(x,y)]
 \lesssim \left(\frac{d^{\frac{5}{2}}B\Cr}{T^{\frac{1}{2}}\beta}\right)^{\frac{1}{4}}+\left(\frac{d\Cr^2\ln(1/\delta)}{T\beta^2}\right)^{\frac{1}{4}}\notag\\
 &+\left(\left(\min\left\{\exp\left(\frac{1}{\beta}\right)-1,\frac{1}{\vref}\right\}\right)^2\frac{d\ln(|\mathcal{X}|/\delta)}{N}\right)^{\frac{1}{4}}.
\end{align*}
\end{proof}

\subsection[Proof of dc-loglinear]{Proof of \cref{lem:dc-loglinear}}
Note that for any $\pi_{\theta}\in\Pi$, we can write $\ln\pi_{\theta}$ as 
\begin{align*}
\ln\pi_{\theta}(y|x) &= \left\langle\theta,\phi(x,y)\right\rangle - \ln\sum_{y'}\exp\left(\left\langle\theta,\phi(x,y')\right\rangle\right)\\
&=\left\langle\left[\theta, -\ln\sum_{y'}\exp\left(\left\langle\theta,\phi(x,y')\right\rangle\right)\right], \left[\phi(x,y), 1\right]\right\rangle.
\end{align*}
Let us use $f(\theta,x)\in\RR^{d+1}$ and $\overline{\phi}(x,y)\in\RR^{d+1}$ to denote the extended vectors:
\begin{align*}
f(\theta,x):=\left[\theta, -\ln\sum_{y'}\exp\left(\left\langle\theta,\phi(x,y')\right\rangle\right)\right], \quad\overline{\phi}(x,y):=\left[\phi(x,y), 1\right].
\end{align*}
In addition, we define $A(q,x)$ to be the expected feature covariance matrix,
\begin{align*}
A(q,x):=\EE_{\theta\sim q,y\sim\pi_{\theta}(x)}\left[\overline{\phi}(x,y)\overline{\phi}^{\top}(x,y)\right].
\end{align*}
Let $\theta^{\star}$ denote the optimal parameter, i.e., $\pis=\pi_{\theta^{\star}}$. Then from Cauchy-Schwartz inequality, we have for all $\lambda>0$ that
\begin{align*}
&\EE_{x\sim\rho,\theta\sim q, y\sim\pi_{\theta}(x)}\left[\ln\pi_{\theta}(y|x)-\ln\pis(y|x)\right]\\
=&\EE_{x\sim\rho,\theta\sim q, y\sim\pi_{\theta}(x)}\left[\left\langle f(\theta,x)-f(\theta^{\star},x), \overline{\phi}(x,y)\right\rangle\right]\\
\leq&\sqrt{\EE_{x\sim\rho,\theta\sim q}\left[\left\Vert f(\theta,x)-f(\theta^{\star},x)\right\Vert^2_{A(q,x)+\lambda I}\right] \EE_{x\sim\rho,\theta\sim q, y\sim\pi_{\theta}(x)}\left[\left\Vert \overline{\phi}(x,y)\right\Vert^2_{(A(q,x)+\lambda I)^{-1}}\right]}
\end{align*}
For the first term, we have
\begin{align*}
&\EE_{x\sim\rho,\theta\sim q}\left[\left\Vert f(\theta,x)-f(\theta^{\star},x)\right\Vert^2_{A(q,x)+\lambda I}\right] \\
=&\EE_{x\sim\rho,\theta\sim q,\theta'\sim q, y\sim\pi_{\theta'}(x)}\left[(\ln\pi_{\theta}(y|x)-\ln\pis(y|x))^2\right] + \lambda B^2\left(1+\ln|\Yc\right).
\end{align*}
For the second term, we have
\begin{align*}
\EE_{x\sim\rho,\theta\sim q, y\sim\pi_{\theta}(x)}\left[\left\Vert \overline{\phi}(x,y)\right\Vert^2_{(A(q,x)+\lambda I)^{-1}}\right]=\EE_{x\sim\rho}\left[\mathsf{Tr}((A(q,x)+\lambda I)^{-1}A(q,x))\right]\leq d+1.
\end{align*}
Therefore, let $\lambda\to0$, we have $\DC(\Pi)\leq d+1$.

\clearpage
\newcommand{\Proj}{\mathsf{Proj}}
\newcommand{\bth}{\overline{\theta}}
\section{Log-linear Policies With OGD}
\label{sec:ogd}
In this section, we investigate the learning guarantee of OGD on log-linear policies.
\paragraph{Quasi-quadratic Loss.} Recall that the population loss in iteration $t$ is:
\begin{align*}
\ell_t(\theta) = \mathbb{E}_{x,y\sim \pi_{\theta_t}(\cdot|x)}\left(\beta\ln \frac{\pi_\theta(y|x)}{\pir(y|x)}-\left(r(x,y)-\hV(x)\right)\right)^2.
\end{align*}
For log-linear policies, we can show that $l_t(\theta)$ is quasi-quadratic in terms of gradients, as shown in the following lemma:
\begin{lemma}
\label{lemma:gd:1}
We have 
    \begin{align*}
        \nabla_{\theta} \ell_t(\theta_t) = 2\beta^2 A(\theta_t) (\theta_t-\theta^\star).
    \end{align*}
\end{lemma}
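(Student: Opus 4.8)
The plan is to differentiate $\ell_t$ directly and then show that every piece of the regression residual that depends on $x$ alone is annihilated by the score function, leaving only a term linear in $\theta_t-\theta^{\star}$. Throughout write $\mu_{\theta}(x):=\EE_{y'\sim\pi_{\theta}(\cdot|x)}[\phi(x,y')]$, so that for log-linear policies $\nabla_{\theta}\ln\pi_{\theta}(y|x)=\phi(x,y)-\mu_{\theta}(x)$ (the computation already carried out in the FTPL section), and note the score identity $\EE_{y\sim\pi_{\theta}(\cdot|x)}[\phi(x,y)-\mu_{\theta}(x)]=0$ for every fixed $x$. Since the sampling distribution $\pi_{\theta_t}$ in $\ell_t$ does not depend on the optimization variable, the chain rule gives
\[
\nabla_{\theta}\ell_t(\theta)\big|_{\theta=\theta_t}=2\beta\,\EE_{x,y\sim\pi_{\theta_t}(\cdot|x)}\!\Big[\Big(\beta\ln\tfrac{\pi_{\theta_t}(y|x)}{\pir(y|x)}-\big(r(x,y)-\hV(x)\big)\Big)\big(\phi(x,y)-\mu_{\theta_t}(x)\big)\Big].
\]

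Next I would rewrite the residual using the closed form $\pis(y|x)\propto\pir(y|x)\exp(r(x,y)/\beta)$, whose normalizing constant is $\exp(\Vs(x)/\beta)$ by \cref{eq:v-star}; equivalently $r(x,y)=\beta\ln\tfrac{\pis(y|x)}{\pir(y|x)}+\Vs(x)$. Substituting this and using realizability $\pis=\pi_{\theta^{\star}}$ (\cref{ass:r-real}), the residual becomes
\[
\beta\ln\tfrac{\pi_{\theta_t}(y|x)}{\pi_{\theta^{\star}}(y|x)}+\big(\hV(x)-\Vs(x)\big)=\beta\langle\theta_t-\theta^{\star},\phi(x,y)\rangle+c(x),
\]
where $c(x):=-\beta\ln\tfrac{Z(\theta_t,x)}{Z(\theta^{\star},x)}+\hV(x)-\Vs(x)$ with $Z(\theta,x):=\sum_{y'}\exp\langle\theta,\phi(x,y')\rangle$ is a function of $x$ only. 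In particular, the value-estimation error $\hV-\Vs$ gets absorbed entirely into $c(x)$.

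The key step is the cancellation of $c(x)$: for each fixed $x$, $\EE_{y\sim\pi_{\theta_t}(\cdot|x)}[c(x)(\phi(x,y)-\mu_{\theta_t}(x))]=c(x)\cdot 0=0$ by the score identity, so the $x$-only part contributes nothing. What remains is $2\beta^{2}\,\EE_{x,y\sim\pi_{\theta_t}(\cdot|x)}[\langle\theta_t-\theta^{\star},\phi(x,y)\rangle(\phi(x,y)-\mu_{\theta_t}(x))]$; viewing this as a matrix acting on $\theta_t-\theta^{\star}$ and using $\EE_{y\sim\pi_{\theta_t}(\cdot|x)}[(\phi(x,y)-\mu_{\theta_t}(x))\phi(x,y)^{\top}]=\EE_{y\sim\pi_{\theta_t}(\cdot|x)}[\phi\phi^{\top}]-\mu_{\theta_t}(x)\mu_{\theta_t}(x)^{\top}=\mathrm{Cov}_{y\sim\pi_{\theta_t}(\cdot|x)}(\phi)$, whose average over $x\sim\rho$ is the Fisher information matrix $A(\theta_t)$ of \cref{ass:fisher}, yields $\nabla_{\theta}\ell_t(\theta_t)=2\beta^{2}A(\theta_t)(\theta_t-\theta^{\star})$.

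I expect the only real obstacle to be keeping the residual decomposition honest — specifically, making transparent that both the log-partition difference $\ln(Z(\theta_t,x)/Z(\theta^{\star},x))$ and the bias $\hV(x)-\Vs(x)$ are functions of $x$ alone, so that the elementary identity $\EE_{y\sim\pi_{\theta_t}(\cdot|x)}[\nabla_{\theta}\ln\pi_{\theta_t}(y|x)]=0$ wipes them out; the rest is a routine covariance computation that requires no boundedness assumptions. A minor point to double-check is that the per-context centering appearing in $\nabla_{\theta}\ln\pi_{\theta}$ is the one intended in the definition of $A(\theta)$ in \cref{ass:fisher}.
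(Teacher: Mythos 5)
Your proof is correct and takes essentially the same route as the paper's: rewrite the residual via $r(x,y)=\beta\ln\frac{\pis(y|x)}{\pir(y|x)}+\Vs(x)$ together with realizability $\pis=\pi_{\theta^\star}$, annihilate every $x$-only term (both $\hV(x)-\Vs(x)$ and the log-partition ratio) using the score identity $\EE_{y\sim\pi_{\theta_t}(\cdot|x)}\left[\phi(x,y)-\EE_{y'\sim\pi_{\theta_t}(\cdot|x)}[\phi(x,y')]\right]=0$, and identify the surviving term as a covariance acting on $\theta_t-\theta^\star$, exactly as in the paper. Your closing caveat is well taken but minor: the computation genuinely yields the per-context covariance averaged over $x$, i.e.\ $\EE_{x}\left[\EE_{y\sim\pi_{\theta_t}(\cdot|x)}[\phi\phi^\top]-\EE_{y\sim\pi_{\theta_t}(\cdot|x)}[\phi]\,\EE_{y\sim\pi_{\theta_t}(\cdot|x)}[\phi]^\top\right]$, whereas \cref{ass:fisher} writes $A(\theta)$ with means taken jointly over $(x,y)$; the paper's own proof silently identifies the two, so this is an imprecision in the stated definition rather than a gap in your argument.
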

Recall that $A(\theta)$ denote the Fisher information matrix:
    $$A(\theta) = \mathbb{E}_{x, y\sim \pi_{\theta}(\cdot|x)}[\phi(x,y)\phi(x,y)^\top] - \mathbb{E}_{x, y\sim \pi_{\theta}(\cdot|x)}[\phi(x,y)]\mathbb{E}_{x, y\sim \pi_{\theta}(\cdot|x)}[(\phi(x,y))^{\top}].$$

\paragraph{Online gradient descent.} Inspired by \cref{lemma:gd:1}, we instantiate $\NR$ to be online gradient descent \citep{hazan2016introduction} for log-linear policy learning. Specifically, in $t$-th iteration, we compute $\theta_{t+1}$ as follows:
\begin{align*}
\theta_{t+1} = \Proj_{\Theta}(\theta_{t} - \eta_{t}\nabla_{\theta}\hl_t(\theta_t)),
\end{align*}
where $\eta_t$ is the stepsize in iteration $t$ and $\Proj_{\Theta}(\cdot)$ is the projection onto $\Theta$ w.r.t. $l_2$ norm. Recall that online gradient descent is exactly what we utilize in our practical implementation of \cref{alg:APO}.

Next we show that our algorithm enjoys a last-iterate convergence to $\ts$ with a rate $\widetilde{O}(\sqrt{1/T})$ given that the Fisher information matrix is full-rank \cref{ass:fisher}.
\begin{theorem}[log-linear policies with OGD]
\label{thm:loglinear}
Suppose \cref{ass:r-real,ass:l-bound,ass:fisher} hold true. Set any $\eta_0\geq \frac{1}{\beta^2\lambda}$. Then with probability at least $1-\delta$, we have
\begin{align*}
\mathbb{E}_{x, y\sim \pis} r(x,y) -\mathbb{E}_{x,y\sim \pi_{\theta_{T+1}}}[r(x,y)]
 \lesssim\sqrt{\frac{ B^2 +\eta_0^2\beta^2  \Cr^2\ln T+  (\eta_0\Cr^2/\lambda)\ln(1/\delta)+ \eta_0\beta \Cr B\ln(1/\delta)}{T}}.
\end{align*}
\end{theorem}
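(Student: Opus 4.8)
The plan is to reduce the reward gap to the parameter error $\Vert\theta_{T+1}-\ts\Vert_2$, and then to establish a last-iterate $\widetilde O(1/T)$ convergence of the projected SGD iterates to $\ts$ by exploiting the quasi-quadratic gradient identity of \cref{lemma:gd:1} together with the full-rank condition \cref{ass:fisher}. I assume throughout that $\hV=\Vs$, so that \cref{lemma:gd:1} applies exactly (otherwise a bias term appears that is absorbed exactly as in \cref{thm:main}), and I write $\Cr\ge1$ w.l.o.g. \emph{Step 1 (reward gap to parameter error).} Since $\Vert r\Vert_\infty\le1$,
\[
\EE_{x,y\sim\pis}r(x,y)-\EE_{x,y\sim\pi_{\theta_{T+1}}}r(x,y)\le\EE_{x\sim\rho}\Vert\pis(\cdot|x)-\pi_{\theta_{T+1}}(\cdot|x)\Vert_1\le\EE_{x\sim\rho}\sqrt{2\,\KL(\pis(\cdot|x)\Vert\pi_{\theta_{T+1}}(\cdot|x))},
\]
using Pinsker. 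For the log-linear class, $\KL(\pi_{\theta}(\cdot|x)\Vert\pi_{\theta'}(\cdot|x))$ equals the Bregman divergence $D_Z(\theta'\Vert\theta)$ of the log-partition function $\theta\mapsto\ln\sum_{y}\exp\langle\theta,\phi(x,y)\rangle$, whose Hessian is the conditional feature covariance $\mathrm{Cov}_{\pi_\xi}(\phi)\preceq\EE_{\pi_\xi}[\phi\phi^\top]\preceq I$ because $\Vert\phi\Vert_2\le1$; hence $\KL(\pi_{\theta}(\cdot|x)\Vert\pi_{\theta'}(\cdot|x))\le\tfrac12\Vert\theta-\theta'\Vert_2^2$. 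With realizability $\pis=\pi_{\ts}$ (\cref{ass:r-real}), the reward gap is therefore at most $\Vert\ts-\theta_{T+1}\Vert_2$, and it suffices to bound $\Vert\theta_{T+1}-\ts\Vert_2^2$.

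\emph{Step 2 (the SGD recursion).} Write $d_t:=\Vert\theta_t-\ts\Vert_2^2$ and $g_t:=\nabla_\theta\hl_t(\theta_t)$. Since the batch $(x_t,y_t)$ has $x_t\sim\rho,\,y_t\sim\pi_{\theta_t}$, we have $\EE[g_t\mid\mathcal F_{t-1}]=\nabla_\theta\ell_t(\theta_t)=2\beta^2A(\theta_t)(\theta_t-\ts)$ by \cref{lemma:gd:1}, and combining with \cref{ass:fisher} gives the one-point strong convexity $\langle\nabla_\theta\ell_t(\theta_t),\theta_t-\ts\rangle\ge2\beta^2\lambda\,d_t$. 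Writing $g_t=2\beta\bigl(\beta\ln\tfrac{\pi_{\theta_t}(y_t|x_t)}{\pir(y_t|x_t)}-(r(x_t,y_t)-\hV(x_t))\bigr)\bigl(\phi(x_t,y_t)-\EE_{y\sim\pi_{\theta_t}(\cdot|x_t)}\phi(x_t,y)\bigr)$ and using \cref{ass:l-bound}, $r\in[0,1]$, $\hV\in[0,1]$ and $\Vert\phi\Vert_2\le1$, we get $\Vert g_t\Vert_2\le G$ with $G\lesssim\beta\Cr$. By non-expansiveness of $\Proj_\Theta$ and $\ts\in\Theta$,
\[
d_{t+1}\le d_t-2\eta_t\langle\nabla_\theta\ell_t(\theta_t),\theta_t-\ts\rangle+\eta_t^2\Vert g_t\Vert_2^2-2\eta_t\xi_t\le(1-4\beta^2\lambda\eta_t)\,d_t+\eta_t^2G^2-2\eta_t\xi_t,
\]
where $\xi_t:=\langle g_t-\EE[g_t\mid\mathcal F_{t-1}],\theta_t-\ts\rangle$ is a martingale-difference sequence with $|\xi_t|\le2G\sqrt{d_t}\le4GB$ and $\EE[\xi_t^2\mid\mathcal F_{t-1}]\le4G^2d_t$.

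\emph{Step 3 (solving the recursion).} Take $\eta_t\asymp\eta_0/t$ with $\eta_0\ge1/(\beta^2\lambda)$, so that $4\beta^2\lambda\eta_t\gtrsim1/t$ and the deterministic weights $w_t:=\prod_{s=t+1}^{T}(1-4\beta^2\lambda\eta_s)$ contract at least like $(t/T)^2$. Unrolling and taking expectations gives $\EE[d_{T+1}]\lesssim B^2/T+\eta_0^2G^2/T$, which (up to the $\ln T$ from the last-iterate refinement below) already accounts for the first two terms after substituting $G\lesssim\beta\Cr$. To upgrade to a high-probability bound, apply Freedman's inequality to the reweighted martingale $\sum_t w_t\eta_t\xi_t$: its predictable quadratic variation is $\lesssim\sum_t w_t^2\eta_t^2G^2d_t$ and its increments are bounded by $\lesssim\max_t w_t\eta_t GB$. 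A standard self-bounding / stopping-time argument from the high-probability last-iterate analysis of strongly convex SGD controls the dependence of the variation on the random $d_t$, producing with probability at least $1-\delta$ the extra contributions $\tfrac{\eta_0G^2}{\beta^2\lambda T}\ln(1/\delta)$ (variance) and $\tfrac{\eta_0GB}{T}\ln(1/\delta)$ (range); with $G\lesssim\beta\Cr$ these become $\tfrac{\eta_0\Cr^2}{\lambda T}\ln(1/\delta)$ and $\tfrac{\eta_0\beta\Cr B}{T}\ln(1/\delta)$. Taking the square root of the resulting bound on $d_{T+1}$ and combining with Step 1 yields the claim.

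\emph{Main obstacle.} Everything except the concentration in Step 3 is routine once \cref{lemma:gd:1} is in hand: the reduction in Step 1 is elementary, the quasi-quadratic identity supplies the effective strong convexity for free, and the deterministic part of the recursion is standard. The delicate point is that the martingale increments $\xi_t$ have conditional second moment proportional to the \emph{random} quantity $d_t$, so a direct application of Freedman's inequality is circular; handling this requires the by-now-standard but technically involved self-bounded-martingale / stopping-time device used in high-probability last-iterate analyses of strongly convex SGD, and it is precisely this step that introduces the $\ln T$ and $\ln(1/\delta)$ factors appearing in the bound.
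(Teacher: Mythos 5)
Your overall route is the same as the paper's: reduce the reward gap to $\Vert\theta_{T+1}-\ts\Vert_2$, use \cref{lemma:gd:1} plus \cref{ass:fisher} as one-point strong convexity for the projected SGD recursion with $\eta_t\propto 1/t$, and finish with Freedman's inequality. Two remarks on where you diverge. First, your Step~1 (Pinsker plus smoothness of the log-partition, giving $\KL\le\tfrac12\Vert\theta-\theta'\Vert_2^2$) is a valid and arguably cleaner alternative to the paper's argument, which instead bounds the $\ell_1$ distance by a multiplicative-ratio estimate on the softmax probabilities and needs a case split on whether $\Vert\theta_{T+1}-\ts\Vert\le 1/2$. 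Second, your preliminary assumption $\hV=\Vs$ is unnecessary: in the proof of \cref{lemma:gd:1} the term $\hV(x)-\Vs(x)$ multiplies the centered feature $\phi(x,y)-\EE_{y'\sim\pi_{\theta_t}}[\phi(x,y')]$, whose conditional mean is zero, so the population gradient identity holds exactly with the estimated $\hV$; this is precisely why the theorem needs neither \cref{ass:vref} nor any $N$-dependent term, unlike \cref{thm:main}.

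The substantive issue is Step~3. You correctly identify that the conditional variance of the martingale increments scales with the random $d_t=\Vert\theta_t-\ts\Vert^2$, but you then declare a direct Freedman application ``circular'' and defer the whole step to an unspecified self-bounding/stopping-time device. In the setting at hand no such device is needed, and your diagnosis misses the role of the hypothesis $\eta_0\ge 1/(\beta^2\lambda)$: with $\eta_t=\eta_0/(t+1)$ the telescoped recursion $(t+1)d_{t+1}-td_t\le(1-c_1)d_t+2\eta_0\zeta_t+c_2/(t+1)$ with $c_1=2\eta_0\beta^2\lambda\ge 2$ leaves the sum $\sum_t d_t$ with a strictly negative coefficient, and applying Freedman (\cref{lem:free}) to $\zeta_t$ with a \emph{fixed} parameter $\gamma\le\min\{\lambda/(128\Cr^2),\,1/(32\beta B\Cr)\}$ produces a variance term $\lesssim\gamma\beta^2\Cr^2\sum_t d_t$ that is simply absorbed into that negative drift, leaving only the $\ln(1/\delta)/\gamma$ contributions, which are exactly the $(\eta_0\Cr^2/\lambda)\ln(1/\delta)$ and $\eta_0\beta\Cr B\ln(1/\delta)$ terms in the statement. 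So your skeleton and the final form of the bound are right, but the one nontrivial step of the argument is left uninstantiated, and the standard machinery you invoke is heavier than what the theorem's stepsize condition was designed to make available.
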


\begin{proof}
Let $\bth_{t+1}$ denote $\theta_{t} - \eta_{t}\nabla_{\theta}\hl_t(\theta_t)$. From the definition of $\pis$, we know
\begin{align*}
    \ell_t(\theta) &={\mathbb{E}_{x,y\sim \pi_{\theta_t}(\cdot | x)} }\left( \beta\ln \frac{\pi_{\theta}(y|x)}{\pi^\star(y|x)} - \left(\Vs(x) - \hV(x) \right)  \right)^2,\\
    \hl_t(\theta) &=\left( \beta\ln \frac{\pi(y_t|x_t)}{\pi^\star(y_t|x_t)} - \left(\Vs(x_t) - \hV(x_t) \right)  \right)^2.
\end{align*}
We use $\xi_t$ to denote $\nabla_{\theta} \ell_t(\theta_t)-\nabla_{\theta} \hl_t(\theta_t)$. Then from \cref{ass:l-bound}, we know $\Vert\xi_t\Vert\leq 8\beta\Cr$ for all $t$. From the update of online gradient descent, we have
\begin{align*}
    \bth_{t+1}-\theta^\star &= \theta_{t}- \eta_{t} \nabla_\theta \ell_t(\pi_{\theta_t}) + \eta_t\xi_t -\theta^\star.
\end{align*}
By \cref{lemma:gd:1}, this implies that
\begin{align*}
    \bth_{t+1}-\theta^\star=\left(I-2\eta_t\beta^2A(\theta_t)\right)(\theta_{t} -\theta^\star) + \eta_t\xi_t.
\end{align*}
With \cref{ass:fisher} and the property of projection, we have
\begin{align*}
   \|\theta_{t+1}-\theta^\star\|^2 \le  \|\bth_{t+1}-\theta^\star\|^2\le (1-2\eta_t\beta^2\lambda)\|\theta_{t} -\theta^\star\|^2+ 2\eta_t\left\langle z_t,\xi_t\right\rangle+ 64\eta_t^2\beta^2  \Cr^2,
\end{align*}
where $z_t:=\left(I-2\eta_t\beta^2A(\theta_t)\right)(\theta_{t} -\theta^\star)$.
Let $\eta_t = \frac{\eta_0}{t+1}$. Denote by $c_1 = 2\eta_0\beta^2\lambda$ and $c_2=64\eta_0^2\beta^2  \Cr^2$, we have
\begin{align*}
    \|\theta_{t+1}-\theta^\star\|^2&\le \left(1-\frac{c_1}{t+1}\right)\|\theta_{t}-\theta^\star\|^2 + \frac{2\eta_0\left\langle z_t,\xi_t\right\rangle}{t+1}+ \frac{c_2}{(t+1)^2}.
\end{align*}
This implies that
\begin{align*}
    (t+1)\|\theta_{t+1}-\theta^\star\|^2- t\|\theta_{t}-\theta^\star\|^2\le (1-c_1) \|\theta_{t}-\theta^\star\|^2+2\eta_0\left\langle z_t,\xi_t\right\rangle+\frac{c_2}{t+1}.
\end{align*}

By induction, we have 
\begin{align*}
\|\theta_{T+1}-\theta^\star\|^2&\le \frac{\|\theta_{1}-\theta^\star\|^2 + c_2(1+\ln T)}{T+1}+\frac{1-c_1}{T+1}\sum_{t=1}^T\|\theta_t-\theta^\star\|^2+\frac{2\eta_0}{T+1}\sum_{t=1}^T\left\langle z_t,\xi_t\right\rangle\\
&\le \frac{2B^2 +64\eta_0^2\beta^2  \Cr^2(\ln T+1)}{T+1}+\frac{(1-c_1)\sum_{t=1}^T \|\theta_t-\theta^\star\|^2+2\eta_0\sum_{t=1}^T\left\langle z_t,\xi_t\right\rangle}{T+1}.
\end{align*}
Let $\Fc_t$ denote the filtration generated by $\{x_1,y_1,\cdots,x_{t-1},y_{t-1}\}$. Denote by $\zeta_t = \langle z_t,\xi_t\rangle$, it can be observed that $\mathbb{E}[\zeta_t\mid \mathcal{F}_t]=0$ and $|\zeta_t|\le 8\beta \Cr\|\theta_t-\theta^\star\|$ a.s. conditioned on $\Fc_t$.
To proceed, we utilize the following form of Freedman's inequality from the literature \citep{beygelzimer2011contextual}:
\begin{lemma}[Freedman's inequality \citep{beygelzimer2011contextual}]
\label{lem:free}
Let $\{U_1, \dots, U_T\}$ be a martingale adapted to filtration $\{\Fc_t\}$. Further, suppose that $|U_t| \leq R$ for all $t$. 
Then, for any $\delta > 0$ and $\gamma \in [0, \frac{1}{2R}]$, we have with probability at least $1 - \delta$ that, 

\[
\left| \sum_{t=1}^T U_t \right| \leq \gamma \sum_{t=1}^T \mathbb{E}[U_t^2 \mid \Fc_t] + \frac{\ln(2/\delta)}{\gamma}.
\]
\end{lemma}
Applying \cref{lem:free} with $U_t=\zeta_t$ and $R=16\beta B\Cr$, we have with probability at least $1-\delta$ that for any $\gamma\leq\frac{1}{32\beta B\Cr}$
\begin{align*}
    \sum_{t=1}^T \zeta_t&\leq\gamma\sum_{t=1}^{T}\mathbb{E}[\zeta_t^2 \mid \Fc_t] + \frac{\ln(2/\delta)}{\gamma}\\
    &\leq 64\gamma\beta^2\Cr^2\sum_{t=1}^T\|\theta_t-\theta^\star\|^2 + \frac{\ln(2/\delta)}{\gamma}.
\end{align*}
Therefore, we have 
\begin{align*}
    &(1-c_1)\sum_{t=1}^T \|\theta_t-\theta^\star\|^2+2\eta_0\sum_{t=1}^T\left\langle z_t,\xi_t\right\rangle\leq(1-c_1+128\gamma\beta^2\Cr^2\eta_0)\sum_{t=1}^T\|\theta_t-\theta^\star\|^2 + \frac{2\eta_0\ln(2/\delta)}{\gamma}.
\end{align*}
Choose $\gamma = \min\{\frac{\lambda}{128\Cr^2},\frac{1}{32\beta B\Cr}\}$, we have
\begin{align*}
    (1-c_1)\sum_{t=1}^T \|\theta_t-\theta^\star\|^2+2\eta_0\sum_{t=1}^T\left\langle z_t,\xi_t\right\rangle\leq&(1-c_1/2)\sum_{t=1}^T\|\theta_t-\theta^\star\|^2 \\&+ \frac{256\Cr^2\eta_0\ln(2/\delta)}{\lambda}+64\beta B\Cr\eta_0\ln(2/\delta).
\end{align*}
Choose $\eta_0\geq1/(\beta^2\lambda)$, we have with probability at least $1-\delta$ that
\begin{align*}
(1-c_1)\sum_{t=1}^T \|\theta_t-\theta^\star\|^2+2\eta_0\sum_{t=1}^T\left\langle z_t,\xi_t\right\rangle\lesssim \frac{\Cr^2\eta_0\ln(2/\delta)}{\lambda}+\beta B\Cr\eta_0\ln(2/\delta).
\end{align*}
This leads to
\begin{align}
\label{theta_gap}
\|\theta_{T+1}-\theta^\star\|^2 &\lesssim \frac{B^2 +\eta_0^2\beta^2  \Cr^2\ln T+  (\eta_0\Cr^2/\lambda)\ln(1/\delta)+ \eta_0\beta \Cr B\ln(1/\delta)}{T}.
\end{align}

Now we start to bound the reward regret.
Notice that
\begin{align*}
&\mathbb{E}_{x, y\sim \pis} [r(x,y)] -\mathbb{E}_{x,y\sim \pi_{\theta_{T+1}}}[r(x,y)]\leq\EE_{x}\left[\Vert\pi_{\theta_{T+1}}(\cdot|x)-\pis(\cdot|x)\Vert_1\right]\\
=&\EE_{x}\left[\sum_y\left|\frac{\exp(\langle\theta_{T+1}, \phi(x,y)\rangle)}{\sum_{y'} \exp(\langle\theta_{T+1}, \phi(x,y')\rangle)} - \frac{\exp(\langle\theta^\star, \phi(x,y)\rangle)}{\sum_{y'} \exp(\langle\theta^\star, \phi(x,y')\rangle)}\right|\right].
\end{align*}
For $\|\theta_{T+1}-\theta^\star\|\le 1/2$, It suffices to note that
\begin{align*}
\frac{\exp(\langle\theta_{T+1}, \phi(x,y)\rangle)}{\sum_{y'} \exp(\langle\theta_{T+1}, \phi(x,y')\rangle)} &\le \exp(2\|\theta_{T+1} -\theta^\star\|)\frac{\exp(\langle\theta^\star, \phi(x,y)\rangle)}{\sum_{y'} \exp(\langle\theta^\star, \phi(x,y')\rangle)}\\
&\le (1+4\|\theta_{T+1} -\theta^\star\|)\frac{\exp(\langle\theta^\star, \phi(x,y)\rangle)}{\sum_{y'} \exp(\langle\theta^\star, \phi(x,y')\rangle)}.
\end{align*}
and 
\begin{align*}
\frac{\exp(\langle\theta_{T+1}, \phi(x,y)\rangle)}{\sum_{y'} \exp(\langle\theta_{T+1}, \phi(x,y')\rangle)} &\ge \exp(-2\|\theta_{T+1} -\theta^\star\|)\frac{\exp(\langle\theta^\star, \phi(x,y)\rangle)}{\sum_{y'} \exp(\langle\theta^\star, \phi(x,y')\rangle)}\\
&\ge (1-4\|\theta_{T+1} -\theta^\star\| )\frac{\exp(\langle\theta^\star, \phi(x,y)\rangle)}{\sum_{y'} \exp(\langle\theta^\star, \phi(x,y')\rangle)}.
\end{align*}
Thus
\begin{align*}
&\mathbb{E}_{x, y\sim \pis} \left[r(x,y)\right] -\mathbb{E}_{x,y\sim \pi_{\theta_{T+1}}} \left[r(x,y)\right]\\
\leq&\EE_{x}\left[\sum_y\left|\frac{\exp(\langle\theta_{T+1}, \phi(x,y)\rangle)}{\sum_{y'} \exp(\langle\theta_{T+1}, \phi(x,y')\rangle)} - \frac{\exp(\langle\theta^\star, \phi(x,y)\rangle)}{\sum_{y'} \exp(\langle\theta^\star, \phi(x,y')\rangle)}\right|\right]\\
    \leq& \EE_{x}\left[\sum_y4\|\theta_{T+1} -\theta^\star\|\left| \frac{\exp(\langle\theta^\star, \phi(x,y)\rangle)}{\sum_{y'} \exp(\langle\theta^\star, \phi(x,y')\rangle)}\right|\right] \\
    =& \EE_{x}\left[4\|\theta_{T+1} -\theta^\star\|\sum_y\pi^\star(y|x)\right] \\
    &= 4\|\theta_{T+1} -\theta^\star\|\lesssim \sqrt{\frac{B^2 +\eta_0^2\beta^2  \Cr^2\ln T+  (\eta_0\Cr^2/\lambda)\ln(1/\delta)+ \eta_0\beta \Cr B\ln(1/\delta)}{T}.}
\end{align*}

On the other hand, by \eqref{theta_gap}, with probability at least $1-\delta$, we know $\|\theta_{T+1}-\theta^\star\|\le 1/2$ for $T\gtrsim B^2 +\eta_0^2\beta^2  \Cr^2\ln T+  (\eta_0\Cr^2/\lambda)\ln(1/\delta)+ \eta_0\beta \Cr B\ln(1/\delta)$. For $T\leq B^2 +\eta_0^2\beta^2  \Cr^2\ln T+  (\eta_0\Cr^2/\lambda)\ln(1/\delta)+ \eta_0\beta \Cr B\ln(1/\delta)$, it suffices to note that
\begin{align*} 
&\mathbb{E}_{x, y\sim \pis} [r(x,y)] -\mathbb{E}_{x,y\sim \pi_{\theta_{T+1}}}[r(x,y)]\\
\le &1\lesssim  \sqrt{\frac{ B^2 +\eta_0^2\beta^2  \Cr^2\ln T+  (\eta_0\Cr^2/\lambda)\ln(1/\delta)+ \eta_0\beta \Cr B\ln(1/\delta)}{T}},  
\end{align*}
which completes the proof.
\end{proof}

\clearpage
\subsection[Proof of lemma-gd-1]{Proof of \cref{lemma:gd:1}}
Recall
\begin{align*}
\mathbb{E}_{x,y\sim \pi_{\theta_t}(\cdot|x)}\left(\beta\ln \frac{\pi_\theta(y|x)}{\pir(y|x)}-\left(r(x,y)-\hV(x)\right)\right)^2= \mathbb{E}_{x,y\sim \pi_{\theta_t}(\cdot|x)}\left(\beta\ln \frac{\pi_\theta(y|x)}{\pi^\star(y|x)}-\left(\Vs(x)-\hV(x)\right)\right)^2.
\end{align*}
We have
\begin{align*}
    \nabla_{\theta} \ell_t(\theta_t) &= 2\beta\mathbb{E}_{x, y\sim \pi_{\theta_t}(\cdot|x)}\left[\left(\beta \ln \frac{\pi_{\theta_t}(y|x)}{\pi^\star(y|x)}+\hV(x)-\Vs(x)\right) \left(\phi(x,y)-\frac{\sum_{y'}\exp(\langle \theta_t, \phi(x,y')  \rangle\phi(x,y'))}{\sum_{y'}\exp(\langle \theta_t, \phi(x,y')  \rangle)}\right)\right]\\
    &=2\beta\mathbb{E}_{x, y\sim \pi_{\theta_t}(\cdot|x)}\left[\left(\beta \ln \frac{\pi_{\theta_t}(y|x)}{\pi^\star(y|x)}+\hV(x)-\Vs(x)\right) \left(\phi(x,y)-\sum_{y'} \pi_{\theta_t}(y'|x) \phi(x,y')\right)\right]\\
    &=2\beta\mathbb{E}_{x, y\sim \pi_{\theta_t}(\cdot|x)}\left[\left(\beta \ln \frac{\pi_{\theta_t}(y|x)}{\pi^\star(y|x)}+\hV(x)-\Vs(x)\right) \left(\phi(x,y)-\mathbb{E}_{y'\sim \pi_{\theta_t}(\cdot|x)}[\phi(x,y')]\right)\right]\\
        &=2\beta^2\mathbb{E}_{x, y\sim \pi_{\theta_t}(\cdot|x)}\left[\left( \ln \frac{\pi_{\theta_t}(y|x)}{\pi^\star(y|x)}\right) \left(\phi(x,y)-\mathbb{E}_{y'\sim \pi_{\theta_t}(\cdot|x)}[\phi(x,y')]\right)\right]\\
            &=2\beta\mathbb{E}_{x, y\sim \pi_{\theta_t}(\cdot|x)}\left[\left(\langle\theta_t - \theta^\star, \phi(x,y)\rangle - \ln \frac{\sum_{y'}\exp(\langle\theta_t, \phi(x,y')\rangle)}{\sum_{y'}\exp(\langle\theta^\star, \phi(x,y')\rangle)}  \right) \left(\phi(x,y)-\mathbb{E}_{y'\sim \pi_{\theta_t}(\cdot|x)}[\phi(x,y')]\right)\right]\\
    &=2\beta^2\mathbb{E}_{x, y\sim \pi_{\theta_t}(\cdot|x)}\left[\left(\langle\theta_t - \theta^\star, \phi(x,y)\rangle\right) \left(\phi(x,y)-\mathbb{E}_{y'\sim \pi_{\theta_t}(\cdot|x)}[\phi(x,y')]\right)\right]\\
    &= 2\beta^2 \left(\mathbb{E}_{x, y\sim \pi_{\theta_t}(\cdot|x)}\left[\phi(x,y)\phi(x,y)^\top\right]- \mathbb{E}_{x, y\sim \pi_{\theta_t}(\cdot|x)}\left[\phi(x,y)\right]\mathbb{E}_{x, y\sim \pi_{\theta}(\cdot|x)}\left[\phi(x,y)\right]^\top\right)(\theta_t-\theta^\star)\\
    &=2\beta^2 A(\theta_t) (\theta_t-\theta^\star).
\end{align*}

The fourth and sixth equalities are because
\begin{align*}
    &\mathbb{E}_{x, y\sim \pi_{\theta_t}(\cdot|x)}\left[\left(\hV(x)-\Vs(x)\right) \left(\phi(x,y)-\mathbb{E}_{y'\sim \pi_{\theta_t}(\cdot|x)}[\phi(x,y')]\right)\right]\\
    &=\mathbb{E}_{x}\left[\left(\hV(x)-\Vs(x)\right)\mathbb{E}_{y\sim \pi_{\theta_t}(\cdot|x)} \left(\phi(x,y)-\mathbb{E}_{y'\sim \pi_{\theta_t}(\cdot|x)}[\phi(x,y')]\right)\right] = 0
\end{align*}
and
\begin{align*}
    &\mathbb{E}_{x, y\sim \pi_{\theta_t}(\cdot|x)}\left[\ln \frac{\sum_{y'}\exp(\langle\theta_t, \phi(x,y')\rangle)}{\sum_{y'}\exp(\langle\theta^\star, \phi(x,y')\rangle)}  \left(\phi(x,y)-\mathbb{E}_{y'\sim \pi_{\theta_t}(\cdot|x)}[\phi(x,y')]\right)\right]\\
    &=\mathbb{E}_{x}\left[\ln \frac{\sum_{y'}\exp(\langle\theta_t, \phi(x,y')\rangle)}{\sum_{y'}\exp(\langle\theta^\star, \phi(x,y')\rangle)} \mathbb{E}_{y\sim \pi_{\theta_t}(\cdot|x)} \left(\phi(x,y)-\mathbb{E}_{y'\sim \pi_{\theta_t}(\cdot|x)}[\phi(x,y')]\right)\right] = 0.
\end{align*}

\clearpage
\section[Proof of thm main]{Proof of \cref{thm:main}}
\label{sec:proof-thm}
Our proof consists of three steps. First, we characterize the estimation error of $\hV$. Second, with the learning guarantee of $\NR$, we derive an upper bound on the average KL divergence between the output policies and the optimal policy $\pis$ with DC. Finally, we apply Pinsker's inequality \citep{cover1999elements} to relate the KL divergence to the performance gap, completing the proof. 

\paragraph{Step 1.} To begin with, we can quantify the estimation error with \cref{lem:v-est}:
\begin{lemma}
\label{lem:v-est}
With probability at least $1-\delta/2$, we have for all $x$ that
\begin{align*}
\left(\hV(x)-\Vs(x)\right)^2\lesssim\left(\beta\min\left\{\exp\left(\frac{1}{\beta}\right)-1,\frac{1}{\vref}\right\}\right)^2\frac{\ln(|\mathcal{X}|/\delta)}{N}.
\end{align*}
\end{lemma}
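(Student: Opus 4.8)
The plan is to reduce the bias $|\hV(x)-\Vs(x)|$ to a multiplicative concentration bound for a sample mean of bounded random variables, made uniform over the finitely many training prompts by a union bound. Set $a:=\exp(1/\beta)-1$ and $Z_i:=\exp(r(x,y_i)/\beta)$, which lies in $\{1,1+a\}$ because $r\in\{0,1\}$. Then $\hV(x)=\beta\ln\widehat{\mu}(x)$ and $\Vs(x)=\beta\ln\mu(x)$, where $\widehat{\mu}(x):=\tfrac{1}{N}\sum_{i=1}^N Z_i$ and $\mu(x):=\EE_{y\sim\pir(\cdot|x)}[Z_1]=1+p(x)a$, with $p(x):=\EE_{y\sim\pir(\cdot|x)}[r(x,y)]\ge\vref$ by \cref{ass:vref}. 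Hence $\hV(x)-\Vs(x)=\beta\ln(\widehat{\mu}(x)/\mu(x))$, and the entire statement reduces to controlling the multiplicative fluctuation of $\widehat{\mu}(x)$ around $\mu(x)$, simultaneously over all $x\in\Xc$.

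First I would record the purely deterministic inequality $a/\mu(x)\le\min\{a,\,1/\vref\}$: the bound $\le a$ is just $\mu(x)\ge1$, and $\le1/\vref$ follows from $\mu(x)\ge p(x)a\ge\vref a$; this is precisely where the $\min$ in the statement originates. For the upper direction, since the $Z_i$ are i.i.d.\ and lie in an interval of length $a$, Hoeffding's inequality together with a union bound over $x\in\Xc$ gives, with probability at least $1-\delta/4$, that $|\widehat{\mu}(x)-\mu(x)|\le a\sqrt{\ln(4|\Xc|/\delta)/(2N)}$ for every $x$; applying $\ln(1+t)\le t$ with $t=(\widehat{\mu}(x)-\mu(x))/\mu(x)>-1$ then yields $\hV(x)-\Vs(x)\le\beta\,|\widehat{\mu}(x)-\mu(x)|/\mu(x)\le\beta\min\{a,1/\vref\}\sqrt{\ln(4|\Xc|/\delta)/(2N)}$, with no case analysis.

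The delicate part is the reverse inequality $\Vs(x)-\hV(x)=\beta\ln(\mu(x)/\widehat{\mu}(x))\le\beta(\mu(x)-\widehat{\mu}(x))/\widehat{\mu}(x)$, which needs a lower bound on $\widehat{\mu}(x)$. Writing $\widehat{\mu}(x)=1+\widehat{p}(x)a$ with $\widehat{p}(x)$ the empirical success frequency, a multiplicative Chernoff bound gives $\widehat{p}(x)\ge p(x)/2$, equivalently $\widehat{\mu}(x)\ge\mu(x)/2$, simultaneously for all $x$ with probability at least $1-\delta/4$, provided $N\gtrsim\ln(|\Xc|/\delta)/\vref$. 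On the intersection with the Hoeffding event, $(\mu(x)-\widehat{\mu}(x))/\widehat{\mu}(x)\le 2|\widehat{\mu}(x)-\mu(x)|/\mu(x)$, so $\Vs(x)-\hV(x)\le2\beta\min\{a,1/\vref\}\sqrt{\ln(4|\Xc|/\delta)/(2N)}$; combining the two directions, squaring, and absorbing constants into $\lesssim$ gives the claim. I expect this lower-tail control to be the main obstacle: at a prompt where all $N$ reference rollouts are wrong one has $\hV(x)=0$ while $\Vs(x)$ can be arbitrarily close to $1$, so one genuinely needs $(1-p(x))^N\le e^{-\vref N}$ to be small, which is exactly what forces both the $\vref$ dependence and the $\ln|\Xc|$ factor, and what makes the bound meaningful only once $N\gtrsim\ln(|\Xc|/\delta)/\vref$ (for smaller $N$ one only has the trivial $0\le\hV(x),\Vs(x)\le1$). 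Everything else --- Hoeffding's inequality, elementary estimates for the logarithm, and the bound $a/\mu(x)\le\min\{a,1/\vref\}$ --- is routine.
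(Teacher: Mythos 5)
Your proof is correct and, at its core, follows the same route as the paper: both arguments reduce the bias to concentration of the empirical success rate $\hp(x)$ (equivalently, of the empirical mean of $\exp(r(x,y)/\beta)$), apply Hoeffding with a union bound over $\Xc$, and pass to the value scale using the derivative bound $\beta a/(1+pa)\le \beta\min\{\exp(1/\beta)-1,\,1/\vref\}$ with $a=\exp(1/\beta)-1$, which is exactly your observation $a/\mu(x)\le\min\{a,1/\vref\}$. The one substantive difference is how the $1/\vref$ factor is justified in the direction $\hV(x)<\Vs(x)$: the paper invokes the mean value theorem with the constant $\beta\min\{\exp(1/\beta)-1,1/\vref\}$ outright, which is only legitimate when the intermediate point --- hence $\hp(x)$ --- stays bounded below by a constant multiple of $\vref$; this holds on the concentration event once $N\gtrsim\ln(|\Xc|/\delta)/\vref^2$, but the paper leaves that implicit. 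Your separate multiplicative-Chernoff step guaranteeing $\hp(x)\ge p(x)/2$, at the cost of the explicitly stated condition $N\gtrsim\ln(|\Xc|/\delta)/\vref$, makes this lower-tail requirement explicit and is, if anything, more careful than the paper's one-line MVT step: the scenario you highlight (all $N$ reference rollouts wrong, so $\hV(x)=0$ while $\Vs(x)$ is near $1$) is precisely the event both proofs must rule out, and your version slightly improves the implicit sample-size threshold from $\vref^{-2}$ to $\vref^{-1}$ scaling.
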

The proof is deferred to \cref{sec:proof-lem}.

\paragraph{Step 2.} We first bound $\sum_{t=1}^T\hl_t(\pis)$ with \cref{lem:v-est}. From the definition of $\pis$, we know
\begin{align*}
\hl_t(\pis)=\left(\hV(x_i)-\Vs(x_i)\right)^2.
\end{align*}
Let $\Fc_t$ denote the filtration induced by $\{x_1,y_1,\cdots,x_{t-1},y_{t-1}\}$. Note that we have
\begin{align}
&\EE\left[\hl_t(\pis)\big|\Fc_t\right]=\EE_{x\sim\rho}\left[\left(\hV(x)-\Vs(x)\right)^2\right],\notag\\
&\EE\left[\left(\hl_t(\pis)\right)^2\Big|\Fc_t\right]=\EE_{x\sim\rho}\left[\left(\hV(x)-\Vs(x)\right)^4\right]\notag\\
&\qquad\leq\EE_{x\sim\rho}\left[\left(\hV(x)-\Vs(x)\right)^2\right]=\EE\left[\hl_t(\pis)\big|\Fc_t\right].\label{eq:sec-mom}
\end{align}
Next we utilize the following form of Freedman's inequality \citep{song2022hybrid} from the literature:
\begin{lemma}[Freedman's inequality \citep{song2022hybrid}]
Let $\{Z_1, \dots, Z_T\}$ be a sequence of non-negative random variables where $z_t$ is sampled from a distribution depending on $z_{1:t-1}$, i.e., $z_t\sim \mathbb{P}_t(z_{1:t-1})$. Further, suppose that $|Z_t| \leq R$ for all $t$. 
Then, for any $\delta > 0$ and $\lambda \in [0, \frac{1}{2R}]$, we have with probability at least $1 - \delta$ that, 

\[
\left| \sum_{t=1}^T Z_t - \mathbb{E}[Z_t \mid \mathbb{P}_t] \right| \leq \lambda \sum_{t=1}^T \left(2R \mathbb{E}[Z_t \mid \mathbb{P}_t] + \mathbb{E}[Z_t^2 \mid \mathbb{P}_t] \right) + \frac{\ln(2/\delta)}{\lambda}.
\]
\end{lemma}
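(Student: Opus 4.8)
My plan is to obtain this as an immediate corollary of the martingale Freedman inequality already recorded in \cref{lem:free}, and then, for completeness, to indicate the self-contained exponential-supermartingale argument that underlies it. First I would let $\mathcal{F}_t$ be the $\sigma$-algebra generated by $z_{1:t-1}$ and set $U_t := Z_t - \mathbb{E}[Z_t \mid \mathbb{P}_t]$, so that $\mathbb{E}[U_t \mid \mathcal{F}_t] = 0$ and $\{U_t\}$ is a martingale difference sequence. Since $0 \le Z_t \le R$ forces $\mathbb{E}[Z_t \mid \mathbb{P}_t] \in [0,R]$, we have $|U_t| \le R$. Applying \cref{lem:free} to this $U_t$ with step $\gamma = \lambda \in [0, 1/(2R)]$, and bounding the conditional second moment by $\mathbb{E}[U_t^2 \mid \mathcal{F}_t] = \mathbb{E}[Z_t^2 \mid \mathbb{P}_t] - \mathbb{E}[Z_t\mid\mathbb{P}_t]^2 \le \mathbb{E}[Z_t^2\mid\mathbb{P}_t] \le 2R\,\mathbb{E}[Z_t\mid\mathbb{P}_t] + \mathbb{E}[Z_t^2\mid\mathbb{P}_t]$, the stated two-sided inequality falls out directly; the form above is simply the slightly looser uncentered packaging of \cref{lem:free}, the extra term $2R\,\mathbb{E}[Z_t\mid\mathbb{P}_t]$ being a free nonnegative addend.

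For a self-contained derivation (equivalently, a proof of \cref{lem:free} in this special case) I would run the standard Chernoff argument. Write $\mu_t = \mathbb{E}[Z_t\mid\mathbb{P}_t]$ and $v_t = \mathbb{E}[Z_t^2\mid\mathbb{P}_t]$. For the upper tail, fix $\lambda \in (0, 1/(2R)]$; then $0 \le \lambda Z_t \le 1/2 \le 1$, so the elementary inequality $e^u \le 1 + u + u^2$ (valid for $u \le 1$) gives $\mathbb{E}[e^{\lambda Z_t}\mid\mathcal{F}_t] \le 1 + \lambda\mu_t + \lambda^2 v_t \le \exp(\lambda\mu_t + \lambda^2 v_t)$, whence $W_t := \exp\!\big(\sum_{s=1}^t (\lambda Z_s - \lambda\mu_s - \lambda^2 v_s)\big)$ is a nonnegative supermartingale with $W_0 = 1$. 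Markov's inequality yields $\mathbb{P}(W_T \ge 2/\delta) \le \delta/2$, and on the complementary event $\sum_t (Z_t - \mu_t) \le \lambda\sum_t v_t + \ln(2/\delta)/\lambda$. The lower tail is handled identically with $-Z_t$ in place of $Z_t$, using $e^{-u} \le 1 - u + u^2$ for $u \ge 0$ (which constrains neither $\lambda$ nor $R$). A union bound over the two events, each of probability at least $1-\delta/2$, gives the absolute-value bound with probability at least $1-\delta$, and replacing $v_t$ by $2R\mu_t + v_t$ on the right-hand side recovers the claimed form.

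I do not expect a genuine obstacle here --- the argument is textbook. The points that need a moment of attention are: (i) that the hypothesis $\lambda \le 1/(2R)$ suffices for the moment-generating-function bound on the upper side, which holds because then $\lambda Z_t \le 1/2 < 1$, inside the range where $e^u \le 1 + u + u^2$; (ii) that $(W_t)$ is only a \emph{super}martingale, not a martingale, which is harmless since Markov's (Ville's) inequality applies to any nonnegative supermartingale; and (iii) bookkeeping of which hypothesis does what --- boundedness $Z_t \le R$ is used only on the upper tail, while nonnegativity $Z_t \ge 0$ is exactly what makes the lower-tail expansion $e^{-u} \le 1-u+u^2$ valid for all $u \ge 0$ without any restriction on $\lambda$.
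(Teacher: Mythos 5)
Your proposal is correct, but note that the paper itself does not prove this lemma at all: it is imported verbatim from \citet{song2022hybrid} and used as a black box, so there is no in-paper argument to match. Both of your routes check out. The reduction to \cref{lem:free} is sound: with $U_t = Z_t - \mathbb{E}[Z_t\mid\mathbb{P}_t]$ you indeed get $|U_t|\le R$ (both $Z_t$ and its conditional mean lie in $[0,R]$), $\mathbb{E}[U_t\mid\mathcal{F}_t]=0$, and $\mathbb{E}[U_t^2\mid\mathcal{F}_t]\le \mathbb{E}[Z_t^2\mid\mathbb{P}_t]\le 2R\,\mathbb{E}[Z_t\mid\mathbb{P}_t]+\mathbb{E}[Z_t^2\mid\mathbb{P}_t]$, so taking $\gamma=\lambda\in[0,1/(2R)]$ in \cref{lem:free} gives the claim (in fact a slightly stronger bound, since the $2R\,\mathbb{E}[Z_t\mid\mathbb{P}_t]$ term is just nonnegative slack); the only cost is that this trades one literature citation for another. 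Your self-contained Chernoff/supermartingale argument is the standard proof and is also correct: $e^{u}\le 1+u+u^{2}$ for $u\le 1$ covers the upper tail because $\lambda Z_t\le 1/2$, $e^{-u}\le 1-u+u^{2}$ for $u\ge 0$ covers the lower tail using only nonnegativity, Markov applied to the nonnegative supermartingale at the fixed time $T$ gives each tail with probability $\delta/2$, and the endpoint $\lambda=0$ is vacuous. Your bookkeeping of which hypothesis is used where is accurate; there is no gap.
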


Therefore, applying Freedman's inequality with \cref{eq:sec-mom}, we have with probability at least $1-\delta/4$ that
\begin{align*}
\sum_{t=1}^T\left(\hV(x_i)-\Vs(x_i)\right)^2\lesssim 2T\EE_{x\sim\rho}\left[\left(\hV(x)-\Vs(x)\right)^2\right] + \ln(1/\delta).
\end{align*}
From \cref{lem:v-est}, this implies we have with probability at least $1-\delta/4$ that
\begin{align}
\label{eq:ls-free}
\sum_{t=1}^T\hl_t(\pis)\lesssim\left(\beta\min\left\{\exp\left(\frac{1}{\beta}\right)-1,\frac{1}{\vref}\right\}\right)^2\frac{T\ln(|\Xc|/\delta)}{N}.
\end{align}

On the other hand, from the definition of $\pis$, we have 
\begin{align*}
\EE_{\pi\sim q_t}\left[\hl_t(\pi)\right]=\EE_{\pi\sim q_t}\left[\left(\beta\ln\frac{\pi(y_t|x_t)}{\pis(y_t|x_t)} + \left(\hV(x_t)-\Vs(x_t)\right)\right)^2\right].
\end{align*}
Note that we have
\begin{align*}
&\EE\left[\EE_{\pi\sim q_t}\left[\hl_t(\pi)\right]\Big|\Fc_t\right]=\EE_{x\sim\rho,\pi'\sim q_t, y\sim\pi'(\cdot|x)}\EE_{\pi\sim q_t}\left[\left(\beta\ln\frac{\pi(y|x)}{\pis(y|x)} + \left(\hV(x)-\Vs(x)\right)\right)^2\right],\\
&\EE\left[\left(\EE_{\pi\sim q_t}\left[\hl_t(\pi)\right]\right)^2\bigg|\Fc_t\right]\leq\EE_{x\sim\rho,\pi'\sim q_t, y\sim\pi'(\cdot|x)}\EE_{\pi\sim q_t}\left[\left(\beta\ln\frac{\pi(y|x)}{\pis(y|x)} + \left(\hV(x)-\Vs(x)\right)\right)^4\right].
\end{align*}
Note that from \cref{ass:l-bound}, we know
\begin{align*}
\left|\beta\ln\frac{\pi(y|x)}{\pis(y|x)} + \left(\hV(x)-\Vs(x)\right)\right|\leq\Cr+1.
\end{align*}
This implies that
\begin{align*}
\EE\left[\left(\EE_{\pi\sim q_t}\left[\hl_t(\pi)\right]\right)^2\bigg|\Fc_t\right]\leq\left(\Cr+1\right)^2\EE\left[\EE_{\pi\sim q_t}\left[\hl_t(\pi)\right]\Big|\Fc_t\right].
\end{align*}
Therefore, with Freedman's inequality, we have with probability $1-\delta/4$ that
\begin{align}
&\sum_{t=1}^T\EE_{x\sim\rho,\pi'\sim q_t, y\sim\pi'(\cdot|x)}\EE_{\pi\sim q_t}\left[\left(\beta\ln\frac{\pi(y|x)}{\pis(y|x)} + \left(\hV(x)-\Vs(x)\right)\right)^2\right]\notag\\
&\qquad\lesssim 2\sum_{t=1}^T\EE_{\pi\sim q_t}\left[\hl_t(\pi)\right] + \Cr^2\ln(1/\delta).\label{eq:lt-free}
\end{align}
Combining \cref{eq:ls-free,eq:lt-free} with the guarantee of no-regret oracle $\NR$, we have with probability at least $1-\delta$ that
\begin{align*}
&\frac{1}{T}\sum_{t=1}^T\EE_{x\sim\rho,\pi'\sim q_t, y\sim\pi'(\cdot|x)}\EE_{\pi\sim q_t}\left[\left(\beta\ln\frac{\pi(y|x)}{\pis(y|x)} + \left(\hV(x)-\Vs(x)\right)\right)^2\right]\\
&\qquad\lesssim \left(\beta\min\left\{\exp\left(\frac{1}{\beta}\right)-1,\frac{1}{\vref}\right\}\right)^2\frac{\ln(|\Xc|/\delta)}{N}+\frac{\Reg(T)}{T}+\frac{\Cr^2\ln(1/\delta)}{T}.
\end{align*}

Note that from AM-GM inequality, we have for all $\pi$ that
\begin{align*}
&\EE_{x\sim\rho,\pi'\sim q_t, y\sim\pi'(\cdot|x)}\left[\left(\beta\ln\frac{\pi(y|x)}{\pis(y|x)} + \left(\hV(x)-\Vs(x)\right)\right)^2\right]\\
&\qquad=\beta^2\EE_{x\sim\rho,\pi'\sim q_t, y\sim\pi'(\cdot|x)}\left[\left(\ln\frac{\pi(y|x)}{\pis(y|x)}\right)^2\right] + 2\beta\EE_{x\sim\rho,\pi'\sim q_t, y\sim\pi'(\cdot|x)}\left[\left(\ln\frac{\pi(y|x)}{\pis(y|x)}\right)\left(\hV(x)-\Vs(x)\right)\right]\\
&\qquad\quad+ \EE_{x\sim\rho}\left[\left(\hV(x)-\Vs(x)\right)^2\right]\\
&\qquad\geq\frac{\beta^2}{2}\EE_{x\sim\rho,\pi'\sim q_t, y\sim\pi'(\cdot|x)}\left[\left(\ln\frac{\pi(y|x)}{\pis(y|x)}\right)^2\right]-\EE_{x\sim\rho}\left[\left(\hV(x)-\Vs(x)\right)^2\right].
\end{align*}

Therefore we have with probability at least $1-\delta$ that
\begin{align}
&\frac{1}{T}\sum_{t=1}^T\EE_{x\sim\rho,\pi\sim q_t,\pi'\sim q_t, y\sim\pi'(\cdot|x)}\left[\left(\beta\ln\frac{\pi(y|x)}{\pis(y|x)} \right)^2\right]\notag\\
&\qquad\lesssim \left(\beta\min\left\{\exp\left(\frac{1}{\beta}\right)-1,\frac{1}{\vref}\right\}\right)^2\frac{\ln(|\Xc|/\delta)}{N}+\frac{\Reg(T)}{T}+\frac{\Cr^2\ln(1/\delta)}{T}.\label{eq:kl-1}
\end{align}

From the definition of decoupling coefficient (\cref{def:dc}), we know that
\begin{align*}
&\frac{1}{T}\sum_{t=1}^T\EE_{\pi\sim q_t}\EE_{x\sim\rho}\left[\KL(\pi(\cdot|x),\pis(\cdot|x))\right]=\frac{1}{T}\sum_{t=1}^T\EE_{x\sim\rho,\pi\sim q_t,y\sim\pi(\cdot|x)}\left[\ln\pi(y|x)-\ln\pis(y|x)\right]\\
&\qquad\leq\frac{1}{T}\sum_{t=1}^T\sqrt{\DC(\Pi)\EE_{x\sim\rho,\pi\sim q_t,\pi'\sim q_t, y\sim\pi'(\cdot|x)}\left[\left(\ln\frac{\pi(y|x)}{\pis(y|x)} \right)^2\right]}.
\end{align*}
Apply Cauchy-Schwartz inequality and we have
\begin{align}
\label{eq:kl-2}
\frac{1}{T}\sum_{t=1}^T\EE_{\pi\sim q_t}\EE_{x\sim\rho}\left[\KL(\pi(\cdot|x),\pis(\cdot|x))\right]\leq\sqrt{\frac{\DC(\Pi)}{T}\sum_{t=1}^T\EE_{x\sim\rho,\pi\sim q_t,\pi'\sim q_t, y\sim\pi'(\cdot|x)}\left[\left(\ln\frac{\pi(y|x)}{\pis(y|x)} \right)^2\right]}.
\end{align}
Combine \cref{eq:kl-1,eq:kl-2} and we have with probability at least $1-\delta$ that
\begin{align}
&\frac{1}{T}\sum_{t=1}^T\EE_{\pi\sim q_t}\EE_{x\sim\rho}\left[\KL(\pi(\cdot|x),\pis(\cdot|x))\right]\notag\\
&\qquad\lesssim\left(\min\left\{\exp\left(\frac{1}{\beta}\right)-1,\frac{1}{\vref}\right\}\right)\sqrt{\frac{\DC(\Pi)\ln(|\Xc|/\delta)}{N}}+\sqrt{\frac{\DC(\Pi)\Reg(T)}{T\beta^2}}+\sqrt{\frac{\DC(\Pi)\Cr^2\ln(1/\delta)}{T\beta^2}}.\label{eq:kl}
\end{align}

\paragraph{Step 3.} Note that from Pinsker's inequality \citep{cover1999elements}, we have
\begin{align*}
&J(\pis)-\frac{1}{T}\sum_{t=1}^T\EE_{\pi\sim q_t}[J(\pi)]=\EE_{x\sim\rho,y\sim\pis(\cdot|x)}[r(x,y)]-\frac{1}{T}\sum_{t=1}^T\EE_{\pi\sim q_t,x\sim\rho, y\sim\pi(\cdot|x)}[r(x,y)]\\
&\qquad\leq\frac{1}{T}\sum_{t=1}^T\EE_{\pi\sim q_t}\EE_{x\sim\rho}\left[\Vert\pi(\cdot|x)-\pis(\cdot|x)\Vert_1\right]\leq\frac{1}{T}\sum_{t=1}^T\EE_{\pi\sim q_t}\EE_{x\sim\rho}\left[\sqrt{2\KL(\pi_t(x),\pis(\cdot|x))}\right].
\end{align*}
Therefore from Cauchy-Schwartz inequality, we have
\begin{align}
\label{eq:pin}
&J(\pis)-\frac{1}{T}\sum_{t=1}^T\EE_{\pi\sim q_t}[J(\pi)]\leq\sqrt{\frac{1}{T}\sum_{t=1}^T\EE_{x\sim\rho}\left[2\KL(\pi_t(x),\pis(\cdot|x))\right]}.
\end{align}
Combine \cref{eq:kl,eq:pin} and we have with probability at least $1-\delta$ that
\begin{align*}
 J(\pis)-\frac{1}{T}\sum_{t=1}^T\EE_{\pi\sim q_t}[J(\pi)]\lesssim& \left(\frac{\DC(\Pi)\Reg(T)}{T\beta^2}\right)^{\frac{1}{4}}+\left(\frac{\DC(\Pi)\Cr^2\ln(1/\delta)}{T\beta^2}\right)^{\frac{1}{4}}\notag\\
 &+\left(\left(\min\left\{\exp\left(\frac{1}{\beta}\right)-1,\frac{1}{\vref}\right\}\right)^2\frac{\DC(\Pi)\ln(|\mathcal{X}|/\delta)}{N}\right)^{\frac{1}{4}}.
\end{align*}
This concludes our proof.

\subsection[Proof of lem v-est]{Proof of \cref{lem:v-est}}
\label{sec:proof-lem}
Let $\hp(x)$ denote the empirical average reward that $\pir$ attains for prompt $x$ in the offline estimation stage. Then we can observe that
\begin{align*}
\hV(x)=\beta\ln\left((1-\hp(x)) + \hp(x)\exp(1/\beta)\right).
\end{align*}
We also use $\ps(x)$ to denote $\EE_{y\sim\pir(x)}[r(x,y)]$. From Azuma-Hoeffding's inequality \citep{azuma1967weighted} and union bound over all $x\in\Xc$, with probability at least $1-\delta/2$, we have for all $x\in\Xc$ that
\begin{align*}
\left|\hp(x)-\ps(x)\right|\lesssim\sqrt{\frac{\ln(|\Xc|/\delta)}{N}}.
\end{align*}
On the other hand, we know
\begin{align*}
\Vs(x)=\beta\ln\left((1-\ps(x)) + \ps(x)\exp(1/\beta)\right),\quad\forall x.
\end{align*}
From mean value theorem, we have
\begin{align*}
\left|\hV(x) - \Vs(x)\right|\leq\beta\left(\min\left\{\exp\left(\frac{1}{\beta}\right)-1,\frac{1}{\vref}\right\}\right)\left|\hp(x)-\ps(x)\right|,\quad\forall x.
\end{align*}
Therefore, we have for all $x\in\Xc$
\begin{align*}
\left(\hV(x)-\Vs(x)\right)^2\lesssim\left(\beta\min\left\{\exp\left(\frac{1}{\beta}\right)-1,\frac{1}{\vref}\right\}\right)^2\frac{\ln(|\Xc|/\delta)}{N}.
\end{align*}

\clearpage
\section{Rate of Convergence}

\begin{figure}[!hb]%
  \begin{center}
     \includegraphics[scale=0.4,trim={0 0 0 0},clip]{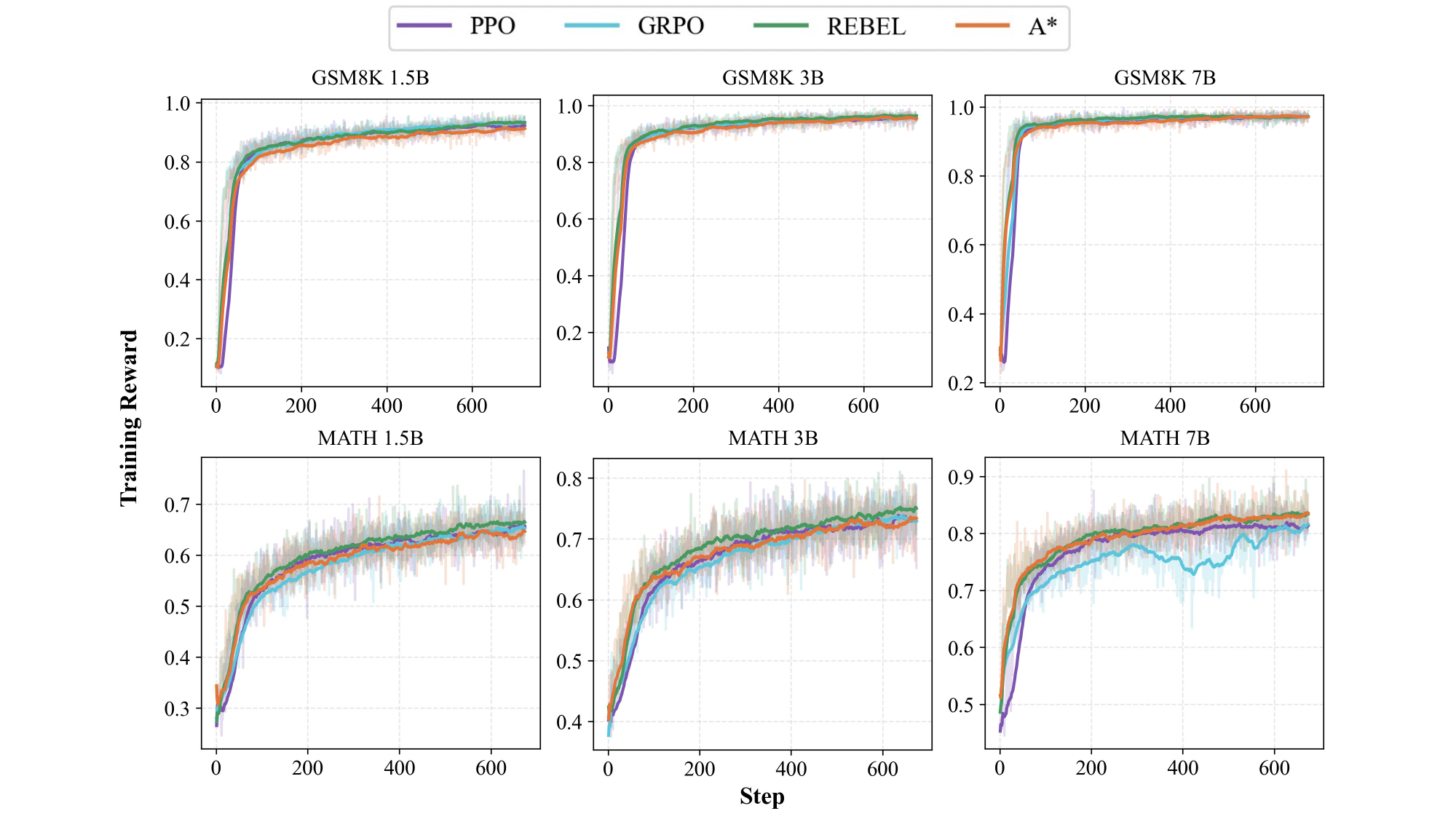}
  \end{center}
  \vskip -0.35cm
    \caption{\textbf{Training reward vs. steps over two datasets and three models.}}
    \label{fig:convergence}
\end{figure}

\cref{fig:convergence} shows the training reward of PPO, GRPO, \rebel{}, and $A^\star$-PO over MATH and GSM8K datasets at each step. We can see that all four methods have similar convergence behavior. Based on the plots, we can now safely conclude that $A^\star$-PO is the fastest method as it requires the least amount of time to complete the same number of training steps, while reaching convergence at the same step count as the other methods. To illustrate this more clearly, we also include a plot of training and generation time versus training reward below. Note that the $A^\star$-PO curves do not start at time zero, as they include the time required for data generation. Each line starts at the corresponding generation time for its dataset and model. While \rebel{} converges at a comparable rate in terms of time, it requires significantly more memory than $A^\star$-PO.

\begin{figure}[!hb]%
  \begin{center}
     \includegraphics[scale=0.4,trim={0 0 0 0},clip]{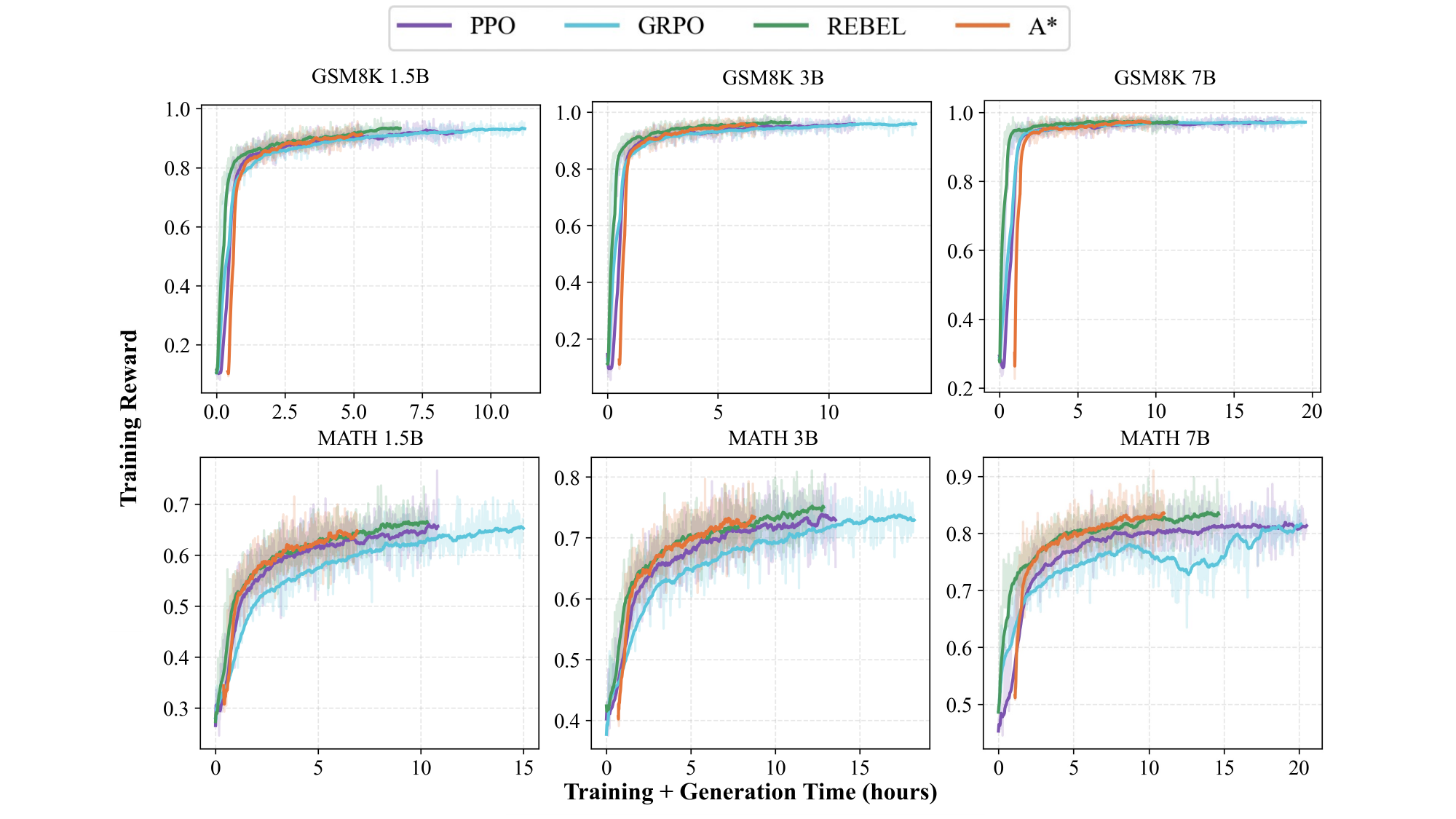}
  \end{center}
  \vskip -0.35cm
    \caption{\textbf{Training reward vs. training and generation time over two datasets and three models.}}
    \label{fig:convergence_time}
\end{figure}

\end{document}